\documentclass{article}


\usepackage[nonatbib, final]{neurips_2021}
\usepackage[sort,numbers]{natbib}




\usepackage[utf8]{inputenc} 
\usepackage[T1]{fontenc}    
\usepackage{hyperref}       
\usepackage{url}            
\usepackage{booktabs}       
\usepackage{amsfonts}       
\usepackage{nicefrac}       
\usepackage{microtype}      
\usepackage{xcolor}         

\usepackage{microtype}
\usepackage{graphicx}
\usepackage{subfig}
\usepackage{floatrow}

\usepackage{soul}
\usepackage{xcolor}
\usepackage{booktabs} 
\usepackage[multiple]{footmisc}
\usepackage{bm}
\usepackage{soul,color}
\usepackage{amsmath}
\usepackage{amssymb}
\usepackage{amsthm}
\usepackage{graphicx}
\usepackage{url}

\usepackage{placeins}
\usepackage{comment}
\newsavebox{\imagebox}
\usepackage{bbm}
\usepackage{refcount}

\usepackage{wrapfig}

\newtheorem{proposition}{Proposition}

\newtheorem{definition}{Definition}
\newtheorem{theorem}{Theorem}

\newcommand{\HV}{\textsc{HV}}
\newcommand{\HVI}{\textsc{HVI}}
\newcommand{\EHVI}{\textsc{EHVI}}
\newcommand{\EHVIPM}{\textsc{EHVI-PM}}
\newcommand{\NEHVI}{\textsc{NEHVI}}
\newcommand{\HVIc}{\textsc{HVI}_\textsc{c}}

\newcommand{\qEHVI}{$q$\textsc{EHVI}}
\newcommand{\qNEHVI}{$q$\textsc{NEHVI}}
\newcommand{\TSHVI}{$q\textsc{NEHVI-1}$}

\newcommand{\xcand}{\mathcal{X}_\text{cand}}
\newcommand{\xcandvec}{\bm{x}_\text{cand}}
\newcommand{\ycandvec}{\bm{y}_\text{cand}}
\newcommand{\xprev}{\bm{x}_\text{prev}}
\newcommand{\aEHVI}{\alpha_{\textsc{EHVI}}}
\newcommand{\aNEHVI}{\alpha_{\textsc{NEHVI}}}
\newcommand{\aqNEHVI}{\alpha_{q\textsc{NEHVI}}}
\newcommand{\aqEHVI}{\alpha_{q\textsc{EHVI}}}

\newcommand{\hataqEHVI}{\hat{\alpha}_{q\textsc{EHVI}}}
\newcommand{\hataNEHVI}{\hat{\alpha}_{\textsc{NEHVI}}}
\newcommand{\hataqNEHVI}{\hat{\alpha}_{q\textsc{NEHVI}}}
\newcommand{\aqEHVIc}{\alpha_{q\textsc{EHVI}_\textsc{c}}}
\newcommand{\aEHVIc}{\alpha_{\textsc{EHVI}_\textsc{c}}}
\newcommand{\aNEHVIc}{\alpha_{\textsc{NEHVI}_\textsc{c}}}
\newcommand{\qParego}{$q$\textsc{ParEGO}}
\newcommand{\cbd}{\textsc{CBD}}
\newcommand{\iep}{\textsc{IEP}}

\DeclareMathOperator*{\argmax}{arg\,max}

\newcommand{\papertitle}{Parallel Bayesian Optimization of Multiple Noisy Objectives with Expected Hypervolume Improvement}
\title{\papertitle}

%

\author{%
  Samuel Daulton\\
  Facebook, University of Oxford\\
  \texttt{sdaulton@fb.com} \\
  \And
  Maximilian Balandat\\
  Facebook\\
  \texttt{balandat@fb.com} \\
  \And
  Eytan Bakshy\\
  Facebook\\
  \texttt{ebakshy@fb.com} \\
}

\begin{document}

\maketitle

\begin{abstract}
Optimizing multiple competing black-box objectives is a challenging problem in many fields, including science, engineering, and machine learning. Multi-objective Bayesian optimization (MOBO) is a sample-efficient approach for identifying the optimal trade-offs between the objectives. However, many existing methods perform poorly when the observations are corrupted by noise. 
We propose a novel acquisition function, \NEHVI{}, that overcomes this important practical limitation by applying a Bayesian treatment to the popular expected hypervolume improvement (\EHVI{}) criterion and integrating over this uncertainty in the Pareto frontier. 
We argue that, even in the noiseless setting, generating multiple candidates in parallel is an incarnation of \EHVI{} with uncertainty in the Pareto frontier and therefore can be addressed using the same underlying technique. Through this lens, we derive a natural parallel variant, \qNEHVI{}, that reduces computational complexity of parallel \EHVI{} from \emph{exponential} to \emph{polynomial} with respect to the batch size. 
\qNEHVI{} is one-step Bayes-optimal for hypervolume maximization in both noisy and noiseless environments, and we show that it can be optimized effectively with gradient-based methods via sample average approximation. 
Empirically, we demonstrate not only that \qNEHVI{} is substantially more robust to observation noise than existing MOBO approaches, but also that it achieves state-of-the-art optimization performance and competitive wall-times in large-batch environments.
\end{abstract}

\vspace{-1ex}
\section{Introduction}
\vspace{-1ex}
\label{sec:introduction}
Black-box optimization problems that involve multiple competing noisy objectives are ubiquitous in science and engineering. For example, a real-time communications service may be interested in tuning the parameters of a control policy to adapt video quality in real time in order to maximize video quality and minimize latency \citep{daulton2019thompson, feng2020}.
In robotics, scientists may seek to design hardware components that maximize locomotive speed and minimize energy expended \citep{ calandra2016gait,liao2019data}.  In agriculture, development agencies may seek to balance crop yield and environmental impact \citep{jiang2020multi}. For such multi-objective optimization (MOO) problems, there typically is no single solution that is best with respect to all objectives. Rather, the goal is to identify the \emph{Pareto frontier}: a set of optimal trade-offs such that improving one objective means deteriorating another. In many cases, the objectives are expensive to evaluate. For instance, randomized trials used in agriculture and the internet industry may take weeks or months to conduct and incur opportunity costs, and manufacturing and testing hardware is both costly and time-consuming. Therefore, it is imperative to be able to identify good trade-offs with as few objective evaluations as possible.

Bayesian optimization (BO), a method for efficient global black-box optimization, is often used to tackle such problems. BO employs a probabilistic surrogate model in conjunction with an acquisition function to navigate the trade-off between exploration (evaluating designs with high uncertainty) and exploitation (evaluating designs that are believed to be optimal). 
Although a significant number of works have explored multi-objective Bayesian optimization (MOBO), most available methods \citep{yang_palar19, belakaria2019, pfes, konakovic2020diversity} do not take into account the fact that, in practice, observations are often subject to noise. For example, results of an A/B test are highly variable due to heterogeneity in the underlying user population and other factors. Agricultural trials are affected by the stochastic nature of plant growth and environmental factors such as soil composition or wind currents. In robotics, devices are subject to manufacturing tolerances, and observations of quantities such as locomotive speed and efficiency may be corrupted by measurement error from noisy sensors and environmental factors such as temperature or surface friction. While previous work has shown that a principled treatment of noisy observations can significantly improve optimization performance in the single-objective case \citep{hernandezlobato2014pes,letham2019noisyei}, this issue is understudied in the multi-objective setting.
Furthermore, many applications in which evaluations take a long time require evaluating large \emph{batches} of candidates in parallel in order to achieve reasonable throughput. For example, when firms optimize systems via A/B tests, it may take several weeks to test any particular configuration. Because of this, large batches of candidate policies are tested simultaneously  \citep{letham2019mtbo}. In biochemistry and materials design, dozens of tests can be conducted parallel on a single microplate~\citep{zhang2009using}. Even in sophisticated high throughput chemistry settings, these batches may take several hours or days to set up and evaluate~\citep{mennen2019pharma}. 
Most existing MOBO methods, however, are either designed for purely sequential optimization \citep{belakaria2019, pfes} or do not scale well to large batch sizes \citep{daulton2020ehvi}.

\textbf{Contributions:} In this work, we propose a novel MOBO algorithm, based on expected hypervolume improvement (EHVI), that scales to highly parallel evaluations of noisy objectives.  Our approach is made possible by a general-purpose, differentiable, cached box decomposition (CBD) implementation that dramatically speeds up critical computations needed to account for uncertainty introduced by noisy observations and generate new candidate points for highly parallel batch or asynchronous evaluation. 
In particular, our CBD-based approach solves the fundamental problem of scaling parallel EHVI-based methods to large batch sizes,  \emph{reducing time and space complexity from exponential to polynomial}.
Our proposed algorithm, \emph{noisy} expected hypervolume improvement (\NEHVI{}), is the  one-step Bayes-optimal policy for hypervolume improvement and provides state-of-the-art performance across a variety of benchmarks.
To our knowledge, our work provides the most extensive evaluation of noisy parallel MOBO to date.
A high-quality implementation of \qNEHVI{}, as well as many of the baselines considered here, will be made available as open-source software upon publication.

\vspace{-2ex}
\section{Preliminaries}
\vspace{-1ex}
Our goal 
is to find the set of optimal designs $\bm x$ over a bounded set $\mathcal X \subset \mathbb R^d$ that maximize one or more objectives $\bm f(\bm x) \in \mathbb R^M$, with no known analytical expression nor gradient information of $\bm f$.

\textbf{Multi-Objective Optimization (MOO)} aims to identify the set of \emph{Pareto optimal} objective trade-offs. We say a solution $\bm f(\bm x) = \begin{bmatrix}
f^{(1)}(\bm x), ..., f^{(M)}(\bm x)
\end{bmatrix}$ \emph{dominates} another solution $\bm f(\bm x) \succ \bm f(\bm x')$ if $f^{(m)}(\bm x) \geq f^{(m)}(\bm x')$ for $m=1, ..., M$ and $\exists \,m \in \{1, ..., M\}$ s.t. $f^{(m)}(\bm x) > f^{(m)}(\bm x')$. We define the \emph{Pareto frontier} as $\mathcal P^* = \{\bm f(\bm x) : \bm x \in \mathcal X, ~\nexists ~\bm x'\in \mathcal X ~s.t.~ \bm f(\bm x') \succ \bm f(\bm x)\}$, and denote the set of Pareto optimal designs as $\mathcal X^* =\{\bm x : \bm f(\bm x) \in \mathcal P^*\}$. Since the Pareto frontier (PF) is often an infinite set of points, MOO algorithms usually aim to identify a finite approximate PF ~$\mathcal P$. A natural measure of the quality of a PF is the hypervolume of the region of objective space that is dominated by the PF and bounded from below by a reference point. Provided with the approximate PF, the decision-maker can select a particular Pareto optimal trade-off according to their preferences.

\textbf{Bayesian Optimization (BO)} is a sample-efficient optimization method that leverages a probabilistic surrogate model to make principled decisions to balance exploration and exploitation~\citep{shahriari16, frazier2018tutorial}.
Typically, the surrogate is a Gaussian Process (GP), a flexible, non-parametric model known for its well-calibrated predictive uncertainty \citep{Rasmussen2004}. To decide which points to evaluate next, BO employs an acquisition function $\alpha(\cdot)$ that specifies the value of evaluating a set of new points $\bm x$ based on the surrogate's predictive distribution at $\bm$. 
While evaluating the true black-box function $\bm f$ is time-consuming or costly, evaluating the surrogate is cheap and relatively fast; therefore, numerical optimization can be used to find the maximizer of the acquisition function $\bm x^* = \argmax_{\bm x \in \mathcal X} \alpha(\bm x)$ to evaluate next on the black-box function. BO sequentially selects new points to evaluate and updates the model to incorporate the new observations. 

Evolutionary algorithms (EAs) such as NSGA-II \citep{deb02nsgaii} are a popular choice for solving MOO problems (see \citet{zitzler2000comparison} for a review of various other approaches). However, EAs generally suffer from high sample complexity, rendering them infeasible for optimizing expensive-to-evaluate black-box functions. Multi-objective Bayesian optimization (MOBO), which combines a Bayesian surrogate with an acquisition function designed for MOO, provides a much more sample-efficient alternative. 

\vspace{-1ex}
\section{Related Work}
\vspace{-1ex}
\label{sec:related_work}
Methods based on hypervolume improvement (HVI) seek to expand the volume of the objective space dominated by the Pareto frontier. Expected hypervolume improvement (\EHVI{}) \citep{emmerich2006} is a natural extension of the popular expected improvement (EI) \citep{jones98} acquisition function to the MOO setting. Recent work has led to efficient computational paradigms using box decomposition algorithms \citep{yang_emmerich2019} and practical enhancements such as support for parallel candidate generation and gradient-based acquisition optimization \citep{yang2019,daulton2020ehvi}. However, \EHVI{} still suffers from some limitations, including (i) the assumption that observations are noise-free, and (ii) the exponential scaling of its batch variant, \qEHVI{}, in the batch size $q$, which precludes large-batch optimization.  DGEMO \citep{konakovic2020diversity} is a recent method for parallel MOBO that greedily maximizes HVI while balancing the diversity of the design points being sampled.  Although DGEMO scales well to large batch sizes, it does not account for noisy observations. TSEMO \citep{bradford2018tesmo} is a Thompson sampling (TS) heuristic that can acquire batches of points by optimizing a random fourier feature (RFF) \citep{rahimi_rff}  approximation of a GP surrogate using NSGA-II and selecting a subset of points from the EA's population to sequentially greedily maximize HVI. This heuristic approach for maximizing HVI currently has no theoretical guarantees and relies on zeroth-order optimization methods, which tend to be slower and exhibit worse optimization performance than gradient-based approaches.

Entropy-based methods such as PESMO \citep{pesmo}, MESMO \citep{belakaria2019}, and PFES \citep{pfes} are an alternative to \EHVI{}.
Of these three methods, PESMO is the only one that accounts for observation noise. However, PESMO involves intractable entropy computations and therefore relies on complex approximations, as well as challenging and time-consuming numerical optimization procedures \citep{pesmo}.
\citet{garridomerchn2020parallel} recently proposed an extension to PESMO that supports parallel candidate generation. However, the authors of this work provide limited evaluation and have not provided code to reproduce their results.\footnote{We contacted the authors twice asking for code to reproduce their results, but they graciously declined.}

MOO can also be cast into a single-objective problem by applying a random scalarization of the objectives. ParEGO maximizes the expected improvement using random augmented Chebyshev scalarizations \citep{parego}. MOEA/D-EGO \citep{zhang2010moad} extends ParEGO to the batch setting using multiple random scalarizations and the genetic algorithm MOEA/D \citep{moead} to optimize these scalarizations in parallel. Recently, $q$ParEGO, another batch variant of ParEGO was proposed that uses compositional Monte Carlo objectives and sequential greedy candidate selection \citep{daulton2020ehvi}. Additionally, the authors proposed a noisy variant, $q$NParEGO, but the empirical evaluation of that variant was limited. TS-TCH~\citep{paria2018randscalar} combines random Chebyshev scalarizations with Thompson sampling \citep{thompson}, which is naturally robust to noise when the objective is scalarized. \citet{golovin2020random} propose to use a hypervolume scalarization with the property that the expected value of the scalarization over a specific distribution of weights is equivalent to the hypervolume indicator. The authors propose a upper confidence bound algorithm using randomly sampled weights, but provide a very limited empirical evaluation.

Many prior attempts by the simulation community to handle MOO with noisy observations found that accounting for the noise did not improve optimization performance: \citet{horn2017} suggest that the best approach is to ignore noise, and \citet{koch2015} concluded that further research was needed to determine if modeling techniques such as re-interpolation could improve BO performance with noisy observations. In contrast, we find that accounting for noise \emph{does substantially} improve performance in noisy settings.

Lastly, previous works have considered methods for quantifying and monitoring uncertainty in the Pareto frontiers during the optimization \citep{Calandra2014ParetoFM, Binois2015QuantifyingUO}. In contrast, we provide a solution to performing MOBO in noisy settings, rather than purely reasoning about the uncertainty in the Pareto frontier.

\section{Background on Expected Hypervolume Improvement}
In this section, we review hypervolume, hypervolume improvement, and expected hypervolume improvement as well as efficient methods for computing these metrics using box decompositions.

\begin{definition}
The hypervolume indicator (\HV{}) of a finite approximate Pareto frontier $\mathcal P$ is the $M$-dimensional Lebesgue measure $\lambda_M$ of the space dominated by $\mathcal P$ and bounded from below by a reference point. 
$\bm r \in \mathbb R^M$: $\HV(\mathcal P | \bm r) = \lambda_M \big(\bigcup_{\bm v \in \mathcal P} [\bm r, \bm v]\big)$, where $[\bm r, \bm v]$ denotes the hyper-rectangle bounded by vertices $\bm r$ and $\bm v$.
\end{definition}
As in previous work, we assume that the reference point $\bm r$ is known and specified by the decision maker \citep{yang2019}.
\begin{definition}
\label{def:hvi}
The hypervolume improvement (\HVI{}) of a set of points $\mathcal P'$ w.r.t. 
an existing approximate Pareto frontier $\mathcal P$ and reference point $\bm r$ is defined as\footnote{For brevity we omit the reference point $\bm r$ when referring to \HVI{}.} $\HVI{}(\mathcal P' | \mathcal P, \bm r) = \HV{}(\mathcal P \cup \mathcal P'| \bm r) -   \HV{}(\mathcal P | \bm r)$.
\end{definition}
\vspace{-1ex}
Computing HV requires calculating the volume of a typically non-rectangular polytope and is known to have time complexity that is super-polynomial in the number of objectives \citep{yang_emmerich2019}. An efficient approach for computing HV is to (i) decompose the region that is dominated by the Pareto frontier $\mathcal P$ and bounded from below by the reference point $\bm r$ into disjoint axis-aligned hyperrectangles \citep{LACOUR2017347}, (ii) compute the volume of each hyperrectangle in the decomposition, and (iii) sum over all hyperrectangles. So-called box decomposition algorithms have also been applied to partition the region that is \emph{not dominated} by the Pareto frontier $\mathcal P$, which can be used to compute the HVI from a set of new points \citep{DACHERT2017, yang_emmerich2019}. See Appendix \ref{appdx:sec:box_decompositions} for further details.

\textbf{Expected Hypervolume Improvement:}
Since function values at unobserved points are unknown in black-box optimization, so is the HVI of an out-of-sample point. However, in  BO the probabilistic surrogate model provides a posterior distribution $p(\bm f(\bm x)| \mathcal D)$ over the function values for each $\bm x$, which can be used to compute the expected hypervolume improvement (\EHVI) acquisition function: $\aEHVI{}(\bm x | \mathcal P) = \mathbb E\big[\HVI{}(\bm f(\bm x)| \mathcal P)\big].$
Although $\alpha_{\EHVI{}}$ can be expressed analytically when (i) the objectives are assumed to be conditionally independent given $\bm x$  and (ii) the candidates are generated and evaluated sequentially \citep{yang2019}, Monte Carlo (MC) integration is commonly used since it does not require either assumption \citep{emmerich2006}. The more general parallel variant using MC integration is given by\\[-3ex]
\begin{align}
    \aqEHVI{}(\xcand | \mathcal P) \approx  \hataqEHVI{}(\xcand | \mathcal P) = \frac{1}{N}\sum_{t=1}^N \HVI(\tilde{\bm{f}}_t(\xcand)| \mathcal P),
\end{align}
where $\tilde{\bm f}_t \sim p(\bm f|\mathcal D)$ for $t=1,..., N$ and $\xcand = \{x_i\}_{i=1}^q$ \citep{daulton2020ehvi}. The same box decomposition algorithms used to compute \HVI{} can be used to compute \EHVI{} (either analytic or via MC) using piece-wise integration. \EHVI{} computation is agnostic to the choice of box decomposition algorithm (and can also use approximate methods \citep{couckuyt12}).
Similar to EI in the single-objective case, \EHVI{} is a one-step  Bayes-optimal algorithm for maximizing hypervolume in the MOO setting
under the following assumptions: (i) only a single design will be generated and evaluated, (ii) the observations are noise-free, (iii) the final approximate Pareto frontier (and final design that will be deployed) will be drawn from the set of observed points \citep{frazier2018tutorial}.
\vspace{-1ex}
\section{Expected Hypervolume Improvement with Noisy Observations}
\vspace{-1ex}
\label{sec:nehvi}
\FloatBarrier
We consider the case that frequently arises in practice where we only receive noisy observations $\bm y_i = \bm f(\bm x_i) + \bm \epsilon_i$, $\bm \epsilon_i \sim \mathcal N(0, \Sigma_i)$,
where $\Sigma_i$ is the noise covariance.
%
In this setting, \EHVI{} is no longer (one-step) Bayes-optimal. This is because we can no longer compute the true Pareto frontier $\mathcal P_n = \{\bm f(\bm x) ~|~ \bm x \in X_n, ~\nexists~ \bm x' \in X_n  ~s.t.~ \bm f(\bm x') \succ \bm f(\bm x)\}$ over the previously evaluated points $X_n = \{\bm x_i\}_{i=1}^n$. Simply using the observed Pareto frontier, $\mathcal Y_n = \{\bm y~|~ \bm y \in Y_n, ~\nexists~ \bm y' \in Y_n  ~s.t.~ \bm y' \succ \bm y, y\}$ where $Y_n = \{\bm y_i\}_{i=1}^n$, can have strong detrimental effects on optimization performance. This is illustrated in Figure~\ref{fig:noise_study}, which shows how \EHVI{} is misled by noisy observations that appear to be Pareto optimal. \EHVI{} proceeds to spend its evaluation budget trying to optimize noise,  resulting in a clumped Pareto frontier that lacks diversity. 
Although the posterior mean could serve as a "plug-in" estimate of the true function values at the observed points and provide some regularization \citep{yang_multipoints2019}, we find that this heuristic also leads to clustered Pareto frontiers (\EHVIPM{} in Fig.~\ref{fig:noise_study}). 
Similar patterns emerge with DGEMO (which does not account for noise), and other baselines that utilize the posterior mean rather than the observed values when computing hypervolume improvement (see Appendix~\ref{appdx:sec:extra_experiments}). To our knowledge, all previous work on \EHVI{} assumes that observations are noiseless \citep{emmerich2006, yang2019} or imputes the unknown true function values with the posterior mean. 

\begin{figure}[t]
\centering
\floatbox[{\capbeside\thisfloatsetup{capbesideposition={right,center},capbesidewidth=0.65\textwidth}}]{figure}[\FBwidth]
{\caption{\label{fig:noise_study} An illustration of the effect of noisy observations on the true noiseless Pareto frontiers identified by \NEHVI{} (our proposed algorithm), \EHVI{}, and \EHVI{}-PM, which uses the modeled posterior mean as point estimate of the true in-sample function values. All algorithms are tested on a BraninCurrin synthetic problem, where observations are corrupted with zero-mean, additive Gaussian noise with a standard deviation of 5\% of the range of respective objective. All methods use sequential ($q=1$) optimization. See Appendix \ref{appdx:sec:experiment_details} for details.}}
{\includegraphics[width=0.3\textwidth]{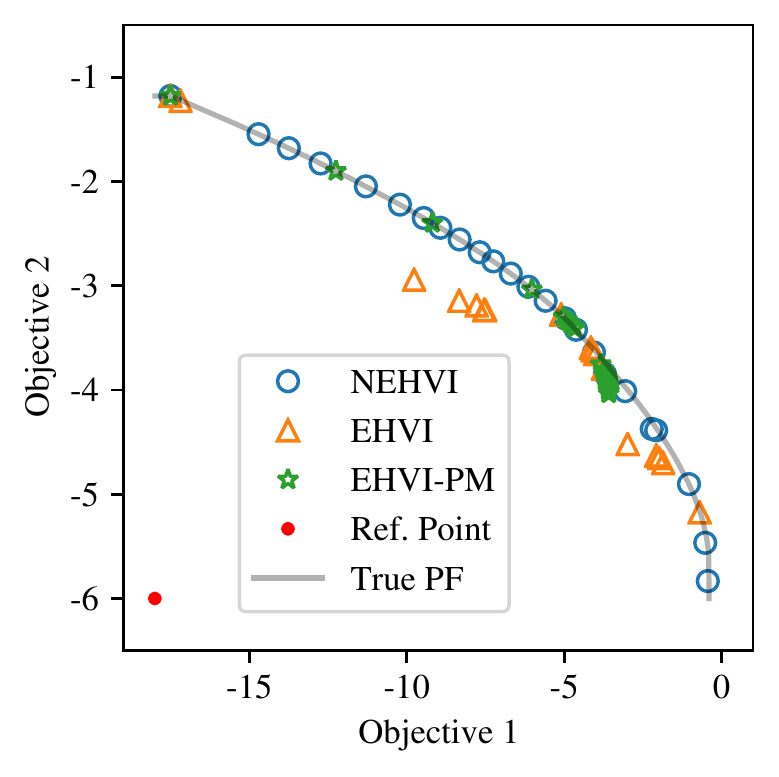}}
\vspace{-4ex}
\end{figure}


\subsection{A Bayes-optimal algorithm for hypervolume maximization in noisy environments}
\label{subsec:ideal_nehvi}
In contrast with EHVI(-PM), we instead approach the problem of hypervolume maximization under noisy observations from a Bayesian perspective and derive a novel one-step Bayes-optimal expected hypervolume improvement criterion that iterates the expectation over the posterior distribution $p(\bm f(X_n) |\mathcal D_n)$ of the function values at the previously evaluated points $X_n$ given noisy observations $\mathcal D_n = \{\bm x_i, \bm y_i, (\Sigma_i)\}_{i=1}^n$. Our acquisition function, \emph{noisy expected hypervolume improvement} (\NEHVI{}), is defined as
\begin{equation}
    \label{eqn:ideal_nehvi}
    \aNEHVI{}(\bm x) = \int \aEHVI{}(\bm x| \mathcal P_n)p(\bm f|\mathcal D_n) d\bm{f}
\end{equation}
where $P_n$ denotes the Pareto frontier over $\bm f(X_n)$. 

By integrating over the uncertainty in the function values at the observed points, \NEHVI{} retains one-step Bayes-optimality in noisy environments (in noiseless environments, \NEHVI{} is equivalent to \EHVI{}). Empirically, Figure~\ref{fig:noise_study} shows that \NEHVI{} is robust to noise and identifies a well-distributed Pareto frontier with no signs of clumping, even under very noisy observations.\footnote{This noise level is 5x greater than the ones considered by previous works that evaluate noisy MOBO \citep{pesmo}.} 

The integral in \eqref{eqn:ideal_nehvi} is analytically intractable, but can easily be approximated using MC integration. Let $\tilde{\bm f}_t \sim p(\bm f | D_n)$ for $t=1, ... N$ be samples from the posterior, and let $\mathcal P_t = \{\tilde{\bm f}_t(\bm x) ~|~ \bm x \in X_n, \tilde{\bm f}_t(\bm x) \succ \tilde{\bm f}_t(\bm x') ~\forall~ \bm x' \in X_n\}$ be the Pareto frontier over the previously evaluated points under the sampled function $\tilde{\bm f}_t$. Then, $\aNEHVI{}(\bm x) \approx \frac{1}{N}\sum_{t=1}^{N} \aEHVI(\bm x | \mathcal P_t)$.
Using MC integration, we can compute the inner expectation in $\aEHVI{}$ simultaneously using samples from the joint posterior $\tilde{\bm f}_t(X_n, \bm x) \sim p(\bm f(X_n, \bm x) | \mathcal D_n)$ over $\bm x$ and $X_n$:
\begin{equation}
\begin{aligned}
\label{eqn:mc_nehvi}
\hataNEHVI{}(\bm x) &= \frac{1}{N}\sum_{t=1}^{N} \HVI(\tilde{\bm f}_t(\bm x) | \mathcal P_t).
\end{aligned}
\end{equation}
See Appendix~\ref{appdx:sec:box_decompositions} for details on computing \eqref{eqn:mc_nehvi} using box decompositions. Note that this ``full-MC'' variant of \NEHVI{} does not require objectives to be modeled independently, and supports multi-task covariance functions across correlated objectives.
\vspace{-2ex}
\subsection{Parallel Noisy Expected Hypervolume Improvement}
\vspace{-1ex}
Generating and evaluating batches of  candidates is imperative to achieving adequate throughput in many real-world scenarios. $\qNEHVI{}$ can naturally be extended to the parallel (asynchronous or batch) setting by evaluating \HVI{} with respect to a batch of $q$ points $\xcand = \{\bm x_i\}_{i=1}^q$
\begingroup
\setlength{\thickmuskip}{3mu}
\begin{equation}
\label{eqn:mc_qnehvi}
\aqNEHVI{}(\xcand)= \int \aqEHVI{}(\xcand| \mathcal {P}_n) p(\bm f | \mathcal D_n) d\bm{f} \approx \hataqNEHVI{}(\xcand) = \frac{1}{N}\sum_{t=1}^{N} \HVI(\tilde{\bm f}_t( \xcand) | \mathcal P_t)
\end{equation}
\endgroup

Since optimizing $q$ candidates jointly is a difficult numerical optimization problem over a $qd$-dimensional domain, we use a sequential greedy approximation in the parallel setting and solve a sequence of $q$ simpler optimization problems with $d$ dimensions, which been shown empirically to improve optimization performance \citep{wilson2018maxbo}.
While selecting candidates according to a ``sequential greedy'' policy does not guarantee that the selected batch of candidates is a maximizer of the $\aqNEHVI{}$, the submodularity of $\aqNEHVI{}$ allows us to \emph{bound the regret} of this approximation to be no more than $\frac{1}{e}\alpha_\text{\qNEHVI}^*$, where $\alpha_\text{\qNEHVI}^* =  \max_{\xcand \in \mathcal X}\alpha_\text{\qNEHVI}(\xcand)$
(see Appendix \ref{sec:seq_greedy_details}).

\vspace{-1ex}
\section{Efficient Evaluation with Cached Box Decompositions}
\vspace{-1ex}

Although $\hataNEHVI{}(\bm x)$ in \eqref{eqn:mc_nehvi} has a concise mathematical form, computing it requires determining the Pareto frontier $\mathcal P_t$ under each sample $\tilde{\bm f}_t$ for $t=1, ..., N$ and then partitioning the region that is not dominated by $\mathcal P_t$ into disjoint hyperrectangles $\{S_{k_t}\}_{k_t=1}^{K_t}$. Optimizing the unbiased MC estimator of $\aNEHVI$ would require re-sampling $\{\tilde{\bm f}_t\}_{t=1}^N$ at each evaluation of $\aNEHVI$. However, computing the Pareto frontier and performing a box decomposition under each of the $N$ samples  during every evaluation of $\aNEHVI$ in the inner optimization loop ($\bm x^* = \argmax_{\bm x} \aNEHVI(\bm x| \mathcal D_n)$) would be prohibitively expensive. This is because box decomposition algorithms have super-polynomial time complexity in the number of objectives \citep{yang_emmerich2019}. We instead propose an efficient alternative computational technique for repeated evaluations of EHVI with uncertain Pareto frontiers.

\textbf{Cached Box Decompositions:}\label{subsec:pareto_caching} For repeated evaluations of the integral in \eqref{eqn:ideal_nehvi}, 
we use a set of fixed samples $\{\tilde{\bm f}_t(X_n)\}_{t=1}^N$, which allows us to compute the Pareto frontiers and box decompositions once, and cache them for the entirety of the acquisition function optimization, thereby making those two computationally intensive operations a one-time cost per BO iteration.\footnote{For greater efficiency, we may also prune $X_n$ to remove points that are dominated with high probability, which we estimate via MC.} We refer to this approach as using \emph{cached box decompositions} (\cbd{}). The method of optimizing over fixed random samples is known as sample average approximation (SAA) \citep{balandat2020botorch}.

\textbf{Conditional Posterior Sampling:}\label{subseq:condpostsampling}
Under the \cbd{} formulation, computing $\hataNEHVI{}(\bm x)$  with joint samples from $\tilde{\bm f}_t(X_n, \bm x) \sim p(\bm f(X_n, \bm x) | \mathcal D_n)$ requires sampling from the conditional distributions
\begin{equation}
\label{eqn:conditional_distribution}
\tilde{\bm f}_t(\bm x) \sim p\big(\bm f(\bm x) | \bm f(X_n) = \tilde{\bm f}_t(X_n), \mathcal D_n\big),
\end{equation}
where $t=1,...,N$ and $\{\tilde{\bm f}_t(X_n)\}_{t=1}^N$ are the realized samples at the previously evaluated points. For multivariate Gaussian posteriors (as is the case with GP surrogates), we can sample from $p(\bm f(X_n) | \mathcal D_n)$ via the reparameterization trick \citep{kingma2013reparam} by evaluating
$\tilde{\bm f}_t(\bm x) = \bm\mu_n + L_n^T\bm\zeta_{n,t},$
where $\bm\zeta_{n,t} \sim \mathcal N(\bm 0, I_{nM})$,  $\bm\mu_n\in\mathbb R^{nM}$ is the posterior mean, and $L_n \in \mathbb R^{nM \times nM}$ is a lower triangular
root decomposition of the posterior covariance matrix, typically a Cholesky decomposition. 
Given $L_n$, we can obtain a root decomposition $L_n'$ of the covariance matrix of the joint posterior $p(\bm f(X_n, \bm x) | \mathcal D_n)$ by performing efficient low-rank updates \citep{Osborne2010BayesianGP}. 
Given $L_n'$ and the posterior mean of $p(\bm f(X_n, \bm x) | \mathcal D_n)$, we can sample from \eqref{eqn:conditional_distribution} via the reparameterization trick by augmenting the existing base samples $\bm\zeta_{n,t}$ with $M$ new base samples for the new point.

\vspace{-2ex}
\subsection{Efficient Sequential Greedy Batch Selection using CBD}
\vspace{-1ex}
The \cbd{} technique addresses the general problem of inefficient repeated evaluations of \EHVI{} with uncertain Pareto frontiers. In this section, we show that sequential greedy batch selection (with both \qEHVI{} and \qNEHVI{}) is an incarnation of \EHVI{} with uncertain Pareto frontiers.

The original formulation of parallel EHVI in \citet{daulton2020ehvi}  uses the inclusion-exclusion principle (\iep{}), which involves computing the volume jointly dominated by each of the $2^q - 1$ nonempty subsets of points in $\xcand$. However, using large batch sizes is \emph{not computationally feasible} under this formulation because time and space complexity are exponential in $q$ and multiplicative in the number of hyperrectangles in the box decomposition \citep{daulton2020ehvi} (see Appendix~\ref{appdx:sec:complexity} for a complexity analysis).
Although \qEHVI{} is optimized using sequential greedy batch selection, the \iep{} is used over all candidates $\bm x_1, ..., \bm x_i$ when selecting candidate $i$. 
Although the \iep{} could similarly be used to compute \qNEHVI{}, we instead leverage \cbd{}, which yields a sequential greedy approximation of the joint (noisy) \EHVI{} that is \emph{mathematically equivalent} to the \iep{} formulation, but \emph{significantly reduces computational overhead}. That is, the \iep{} and \cbd{} approaches produce exactly the same acquisition value for a given set of points $\xcand$, but the \iep{} and the \cbd{} approaches have \emph{exponential} and \emph{polynomial} time complexities in $q$, respectively.

When selecting $\bm x_i$ for $i \in \{2,\dotsc, q\}$, all  $\bm x_j$ for which $j < i$ have already been selected and are therefore held constant. Thus, we can decompose \qNEHVI{} into the \qNEHVI{} from the previously selected candidates $\bm x_1, \dotsc, \bm x_{i-1}$ and \NEHVI{} from $\bm x_i$ given the previously selected candidates

\vspace{-1ex}
\begingroup
\setlength{\thickmuskip}{3mu}
\begin{equation}
    \label{eqn:cbd_qnehvi}
    \hat{\alpha}_{q\textsc{NEHVI}}(\{\bm x_j\}_{j=1}^i) =\frac{1}{N}\sum_{t=1}^N \HVI\big(\{\tilde{\bm f}_t(\bm x_j)\}_{j=1}^{i-1}\}\mid \mathcal P_t\big)
    +\frac{1}{N}\sum_{t=1}^N\HVI\bigl(\tilde{\bm f}_t(\bm x_i) \mid \mathcal P_t \cup \{\tilde{\bm f}_t(\bm x_j)\}_{j=1}^{i-1}\}\bigr)
\end{equation}
\endgroup
Note that the first term on the right hand side is constant, since $\{\bm x_j\}_{j=1}^{i-1}$ and $\{\tilde{\bm f}_t(\bm x_j)\}_{j=1}^{i-1}$ are fixed for all $t=1, ..., N$. The second term is $\hataNEHVI (\bm x_i)$, where the \NEHVI{} is taken with respect to the Pareto frontier across $\bm f(X_n, \bm x_1, ..., \bm x_{i-1})$ and computed using the fixed samples $\{\tilde{\bm f}_t(X_n, \bm x_1, ... \bm x_{i-1})\}_{t=1}^N$. To compute the second term when selecting candidate $\bm x_i$, the $N$ Pareto frontiers and \cbd{}s are updated to include $\{\tilde{\bm f}_t(X_n, \bm x_1, ... \bm x_{i-1})\}_{t=1}^N$. As in the sequential $q=1$ setting, the box decompositions are only computed and cached while selecting each candidate point. See Appendix \ref{appdx:sec:qnehvi_cbd} for a derivation of \eqref{eqn:cbd_qnehvi}. Although we have focused on \qNEHVI{} in the above, the \cbd{} formulation for \qEHVI{} is obtained by simply replacing $\mathcal P_t$ with the Pareto frontier over the observed values $\mathcal Y_n$.

Despite computing $N$ box decompositions when selecting each candidate $\bm x_i$ for $i=2, ..., q$, the \cbd{} approach reduces the time and space complexity from exponential (under the \iep{}) to polynomial in~$q$ (see Appendix~\ref{appdx:sec:complexity} for details on time and space complexity). Figure~\ref{fig:q_cache_iep} shows the total acquisition optimization time (including box decompositions) for various batch sizes and demonstrates that using \cbd{} allows to scale to batch sizes that are \emph{completely infeasible} when using \iep{}.

\begin{figure}
\floatbox[{\capbeside\thisfloatsetup{capbesideposition={right,center},capbesidewidth=0.65\textwidth}}]{figure}[\FBwidth]
{\caption{\label{fig:q_cache_iep} Acquisition optimization wall time under a sequential greedy approximation using L-BFGS-B. \cbd{} enables scaling to much larger batch sizes $q$ than using the  \iep{} and avoids running out-of-memory (OOM) on a GPU. Independent GPs are used for each outcome. The Pareto frontier of of the 2-objective, 6-dimensional DTLZ2 problem \citep{dtlz} is initialized with 20 points.  Wall times were measured on a Tesla V100 SXM2 GPU (16GB RAM) and a 2x Intel Xeon 6138 CPU @ 2GHz (251GB RAM). See Appendix \ref{appdx:sec:cbd_vs_iep} for results with more objectives.}}
{\includegraphics[width=0.35\textwidth]{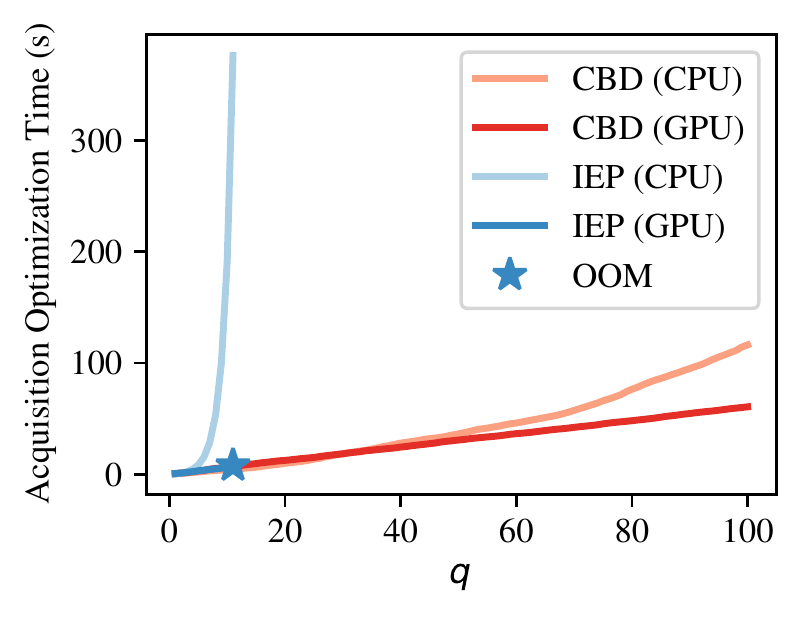}}
\vspace{-5ex}
\end{figure}

\vspace{-1ex}
\section{Optimizing \NEHVI{}}
\vspace{-1ex}
\label{sec:optimizing_nehvi}
\textbf{Differentiability}: Importantly, $\hataNEHVI{}(\bm x)$ is differentiable w.r.t. $\bm x$. Although determining the Pareto frontier and computing the box decompositions are non-differentiable operations, 
these operations do not involve $\bm x$, even when re-sampling from the joint posterior $p(\bm f(X_n, \bm x) | \mathcal D_n)$. Exact sample-path gradients of $\nabla_{\bm x} \hataNEHVI{}(\bm x)$ can easily be computed using auto-differentiation in modern computational frameworks. 
This enables efficient gradient-based optimization of \qNEHVI{}.\footnote{One can also show that the gradient of the full MC estimator $\hataqNEHVI{}$ is an unbiased estimator of the gradient of the true joint noisy expected hypervolume improvement $\aqNEHVI{}$. However, this result is not necessary for our SAA approach.}

\textbf{SAA Convergence Results:} In addition to approximating the outer expectation over $\bm f(X_n)$ with fixed posterior samples, 
we can similarly  
fix the base samples used for the new candidate point $\bm x$. This approach yields a deterministic acquisition function, which enables using (quasi-) higher-order optimization methods to obtain fast convergence rates for acquisition optimization \citep{balandat2020botorch}. Importantly, we prove that the theoretical convergence guarantees on acquisition optimization under the 
SAA approach proposed by \citet{balandat2020botorch} also hold for \NEHVI{}.

\begin{theorem}
\label{thm:saa}
Suppose $\mathcal X$ is compact and $\bm f$ has a multi-output GP prior with continuously differentiable mean and covariance functions. Let $X_n= \{\bm x_i\}_{i=1}^n$ denote the previously evaluated points and $\{\bm \zeta\}_{t=1}^N$ be base samples $\bm \zeta \sim \mathcal N(\bm 0, I_{(n+1)M})$. Let $\hat{\alpha}_{\textsc{NEHVI}}$ denote the deterministic acquisition function  computed using $\{\bm \zeta\}_{t=1}^N$ as $\hat{\alpha}_{\textsc{NEHVI}}^N$ and define $S^* := \argmax_{\bm x \in \mathcal X} \aNEHVI{}(\bm x)$ to be the set of maximizers of $\aNEHVI{}(\bm x)$ over $\mathcal X$. Suppose $\hat{\bm x}_N^* \in \argmax_{\bm x \in \mathcal X} \hat{\alpha}_{\textsc{NEHVI}}^N(\bm x)$. Then (1) $\hat{\alpha}_{\textsc{NEHVI}}^N(\hat{\bm x}_N^*) \rightarrow \aNEHVI{}(\bm x_N^*)$ almost surely, and (2) $\emph{dist}(\hat{\bm x}_N^*, S^*) \rightarrow 0$, 
where $\text{dist}(\hat{\bm x}_N^*, \mathcal S^*) := \inf_{\bm x \in S^*} ||\hat{\bm x}_N^* - \bm x||$ is the Euclidean distance between $\hat{\bm x}_N^*$ and the set $S^*$.
\end{theorem}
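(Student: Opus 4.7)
The plan is to verify that $\hataNEHVI{}$ fits into the general sample average approximation convergence framework of \citet{balandat2020botorch}, which establishes almost sure convergence of both the optimal value and the argmax set under three hypotheses: (i) the sample-level integrand is continuous in $\bm{x}$ almost surely in the base samples, (ii) it admits an integrable dominating function on $\mathcal{X}$, and (iii) the acquisition function equals the expectation of the integrand. Step (iii) is immediate from the definition of $\aNEHVI{}$ and the reparameterization trick, so the substance of the proof lies in verifying (i) and (ii) for the specific form of the NEHVI integrand, and then invoking their result.

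First I would fix the base samples $\{\bm\zeta_t\}_{t=1}^N$ and write the integrand at sample $t$ as $g_t(\bm x) = \HVI\bigl(\tilde{\bm f}_t(\bm x)\,\big|\,\mathcal P_t(\bm\zeta_t)\bigr)$, where $\tilde{\bm f}_t(\bm x) = \bm\mu(\bm x) + L(\bm x)^{\!\top}\bm\zeta_t$ uses the posterior mean and a root decomposition of the joint posterior covariance of $\bm f(X_n,\bm x)$ conditional on $\mathcal D_n$. Note $\mathcal P_t(\bm\zeta_t)$ depends only on $\bm\zeta_t$ and $X_n$ (not on $\bm x$), so it is constant during acquisition optimization. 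Under the assumption that the GP mean and covariance are continuously differentiable, $\bm\mu(\bm x)$ is continuous and $L(\bm x)$ can be obtained from a continuous low-rank update of the cached Cholesky factor $L_n$; I would cite \citet{Osborne2010BayesianGP} and a standard fact that roots of strictly positive-definite matrices vary continuously. Hence $\tilde{\bm f}_t(\bm x)$ is a continuous function of $\bm x$.

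The main obstacle will be continuity of the hypervolume improvement $\bm v \mapsto \HVI(\bm v\mid\mathcal P_t)$, since the box decomposition of the non-dominated region is piecewise constant and the identity of the boxes changes discontinuously as $\bm v$ crosses a face. The key observation is that $\HVI(\bm v\mid\mathcal P_t)$ equals $\lambda_M$ of an intersection of an axis-aligned orthant at $\bm v$ with the (fixed) non-dominated region above $\mathcal P_t$; this is the Lebesgue measure of a set whose indicator depends continuously on $\bm v$ except on a null set, and a standard dominated-convergence argument shows that $\HVI$ is in fact Lipschitz on any bounded set (one can verify this directly via the coordinate-wise box decomposition: each box contributes a product of one-sided clipped differences of the form $\max(0,\min(u_k,v^{(m)}) - l_k)$, which is continuous in $\bm v$). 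Composing with $\tilde{\bm f}_t(\bm x)$ yields continuity of $g_t$ in $\bm x$, giving (i).

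For (ii), compactness of $\mathcal X$ together with continuity of $\bm\mu(\cdot)$ and $L(\cdot)$ implies $\|\tilde{\bm f}_t(\bm x)\| \le C_1 + C_2\|\bm\zeta_t\|$ uniformly in $\bm x\in\mathcal X$, for finite constants. The HVI of any single point is bounded by the hypervolume of a box with side lengths at most $\|\tilde{\bm f}_t(\bm x) - \bm r\|_\infty$, which is polynomially bounded in $\|\bm\zeta_t\|$; since all moments of a Gaussian are finite, this yields the required $L^1$-dominating function. I would then invoke the SAA convergence theorem of \citet{balandat2020botorch} (itself a consequence of standard epigraphical / uniform law of large numbers results, e.g.\ Shapiro), which under (i)--(iii) gives uniform a.s.\ convergence $\sup_{\bm x\in\mathcal X}|\hat\alpha^N_{\textsc{NEHVI}}(\bm x) - \aNEHVI{}(\bm x)| \to 0$, from which both conclusions (1) and (2) follow by a standard argmax-continuity argument using compactness of $\mathcal X$ and continuity of $\aNEHVI{}$.
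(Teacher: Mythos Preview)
Your high-level strategy matches the paper's: both reduce the claim to a general SAA convergence theorem from \citet{balandat2020botorch} by verifying regularity of the sample-level integrand $A(\bm x,\bm\zeta_t)=\HVI(\tilde{\bm f}_t(\bm x)\mid\mathcal P_t)$. The difference is in \emph{which} regularity condition is checked. The paper works through the IEP expansion of HVI and establishes an explicit Lipschitz-in-$\bm x$ bound with integrable Lipschitz constant,
\[
|A(\bm x,\bm\zeta_t)-A(\bm y,\bm\zeta_t)|\le \ell(\bm\zeta_t)\,\|\bm x-\bm y\|,\qquad \ell(\bm\zeta_t)=C\sum_{m=1}^M\|\bm\zeta_t\|^m,
\]
obtained by bounding each factor $[\min(u_{k,t}^{(m)},\tilde f_t^{(m)}(\bm x_{i_1}),\dots)-l_{k,t}^{(m)}]_+$ and telescoping the product. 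This is precisely the hypothesis of their Theorem~3, and the Lipschitz bound also underlies the unbiased-gradient Proposition that follows. You instead verify the weaker pair ``continuity a.s.\ in $\bm\zeta$'' $+$ ``$L^1$ envelope polynomial in $\|\bm\zeta\|$''; this is sufficient for uniform SAA convergence via the standard Shapiro/epigraphical route you allude to, and is arguably more direct.

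One caveat: if the specific theorem in \citet{balandat2020botorch} you cite is stated with the Lipschitz hypothesis (as the paper's proof indicates), your conditions (i)--(ii) do not literally match it; you should either cite a Shapiro-type SAA result directly, or upgrade your continuity observation to Lipschitz---which is immediate from the same box-decomposition formula you use, since each term $\prod_m[\min(u_k^{(m)},v^{(m)})-l_k^{(m)}]_+$ is a product of $1$-Lipschitz coordinate functions bounded by $|l_k^{(m)}|+|u_k^{(m)}|+|v^{(m)}|$. Everything else in your argument (that $\mathcal P_t$ is fixed during optimization, the affine-in-$\bm\zeta$ bound on $\tilde{\bm f}_t$, finiteness of Gaussian moments) is exactly what the paper uses.
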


Theorem \ref{thm:saa} also holds in the parallel setting, so \qNEHVI{} enjoys the same convergence guarantees as \NEHVI{} on acquisition optimization under the SAA. See Appendix \ref{appdx:sec:theoretical_results} for further details and proof.

\vspace{-1ex}
\section{Approximation of \qNEHVI{} using Approximate GP Sample Paths}
\vspace{-1ex}
Although \cbd{} yields polynomial complexity of \qNEHVI{} with respect to $q$ (rather than exponential complexity with the \iep{}), it still requires computing $N$ box decompositions and repeatedly evaluating the joint posterior over $\bm f(X_n, \{\bm x_j\}_{j=1}^{i-1})$ for selecting each candidate $\bm x_i$ for $i=1, ..., q$. A cheaper alternative is to approximate the integral in \eqref{eqn:mc_qnehvi} using a single approximate GP sample path $\tilde{\bm f_i}$ using RFFs when optimizing candidate $\bm x_i$. A single-sample approximation of $q$NEHVI, which we refer to as \TSHVI{}, can be computed by using $\tilde{\bm f_i}$ as the sampled GP in \eqref{eqn:cbd_qnehvi}. Since the RFF is a deterministic model, it is much less computationally expensive to evaluate than the GP posterior on out-of-sample points, and exact gradients of \TSHVI{} with respect to current candidate $\bm x_i$ can be computed and used for efficient multi-start optimization of \TSHVI{} using second-order gradient methods. \TSHVI{} requires CBD for efficient sequential greedy batch selection and gradient-based optimization, but does not use a sample average approximation for optimizing a new candidate $\bm x_i$; instead, it uses an approximate \emph{sample path}. See \citet{rahimi_rff} for details on RFFs.

\TSHVI{} is related to TSEMO in that both use sequential greedy batch selection using \HVI{} based on RFF samples. However, TSEMO does not directly maximize \HVI{} when selecting candidate $\bm x_i$, where $i=1, ..., q$; rather, it relies on a heuristic approach of running NSGA-II on an RFF sample of each objective to create a discrete population of candidates and then selecting the point from the discrete population that maximizes \HVI{} under the RFF sample. In contrast, \TSHVI{} \emph{directly optimizes} \HVI{} under the RFF using exact sample-path gradients, which leads to improved optimization performance (see Appendix \ref{appdx:sec:extra_experiments}). Furthermore, we find that \TSHVI{} is significantly faster than TSEMO, because rather than using NSGA-II it uses second order gradient methods to optimize \HVI{} (see Appendix~\ref{appdx:sec:extra_experiments}). Gradient-based optimization is only possible because \cbd{} enables scalable, differentiable \HVI{} computation.
While the primary goal of this work is to develop a principled, scalable method for parallel EHVI in noisy environments, we include empirical comparisons with \TSHVI{} throughout the appendix to demonstrate the generalizablility of the \cbd{} approach and practical performance of the \TSHVI{} approximation. \TSHVI{} achieves the fastest batch selection timesof any method tested on a GPU on every problem; in many cases, this is an order of magnitude speed-up over \qNEHVI{}. Moreover, \TSHVI{} has a remarkable ability to scale to large batch sizes when the dimensionality of optimization problem is modest. Further investigation of \TSHVI{} is needed, but we hope that the readers can recognize the ways in which \qNEHVI{} can create broader opportunities for research into hypervolume improvement based acquisition functions.

\vspace{-1ex}
\section{Experiments}
\vspace{-1ex}
\label{sec:Experiments}
We empirically evaluate \qNEHVI{} on a set of synthetic and real-world benchmark problems. We compare it against the following recently proposed methods from the literature: PESMO, MESMO (which we extend to the handle noisy observations using the noisy information gain from \citet{takeno20}), PFES, DGEMO, MOEA/D-EGO, TSEMO, TS-TCH, \qEHVI{} (and $q$\textsc{EHVI-PM-CBD}, which uses the posterior mean as a plug-in estimate for the function values at the in-sample points, along with CBD to scale to large batch sizes), and qNParEGO. We optimize all methods using multi-start L-BFGS-B with exact gradients (except for PFES, which uses gradients approximated via finite differences), including TS-TCH where we optimize approximate function samples using RFFs with 500 basis functions. We model each outcome with an independent GP with a Mat\'{e}rn 5/2 ARD kernel and  infer the GP hyperparameters via maximum a posteriori (MAP) estimation. For all problems, we assume that the noise variances are observed (except ABR, where we infer the noise level). See Appendix \ref{appdx:sec:experiment_details} for more details on the experiments and acquisition function implementations.

\begin{figure*}[ht!]
    \centering
    \includegraphics[width=\textwidth]{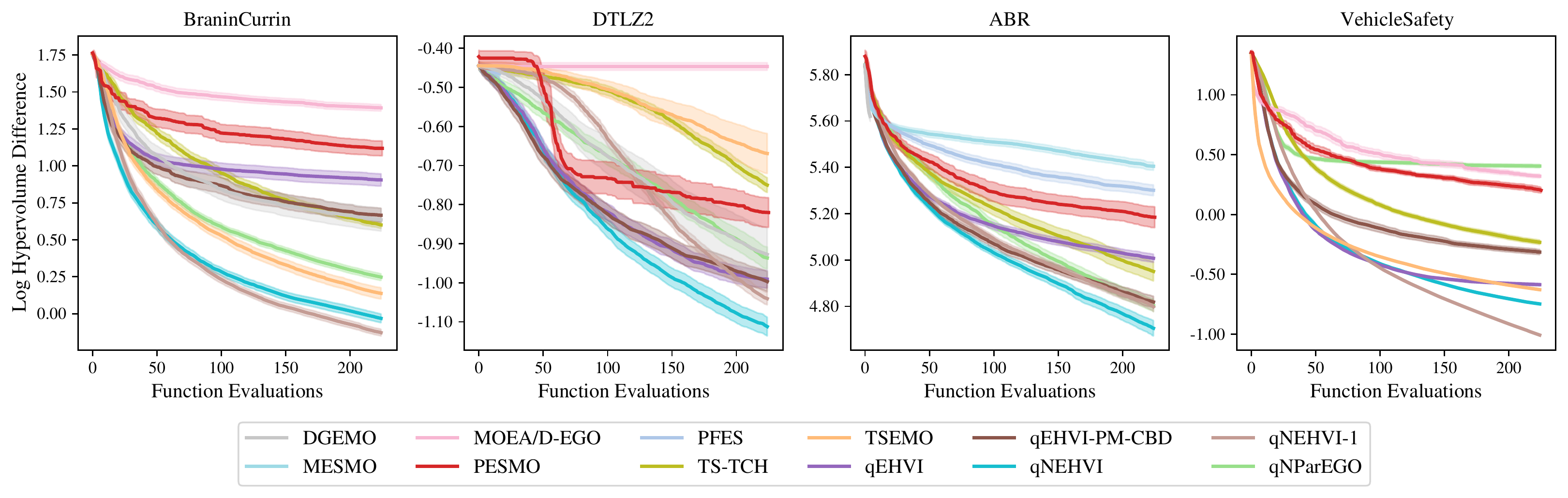}
    \vspace{-5pt}
    \caption{\label{fig:hv_sequential} Sequential optimization performance. The shaded region indicates two standard errors of the mean over 100 replications (only 20 replications were feasible for PESMO due to large runtimes).}
    \vspace{-2ex}
\end{figure*}

We evaluate all methods using the logarithm of the difference in hypervolume between the true Pareto frontier and the approximate Pareto frontier recovered by the algorithm. Since evaluations are noisy, we compute the hypervolume dominated by the noiseless Pareto frontier across the observed points for each method.

\textbf{Synthetic Problems:} We consider a noisy variants of the \emph{BraninCurrin} problem ($M=2, d=2$) and the \emph{DTLZ2} problem ($M=2, d=6$) \citep{dtlz}, in which  observations are corrupted with zero-mean additive Gaussian noise with standard deviation of 5\% of the range of each objective for \emph{BraninCurrin} and 10\% for \emph{DTLZ2}.

\textbf{Adaptive Bitrate (ABR) Control Policy Optimization:}  
ABR controllers are used for real-time communication and media streaming applications. Policies for these controllers must be tuned to deliver a high quality of experience with respect  to multiple objectives \citep{Mao2019RealworldVA}. In industry settings, A/B tests with dozens of policies are tested simultaneously since each policy may take days or weeks to evaluate, producing noisy measurements across multiple objectives. In this experiment, we tune policies to maximize video quality (bitrate) and minimize stall time. The policy has $d=4$ parameters, which are detailed in Appendix \ref{appdx:sec:experiment_details}. We use the Park simulator
\citep{Mao2019ParkAO}
and sample a random set of 100 traces to obtain noisy measurements of the objectives under a given policy. For comparing the performance of different methods, we estimate the true noiseless objective using mean objectives across 300 traces. We infer a homoskedastic noise level jointly with the GP hyperparameters via MAP estimation.

\textbf{Vehicle Design Optimization:} Optimizing the design of the frame an automobile is important to maximizing passenger safety, vehicle durability and fuel efficiency. Evaluating a vehicle design is time-consuming, since either a vehicle must manufactured and crashed, or a nonlinear finite element-based crash analysis must be run to simulate a collision (which can take over 20 hours per run) \citep{youn2004}.  Hence, evaluating many designs in parallel is critical for reducing end-to-end optimization time. Observations are often noisy due to manufacturing imperfections, measurement error, 
or non-deterministic simulations. In this experiment, we tune the $d=5$ widths of various components of a vehicle's frame to minimize proxy metrics for (1) fuel consumption, (2) passenger trauma in a full frontal collison, and (3) vehicle fragility \citep{tanabe2020}. See Appendix~\ref{appdx:sec:experiment_details} for details. For this demonstration, we add zero-mean Gaussian noise with a standard deviation of 1\% of the objective range, which roughly corresponds to the manufacturing noise level used in previous work \citep{youn2004}.

\vspace{-2ex}
\subsection{Summary of Results:} 
\vspace{-1ex}
We find that \qNEHVI{} and \TSHVI{}
outperform all other methods on the noisy benchmarks, both in the sequential and parallel setting.  In the sequential setting (Fig \ref{fig:hv_sequential}), \qNEHVI{} and \TSHVI{} are followed closely by \qEHVI{}-PM, and in some cases, even \qEHVI{}. TS-TCH is firmly in the middle of the pack, while information-theoretic acquisition functions appear to perform the worst. This is consistent across noise levels; for experiments where we add noise to the objectives, we consider noise levels ranging from 1\% to 10\% of the range of each objective (these are magnitudes of the noise often seen in practice). Previous works have only evaluated MOBO algorithms with noise levels of 1\% \citep{pesmo}. In Appendix~\ref{appdx:sec:extra_experiments}, we perform a study showing that \qNEHVI{} consistently performs best with increasing noise levels up to 30\% of the range of each objective.

While parallel evaluation can provide optimization speedups on order of the batch size $q$, these evaluations do affect the overall sample complexity of the algorithm, since less information is available within the synchronous batch setting compared with fully sequential optimization. We find that, by and large, \qNEHVI{} achieves the greatest hyper-volume for increasingly large batch sizes, and scales more elegantly relative to TS-TCH and the ParEGO variants (Fig \ref{fig:q_hv}). \qNEHVI{} also consistently outperforms \qEHVI{}-PM-CBD. In Appendix~\ref{appdx:sec:extra_experiments}, we observe that \qNEHVI{} and \TSHVI{} provides excellent anytime performance all values of $q$ that we tested. We provide results on 4 additional test problems in Appendix~\ref{appdx:sec:additional_results}, and in Appendix~\ref{appdx:5obj}, we demonstrate that leveraging \cbd{} and a single sample path approximation, \TSHVI{} enables scaling to 5-objective problems, which is a first for an \HVI-based method, to our knowledge.

In our experiments, we find that \TSHVI{} is among the top performers on relatively low-dimensional problems. Given the strong performance of \TSHVI{}, we examine its performance as the dimensionality of the search space increases in Appendix~\ref{sec:high_d_tshvi}. We find that \qNEHVI{} is more robust than \TSHVI{} in higher-dimensional search spaces, but further investigation is needed into how the number of the Fourier basis functions affects the performance of \TSHVI{} in high-dimensional search spaces. 

\begin{figure*}[t]
\centering
\includegraphics[width=\textwidth]{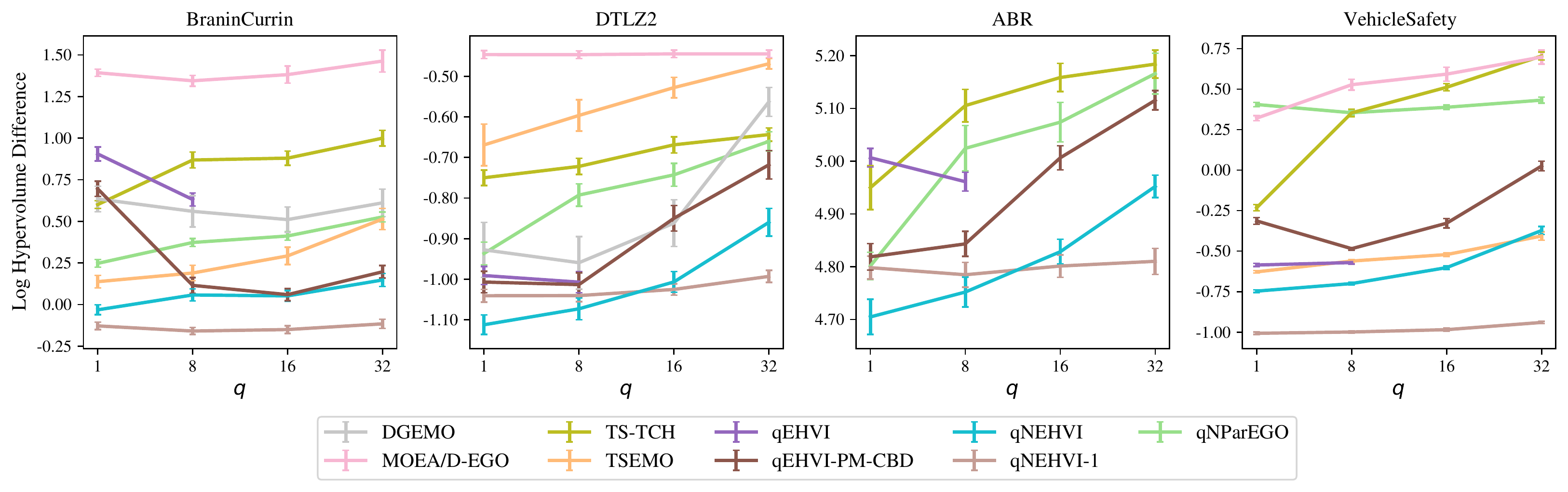}
\caption{\label{fig:q_hv} The quality of the final Pareto frontier identified by each method with increasing batch sizes $q$ given a budget of 224 function evaluations. \qEHVI{} is only included for $q=1$ and $q=8$ because the \iep{} scales exponential with $q$. DGEMO is omitted on the ABR problem because it was prohibitively slow with time-consuming ABR simulations and on the VehicleSafety problem because DGEMO consistently crashed in the graph cutting algorithm.}
\end{figure*}

\textbf{Optimization wall time:} 
Across all experiments, we observe competitive wall times for optimizing \qNEHVI{} and \TSHVI{} (all wall time comparisons are provided in Appendix \ref{appdx:sec:extra_experiments}). On a GPU, optimizing \TSHVI{} \emph{incurs the lowest wall time of any method that we tested on every single problem} and optimizing \qNEHVI{} is faster than optimizing information-theoretic methods on all problems. Using efficient low-rank Cholesky updates, \qNEHVI{} is often faster than the $q$NParEGO implementation in BoTorch on a GPU.
\vspace{-1ex}
\section{Discussion}
\vspace{-1ex}
\label{sec:Discussion}

We proposed \NEHVI{}, a novel acquisition function that provides a principled approach to parallel and noisy multi-objective Bayesian optimization. \NEHVI{} is a one-step Bayes-optimal policy for maximizing the hypervolume dominated by the Pareto frontier in noisy and noise-free settings. \NEHVI{} is made feasible by a new approach to computing joint hypervolumes (\cbd{}), and we demonstrated that \cbd{} enables scalable, parallel candidate generation with both noiseless \qEHVI{} and \qNEHVI{}. We provide theoretical results on optimizing a MC estimator of \qNEHVI{} using sample average approximation and demonstrate significant improvements in optimization performance over state-of-the-art MOBO algorithms.

Yet, our work has some limitations.  While the information-theoretic acquisition functions tested here perform poorly on our benchmarks, they do allow for decoupled evaluations of different objectives in cases where querying one objective may be more resource-intensive than querying other objectives. Optimizing such acquisition functions is a non-trivial task, and it is possible that with improved procedures, such acquisition functions could yield improved performance and provide a principled approach to selecting evaluation sources on a budget. 
Although practically fast enough for most Bayesian optimization tasks, exact hypervolume computation has super-polynomial complexity in the number of objectives. Combining \qNEHVI{} with differentiable approximate methods for computing hypervolume (e.g. \citet{couckuyt12, golovin2020random}) could lead to further speed-ups. 

We hope that the core ideas presented in this work, including the \cbd{} approach, can provide a framework to support the development of new computationally efficient MOBO methods.

\FloatBarrier





\FloatBarrier
\clearpage
\typeout{}
\bibliography{neurips_2021}
\bibliographystyle{icml2021}

\clearpage
\newpage
\FloatBarrier
\onecolumn
\appendix
\begin{center}
\hrule height 4pt
\vskip 0.25in
\vskip -\parskip
    {\LARGE\bf  Appendix to:\\[2ex] \papertitle}
\vskip 0.29in
\vskip -\parskip
\hrule height 1pt
\vskip 0.2in%
\end{center}

\section{Potential Societal Impact}
\label{app:societal_impact}

Bayesian Optimization specifically aims to increase sample efficiency for hard optimization algorithms, and consequently can help achieve better solutions without incurring large societal costs. For instance, as demonstrated in this work, automotive design problems may be solved much faster, reducing the amount of computationally costly simulations and thus the energy footprint during development. At the same time, improved solutions mean that high crash safety can be achieved with lighter cars, resulting in fewer resources required for their production and, importantly, improving fuel economy of the whole vehicle fleet. Increased robustness to noisy observations further helps reduce the resources spent on evaluating regions of the search space that are not promising.
Improvements to the optimization performance and practicality of multi-objective Bayesian optimization have the potential to allow decision makers to better understand and make more informed decisions across multiple trade-offs. We expect these directions to be particularly important as Bayesian optimization is increasingly used for applications such as recommender systems~\cite{letham2019mtbo}, where auxiliary goals such as fairness must be accounted for. Of course, at the end of the day, exactly what objectives decision makers choose to optimize, and how they balance those trade-offs (and whether that is done in equitable fashion) is up to the individuals themselves.

\section{Computing Hypervolume Improvement with Box Decompositions}
\label{appdx:sec:box_decompositions}
\begin{definition}\label{def:delta} For a set of objective vectors $\{\bm f(\bm x_i)\}_{i=1}^q$, a reference point $\bm r \in \mathbb R^M$, and a Pareto frontier $\mathcal P$, let $\Delta(\{\bm f(\bm x_i)\}_{i=1}^q, \mathcal P, \bm r) \subset \mathbb{R}^M$ denote the set of points (1) that are dominated by $\{\bm f(\bm x_i)\}_{i=1}^q$, (2) that dominate $\bm r$, and (3) that are not dominated by $\mathcal P$.
\end{definition}
Let $\{S_1, ..., S_K\}$ be a set of$K$ disjoint axis-aligned rectangles where each $S_k$ is defined by a pair of lower and upper vertices $\bm l_k \in \mathbb{R}^M$ and $\bm u_k \in \mathbb{R}^M \cup \{\bm\infty\}$.  Figure \ref{fig:hvi_box} shows an example decomposition. Such a partitioning allows for efficient piece-wise computation of the hypervolume improvement from a new point $\bm f(\bm x_i)$ by computing the volume of the intersection of the region dominated exclusively by the new point with $\Delta(\{f(\bm x_i), \mathcal P, \bm r)$ (and not dominated by the $P$) with each hyperrectangle $S_k$. Although $\Delta(\bm f(\bm x_i), \mathcal P, \bm r)$ is a non-rectangular polytope, the intersection of $\Delta(\bm f(\bm x_i), \mathcal P, \bm r)$ with each rectangle $S_k$ is a rectangular polytope and the vertices bounding the hyperrectangle corresponding to $\Delta(\bm f(\bm x_i), \mathcal P, \bm r) \cap S_k$ can be easily computed: the lower bound vertex is $\bm l_k$ and the upper bound vertex is the component-wise minimum of $\bm u_k$ and the new point $\bm f(\bm x)$: $\bm z_k := \min \big[\bm u_k,\bm f(\bm x)\big]$. The hypervolume improvement can be computed by summing over the volume of $\Delta(\bm f(\bm x_i), \mathcal P, \bm r) \cap S_k$ over all $S_k$
\begin{equation}
\label{eqn:HVI}
    \HVI{}\big(\bm{f}(\bm{x}), \mathcal P\big) =\sum_{k=1}^{K}\HVI{}_k\big(\bm{f}(\bm{x}), \bm l_k, \bm u_k\big)=\sum_{k=1}^{K}\prod_{m=1}^M \big[z_k^{(m)} - l_k^{(m)}\big]_{+},
\end{equation}
where $[\cdot]_{+}$ denotes the $\max(\cdot, 0)$ operation.
\begin{figure}[tb]
    \centering
    \includegraphics[width=0.3\textwidth]{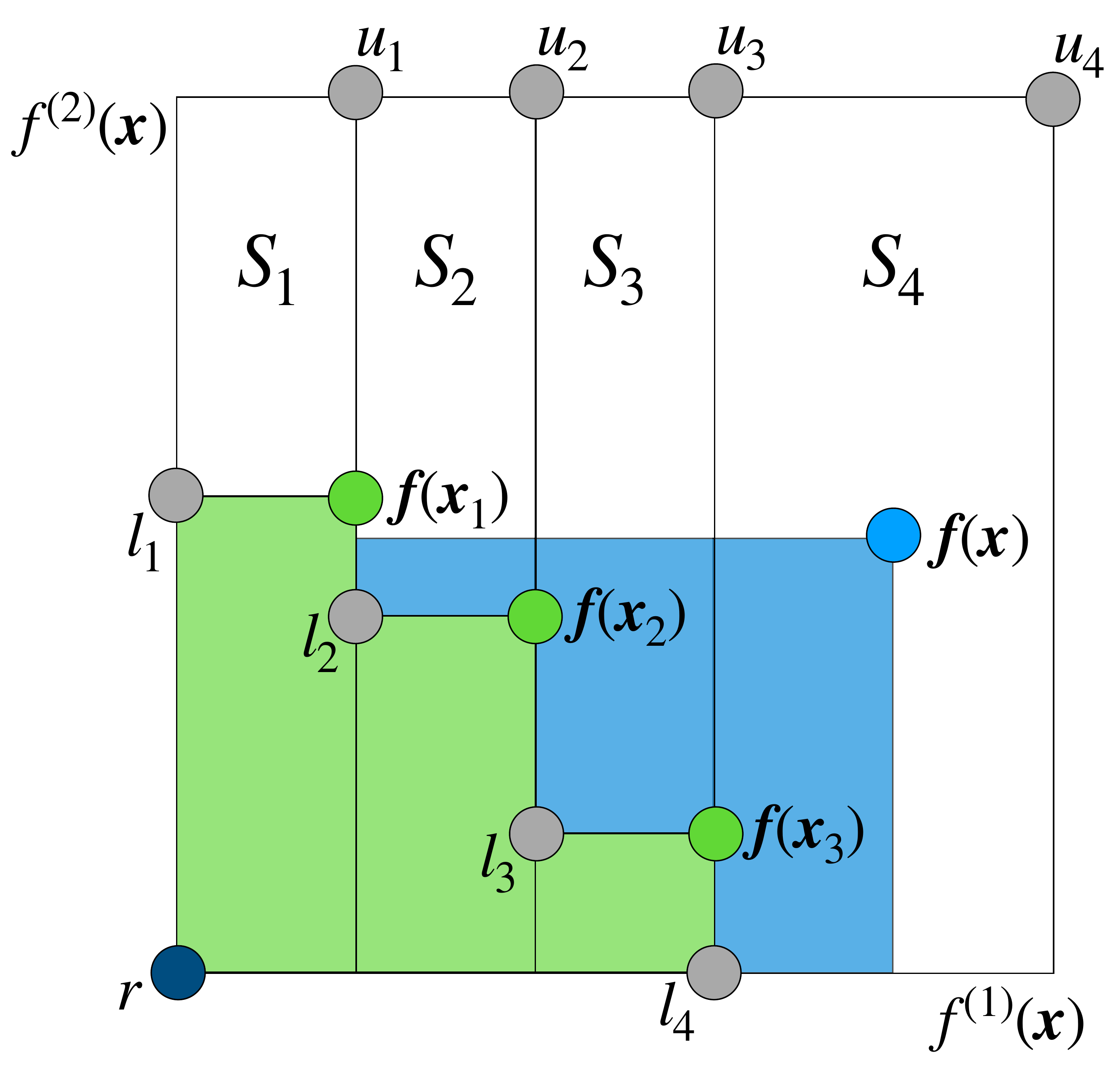}
    \caption{\label{fig:hvi_box} The hypervolume improvement from a new point $\bm f(\bm x)$ is shown in blue. The current Pareto frontier $\mathcal P$ is given by the green points, the green area is the hypervolume of the Pareto frontier $\mathcal P$ given reference point $\bm r$. The white rectangles $S_1, ..., S_k$ are a disjoint, box decomposition of the non-dominated space that can be used to efficiently compute the hypervolume improvement.}
\end{figure}

\section{\qNEHVI{} under Different Computational Approaches}
\subsection{Derivation of \iep{} formulation of \qNEHVI{}}
\label{appdx:sec:iep_formulation}
From \eqref{eqn:mc_qnehvi}, the expected noisy joint hypervolume improvement is given by
\begin{align*}
\hataqNEHVI{}(\xcand) &= \frac{1}{N}\sum_{t=1}^{N} \HVI(\tilde{\bm f}_t( \xcand) | \mathcal P_t)
\end{align*}
Recall that the joint \HVI{} formulation under the \iep{} derived by \citet{daulton2020ehvi} is given by
\begin{align}
    \HVI(\bm f( \xcand) | \mathcal P) =\sum_{k=1}^{K}\sum_{j=1}^q \sum_{X_j \in \mathcal X_j} (-1)^{j+1} \prod_{m=1}^M\big[z_{k, X_j}^{(m)} - l_k^{(m)}\big]_{+}
\end{align}
where $\mathcal X_j := \{X_j \subseteq \xcand : \vert X_j \vert = j\}$ and $z_{k,t, X_j}^{(m)} := \min[u_{k, t}^{(m)}, f^{(m)}(\bm x_{i_1}), ..., f^{(m)}(\bm x_{i_j})]$ for $X_j = \{\bm x_{i_1}, ..., \bm x_{i_j}\}$. In \qNEHVI{}, the lower and upper bounds and the number of rectangles in each box decomposition depend $\mathcal P_t$. Hence,
\begin{align*}
\hataqNEHVI{}(\xcand) &= \frac{1}{N}\sum_{t=1}^{N}\sum_{k=1}^{K_t}\sum_{j=1}^q \sum_{X_j \in \mathcal X_j} (-1)^{j+1} \prod_{m=1}^M\big[z_{k, t, X_j}^{(m)} - l_{k,t}^{(m)}\big]_{+}
\end{align*}
where $z_{k,t, X_j}^{(m)} := \min[u_{k, t}^{(m)}, \tilde{f}_t^{(m)}(\bm x_{i_1}), ..., \tilde{f}_t^{(m)}(\bm x_{i_j})]$ for $X_j = \{\bm x_{i_1}, ..., \bm x_{i_j}\}$.
\subsection{Derivation of \cbd{} formulation of \qNEHVI{}}
\label{appdx:sec:qnehvi_cbd}
Using Definition \ref{def:hvi}, we rewrite \eqref{eqn:mc_qnehvi} as 
\begin{align*}
\hataqNEHVI{}(\xcand) &=\frac{1}{N}\sum_{t=1}^{N} \HVI(\tilde{\bm f}_t( \xcand) | \mathcal P_t)\\
&= \frac{1}{N}\sum_{t=1}^{N}\bigg[ \HV{}(\tilde{\bm f}(\xcand) \cup P_t) - 
\HV{}(P_t)\bigg]
\end{align*}
Adding and subtracting $\HV{}(\tilde{\bm f}(\{\bm x_1), ..., \tilde{\bm f}(x_{q-1})\}) \cup P_t)$ yields 
\begin{align*}
\hataqNEHVI{}(\xcand) = \frac{1}{N}\sum_{t=1}^{N}\bigg[ &\HV{}(\tilde{\bm f}(\xcand) \cup P_t) - \HV{}(\tilde{\bm f}(\{\bm x_1), ..., \tilde{\bm f}(x_{q-1})\}) \cup P_t) \\&+ \HV{}(\{\tilde{\bm f}(\bm x_1), ..., \tilde{\bm f}(x_{q-1})\} \cup P_t)
-\HV{}(P_t)\bigg].
\end{align*}
Applying Definition \ref{def:hvi} again leads to \eqref{eqn:cbd_qnehvi}:
\begin{align*}
\hataqNEHVI{}(\xcand) &= \frac{1}{N}\sum_{t=1}^{N}\HVI{}(\tilde{\bm f}(\bm x_q)|\{\tilde{\bm f}(\bm x_1), ..., \tilde{\bm f}(\bm x_{q-1})\} \cup P_t ) \\ &\qquad + \frac{1}{N}\sum_{t=1}^{N}\HVI{}(\{\tilde{\bm f}(\bm x_1), ..., \tilde{\bm f}(\bm x_{q-1})\})| P_t).
\end{align*}

Note that using the method of common random numbers, the \cbd{} formulation is mathematically equivalent to \iep{} formulation, but the computing \qNEHVI{} with the \cbd{} trick is much more efficient.

\section{Complexity Analysis}
\label{appdx:sec:complexity}
\subsection{Complexity of Computing \qNEHVI{}}
In this section we study the complexity of computing the acquisition function. For brevity, we omit the cost of posterior sampling, which is the same for the \cbd{} and \iep{} approaches.\footnote{Sampling from $p(\bm f(X_n)|\mathcal D_n)$ incurs a one-time cost of $O(Mn^3)$ if each of the $M$ outcomes is modeled by an independent GP, as it involves computing a Cholesky decomposition of the $n \times n$ posterior covariance (at the $n$ observed points) for each. Using low-rank updates of the Cholesky factor to sample from $p(\bm f(X_n, \bm x_0, ..., \bm x_i)| \mathcal D_n)$ has a time complexity of $O(M(n+i-1)^2)$ for $1 \leq i \leq q$ since each triangular solve has quadratic complexity. Sampling is more costly when using a multi-task GP model, as it requires a root decomposition of the $Mn \times Mn$ posterior covariance across data points and tasks.}

The \cbd{} approach requires recomputing box decompositions when generating each new candidate. In the worst case, each new candidate is Pareto optimal under the fixed posterior samples, which leads to a time complexity of 
$O\big(N(n+i)^M\big)$
for computing the box decompositions in iteration $i$ \citep{yang_emmerich2019}. Note that there are 
$O\big((n+i)^M\big)$ 
rectangles in each box decomposition. Given box decompositions and posterior samples at the new point, the complexity of computing the acquisition function on a single-threaded machine is 
$O\big(MN(n+i)^M\big)$. 
Hence, the total time complexity for generating $q$ candidates (ignoring potentially additional time complexity for automated gradient computations) is
\begin{equation}
\label{eqn:cbd_complexity}
O\bigg(N\sum_{i=1}^q(n+i)^M\bigg) + O\bigg(N_\text{opt}MN\sum_{i=1}^q(n+i)^M\bigg) = O\left(N_\text{opt}NM (n+q)^M q\right),
\end{equation}
\begin{equation}
\label{eqn:iep_complexity}
O\big(Nn^M \big)+ O\bigg(N_\text{opt}MNn^M\sum_{i=1}^q 2^{i-1}\bigg) = O\left(N_\text{opt}NMn^M 2^q q\right).
\end{equation}
The second term on the left hand side in both \eqref{eqn:cbd_complexity} and \eqref{eqn:iep_complexity} is the acquisition optimization complexity, which boils down to $O(N_\text{opt})$ given infinite computing cores because the acquisition computation is completely parallelizable. However, as shown in Figure \ref{fig:q_cache_iep}, even for relatively small values of $q$, CPU cores become saturated and GPU memory limits are reached.

Everything else fixed, the asymptotic relative time complexity of using \cbd{} over \iep{} is therefore $q^{-M}2^q \rightarrow \infty$ as $q\rightarrow \infty$.

Similarly, the space complexity under the \cbd{} formulation,  $O\big(MN(n+q)^M\big)$, is also polynomial in $q$, whereas the space complexity is exponential in $q$  under the \iep{} formulation: $O\left(MNn^M q2^q \right)$.

Everything else fixed, the asymptotic relative complexity (both in terms of time and space) of using \cbd{} over \iep{} is therefore $q^{M}2^{-q} \rightarrow 0$ as $q\rightarrow \infty$.

\subsection{Efficient Batched Computation}
\label{appdx:sec:eff_computation}
As noted above, using either the \iep{} or \cbd{} approach, the acquisition computation given the box decompositions is highly parallelizable. However, since the number of hyperrectangles $K_t$ in the box decompoosition can be different under each posterior sample $\tilde{\bm f}_t$, stacking the box decompositions does not result in a rectangular matrix; the matrix is ragged. In order to leverage modern batched tensor computational paradigms, we pad the box decompositions with empty hyperrectangles (e.g. $\bm l = \bm 0, \bm u = \bm 0$) such that the box decomposition under every posterior sample contains exactly $K = \max_t K_t$ hyperrectangles, which allows us to define a $t \times K$ dimensional matrix of box decompositions for use in batched tensor computation.

In the 2-objective case, instead of padding the box decomposition, the Pareto frontier under each posterior sample can be padded instead by repeating a point on the Pareto Frontier such that the padded Pareto frontier under every posterior sample has exactly $\max_t |\mathcal P_t|$ points. This enables computing the box decompositions analytically for all posterior samples in parallel using efficient batched computation. The resulting box decompositions all have $K = \max_t |\mathcal P_t| + 1$ hyperrectangles (some of which may be empty).

\section{Theoretical Results}
\label{appdx:sec:theoretical_results}
Let $\xprev \in \mathbb R^{nd}$ denote the stacked set of previously evaluated points in $X_n$: $\xprev := [\bm x_1^T, ..., \bm x_n^T]^T$. Similarly, let $\xcandvec \in \mathbb R^{qd}$ denote the stacked set of candidates in $\xcand$: $\xcandvec := [\bm x_{n+1}^T, ..., \bm x_{n+q}^T]^T$. Let
$\tilde{\bm f}_t(\xprev, \xcandvec) := [\tilde{\bm f}_t(\bm x_{1})^T,...,\tilde{\bm f}_t(\bm x_{n+q})^T]^T$ denote the $t^\text{th}$ sample of the corresponding objectives, which we write using the parameterization trick as
\begin{align*}
    \bm f_t(\xprev, \xcandvec) = \mu(\xprev, \xcandvec) + L(\xprev, \xcandvec) \bm \zeta_t,
\end{align*}
where $\mu(\xprev, \xcandvec): \mathbb{R}^{(n+q)d} \rightarrow \mathbb{R}^{(n+q)M}$ is the multi-output GP's posterior mean and $L(\xcandvec, \xprev) \in \mathbb{R}^{(n+q)M \times (n+q)M}$ is a root decomposition (often a Cholesky decomposition) of the multi-output GP's posterior covariance $\Sigma(\xcandvec, \xprev) \in \mathbb{R}^{(n+q)M \times (n+q)M}$, and $\bm \zeta_t \in \mathbb{R}^{(n+q)M}$ with $\bm \zeta_t \sim \mathcal{N}(0, I_{(n+q)M})$.\footnote{Theorem \ref{thm:saa} can be extended to handle non-$iid$ base samples from a family of quasi-Monte Carlo methods as in \citet{balandat2020botorch}.}
\begin{proof}[Proof of Theorem \ref{thm:saa}]
Since the sequential \NEHVI{} is equivalent to the \qNEHVI{} with $q=1$, we prove Theorem \ref{thm:saa} for the general $q>1$ case. Recall from Section \ref{appdx:sec:qnehvi_cbd}, that using the method of common random numbers to fix the base samples, the \iep{} and \cbd{} formulations are equivalent. Therefore, we proceed only with the \iep{} formulation for this proof. 

We closely follow the proof of Theorem 2 in \citet{daulton2020ehvi}. We consider the setting from~\citet[Section D.5]{balandat2020botorch}.
Let $f_t^{(m)}(\bm x_i, \bm \zeta_t)= S_{\{i, m\}}(\mu(\xcandvec, \xprev) + L(\xcandvec, \xprev)\bm\zeta_t)$ denote the posterior distribution over the $m^\text{th}$ outcome at $\bm x_i$ as a random variable, where $S_{\{i, m\}}$ denotes the selection matrix ($\|S_{\{i, m\}}\|_{\infty} \leq 1$ for all $i=1, ..., n+q$ and $m=1, ..., M$), to extract the element corresponding to outcome $m$ for the point $\bm x_i$. The \HVI{} under a single posterior sample is given by 
\begin{align*}
    A(\xcandvec, \bm\zeta_t; \xprev) = \sum_{k=1}^{K_t}\sum_{j=1}^q \sum_{X_j \in \mathcal X_j} (-1)^{j+1} \prod_{m=1}^M\big[z_{k, X_j}^{(m)}(\bm\zeta_t) - l_k^{(m)}\big]_{+}
\end{align*}
where $\mathcal X_j := \{X_j=\{\bm x_{i_1}, ... \bm x_{i_j}\} \subseteq \xcand : \vert X_j \vert = j, n+1 \leq i_1\leq i_j \leq n+q\}$
and $z_{k, X_j}^{(m)}( \bm\zeta_t) = \min \big[u_k^{(m)}, f^{(m)}(\bm x_{i_1},\bm \zeta_t), \ldots, f^{(m)}(\bm x_{i_j}, \bm\zeta_t)\big]$. Note that the box decomposition of the non-dominated space $\{S_1, ..., S_{K_t}\}$ and the number of rectangles in the box decomposition depend on $\bm \zeta_t$. Importantly, the number of hyperrectangles $K_t$ in the decomposition is a finite and bounded by $O(|\mathcal P_t|^{\lfloor \frac{M}{2} \rfloor+1})$ \citep{LACOUR2017347, yang_emmerich2019}, where $|\mathcal P_t| \leq n$.

To satisfy the conditions of  \citep[Theorem 3]{balandat2020botorch}, we need to show that there exists an integrable function $\ell:\mathbb{R}^{q \times M}\mapsto \mathbb{R}$ such that for almost every $\bm \zeta_t$ and all $\xcandvec, \ycandvec \subseteq \mathcal X$,  
\begin{align}
    |A(\xcandvec, \bm \zeta_t; \xprev) - A(\ycandvec, \bm \zeta_t; \xprev)| \leq \ell(\bm \zeta_t) \|\xcandvec - \ycandvec\|.
\end{align}
We note that $\xprev$ is fixed and omit $\xprev$ for brevity, except where necessary. 

Let $$\tilde{a}_{k,m,j,X_j}(\xcandvec, \bm\zeta_t) := \Bigl[\min \big[u_{k,t}^{(m)}, f^{(m)}(\bm x_{i_1}, \bm \zeta_t), \ldots, f^{(m)}(\bm x_{i_j},  \bm \zeta_t)\big] - l_{k,t}^{(m)}\Bigr]_{+}.$$
Because of linearity, it suffices to show that this condition holds for
\begin{align}
    \tilde{A}(\xcandvec, \bm\zeta_t) &:=  \prod_{m=1}^M \tilde{a}_{k,m,j,X_j}(\xcandvec, \bm\zeta_t)
    =\prod_{m=1}^M\Bigl[\min \big[u_{k,t}^{(m)}, f^{(m)}(\bm x_{i_1}, \bm\zeta_t), \ldots, f^{(m)}(\bm x_{i_j}, \bm\zeta_t)\big] - l_{k,t}^{(m)}\Bigr]_{+}
\end{align}
for all $k, j$, and $X_j$.
Note that we can bound $\tilde{a}_{k,m,j,X_j}(\xcandvec, \bm \zeta_t)$ by
\begin{equation}
\label{eqn:saa_bound1}
\begin{aligned}
    \tilde{a}_{k,m,j,X_j}(\xcandvec, \bm \zeta_t)
    &\leq \Bigl| \min \big[u_{k,t}^{(m)}, f^{(m)}(\bm x_{i_1}, \bm \zeta_t), \ldots, f^{(m)}(\bm x_{i_j}, \bm \zeta_t)\big] - l_{k,t}^{(m)}\Bigr| \\
    &\leq | l_{k,t}^{(m)} | + \Bigl| \min \big[ u_{k,t}^{(m)}, f^{(m)}(\bm x_{i_1}, \bm \zeta_t), \ldots, f^{(m)}(\bm x_{i_j}, \bm \zeta_t)\big]  \Bigr|.
\end{aligned}
\end{equation}
Consider the case where $u_{k,t}^{(m)} =\infty$. Then $$\min[u_{k,t}^{(m)}, f(\bm x_{i_1}, \bm \zeta_t)^{(m)}, ..., f^{(m)}(\bm x_{i_j}, \bm \zeta_t)] = \min[f^{(m)}(\bm x_{i_1}, \bm \zeta_t), ..., f^{(m)}(\bm x_{i_j}, \bm \zeta_t)].$$ Now suppose $u_{k,t}^{(m)} < \infty$. Then $$\min[u_{k,t}^{(m)}, f^{(m)}(\bm x_{i_1}, \bm \zeta_t), ... f^{(m)}(\bm x_{i_j}, \bm \zeta_t)] <\bigl| \min[f^{(m)}(\bm x_{i_1}, \bm \zeta_t), ... f^{(m)}(\bm x_{i_j}, \bm \zeta_t)]\bigr| +\bigl| u_{k,t}^{(m)}\bigr|.$$ Let
$$
w_{k,t}^{(m)} =
\begin{cases}
    u_{k,t}^{(m)},& \text{if } u_{k,t}^{(m)} < \infty\\
    0,              & \text{otherwise}.
\end{cases}$$
Note that $l_{k,t}^{(m)}$ is finite and bounded from above and below by $r^{(m)} \leq l_{k,t}^{(m)} < u_{k,t}^{(m)}$ for all $k, t, m$, where $r^{(m)}$ is the $m^\text{th}$ dimension of the reference point.

Hence, we can express the bound in \eqref{eqn:saa_bound1} as
\begin{equation}
\label{eqn:saa_bound2}
\begin{aligned}
    \tilde{a}_{k,m,j,X_j}(\xcandvec, \bm \zeta_t)
    &\leq | l_{k,t}^{(m)} | + | w_{k,t}^{(m)} | + \bigl| \min \big[ f^{(m)}(\bm x_{i_1}, \bm \zeta_t), \ldots, f^{(m)}(\bm x_{i_j}, \bm \zeta_t)\big]  \bigr| \\
    &\leq | l_{k,t}^{(m)} |+ | w_{k,t}^{(m)} | + \sum_{i_1, \dotsc, i_j} \bigl| f^{(m)}(\bm x_{i_j}, \bm \zeta_t) \bigr|.
\end{aligned}
\end{equation}
Note that we can bound $\sum_{i_1, \dotsc, i_j} \bigl| f^{(m)}(\bm x_{i_j}, \bm \zeta_t) \bigr|$ by $$\sum_{i_1, \dotsc, i_j} \bigl| f^{(m)}(\bm x_{i_j}, \bm \zeta_t) \bigr|\leq |X_j| \bigl( \| \mu^{(m)}(\xcandvec, \xprev)\| + \|L^{(m)}(\xcandvec, \xprev)\| \|\bm \zeta_t\| \bigr).$$
Substituting this into \eqref{eqn:saa_bound2} yields
\begin{equation}
\label{eqn:saa_bound3}
\begin{aligned}
    |\tilde{a}_{k,m,j,X_j}(\xcandvec,\bm \zeta_t)| \leq | l_{k,t}^{(m)} | + |w_{k,t}^{(m)} | + |X_j| \bigl( \| \mu^{(m)}(\xcandvec, \xprev)\| + \|L^{(m)}(\xcandvec, \xprev)\| \|\bm \zeta_t\| \bigr)
\end{aligned}
\end{equation}
for all $k, m, j, X_j$. 

Because of our assumptions of that $\mathcal X$ is compact and that the mean and covariance functions are continuously differentiable, $\mu(\xcandvec, \xprev), L(\xcandvec, \xprev), \nabla_{\xcandvec}\mu(\xcandvec, \xprev),$ and $\nabla_{\xcandvec}L(\xcandvec, \xprev)$ are uniformly bounded. Hence, there exist $C_1, C_2 < \infty$ such that
\begin{align*}
    |\tilde{a}_{k,m,j,X_j}(\xcandvec, \bm \zeta_t)| &\leq C_1 + C_2 \|\bm \zeta_t\| 
\end{align*}
for all $k, m, j, X_j$.

Consider the $M=2$ case.
Omitting the indices $k, t, j, X_j$ for brevity, we have
\begin{equation}
\label{appdx:eq:Convergence:SplitProduct}
\begin{aligned}
    \bigl|\tilde{A}(\xcandvec, \bm \zeta_t) -& \tilde{A}(\ycandvec, \bm \zeta_t)\bigr|\\ 
    &= \bigl|\tilde{a}_{1}(\xcandvec, \bm \zeta_t) \tilde{a}_{2}(\xcandvec, \bm \zeta_t) - \tilde{a}_{1}(\ycandvec, \bm \zeta_t) \tilde{a}_{2}(\ycandvec, \bm \zeta_t)\bigr| \\
    &= \bigl|\tilde{a}_{1}(\xcandvec, \bm \zeta_t) \bigl(\tilde{a}_{2}(\xcandvec, \bm \zeta_t) - \tilde{a}_{2}(\ycandvec, \bm \zeta_t)\bigr) + \tilde{a}_{2}(\ycandvec, \bm \zeta_t) \bigl(\tilde{a}_{1}(\xcandvec, \bm \zeta_t) - \tilde{a}_{1}(\ycandvec, \bm \zeta_t)\bigr) \bigr| \\
    &\leq |\tilde{a}_{1}(\xcandvec, \bm \zeta_t)| \bigl|\tilde{a}_{2}(\xcandvec, \bm \zeta_t) - \tilde{a}_{2}(\ycandvec, \bm \zeta_t)\bigr| + |\tilde{a}_{2}(\ycandvec, \bm \zeta_t)| \bigl|\tilde{a}_{1}(\xcandvec, \bm \zeta_t) - \tilde{a}_{1}(\ycandvec, \bm \zeta_t)\bigr|.
\end{aligned}
\end{equation}

Using \eqref{eqn:saa_bound3}, we can bound $|\tilde{a}_{k,m,j,X_j}(\xcandvec, \bm \zeta_t) - \tilde{a}_{kmjX_j}(\ycandvec, \bm \zeta_t)|$ by
\begin{align*}
    |\tilde{a}_{k,t,m,j,X_j}&(\xcandvec, \bm \zeta_t) - \tilde{a}_{k,t,m,j,X_j}(\ycandvec, \bm \zeta_t)| \\
    &\leq \sum_{i_1, \dotsc, i_j} \bigl| S_{\{i_j, m\}}(\mu(\xcandvec, \xprev) + L(\xcandvec, \xprev)\bm \zeta_t) - S_{\{i_j, m\}}(\mu(\ycandvec, \xprev) + L(\ycandvec, \xprev)\bm \zeta_t) \bigr| \\
    &\leq |X_j| \Bigl( \|\mu(\xcandvec, \xprev) - \mu(\ycandvec, \xprev) \| + \|L(\xcandvec, \xprev) - L(\ycandvec, \xprev)\| \|\bm \zeta_t\| \Bigr).
\end{align*}
Since $\mu$ and $L$ have uniformly bounded gradients with respect to $\xcandvec$ and $\ycandvec$, they are Lipschitz. Therefore, there exist $C_3, C_4 < \infty$ such that 
\begin{equation}
    \label{eqn:lipschitz_bound}
    |\tilde{a}_{k,t,m,j,X_j}(\xcandvec, \bm \zeta_t) - \tilde{a}_{k,t,m,j,X_j}(\ycandvec, \bm \zeta_t)| \leq (C_3 + C_4 \|\bm \zeta_t\|) \|\xcandvec - \ycandvec \|
\end{equation}
for all $\xcandvec, \ycandvec, k,t, m, j, X_j$.

Substituting \eqref{eqn:lipschitz_bound} into \eqref{appdx:eq:Convergence:SplitProduct}, we have
\begin{align*}
    \bigl|\tilde{A}(\xcandvec, \bm \zeta_t) - \tilde{A}(\ycandvec, \bm \zeta_t)\bigr| 
    &\leq 2 \Bigl(C_1C_3 + (C_1C_4 + C_2C_3) \|\bm \zeta_t\| + C_2C_4 \|\bm \zeta_t\|^2 \Bigr) \|\xcandvec - \ycandvec \|
\end{align*}

The $M>2$ is very similar to the $M=2$ case in ~\eqref{appdx:eq:Convergence:SplitProduct} albeit with more complex expansions. Similarly, There exist $C < \infty$ such that 
\begin{align*}
    \bigl|\tilde{A}(\xcandvec, \bm 
    \zeta_t) - \tilde{A}(\ycandvec, \bm 
    \zeta_t)\bigr| 
    &\leq C \sum_{m=1}^M \|\bm 
    \zeta_t\|^m \|\xcandvec - \ycandvec \|
\end{align*}
Let us define $\ell(\bm \zeta_t) := C \sum_{m=1}^M \|\bm \zeta_t\|^m$. Note that $\ell(\bm \zeta_t)$ is integrable because  all absolute moments exist for the Gaussian distribution.
Since this satisfies the criteria for Theorem 3 in \citet{balandat2020botorch}, the theorem holds for \qNEHVI{}.
\end{proof}

\subsection{Unbiased Gradient estimates from the MC formulation}
\label{appdx:subsec:theoretical_results:UnbiasedGradient}

As noted in Section~\ref{sec:optimizing_nehvi}, we can show the following (note that this result is not actually required for Theorem~\ref{thm:saa}): 

\begin{proposition}
\label{prop:appdx:theoretical_results:UnbiasedGradient}
    Suppose that the GP mean and covariance function are continuously differentiable. Suppose further that the candidate set $\xcand$ has no duplicates, and that the sample-level gradients $\nabla_{\bm x}\HVI(\tilde{f}_t(\bm x))$ are obtained using the reparameterization trick as in \citet{balandat2020botorch}. Then
    \begin{align}
    \mathbb{E}\bigl[\nabla_{\xcandvec}\hataqNEHVI^N(\xcandvec)\bigr] = \nabla_{\xcandvec}\aqNEHVI(\xcandvec),
    \end{align}
    that is, the averaged sample-level gradient is an unbiased estimate of the gradient of the true acquisition function.
\end{proposition}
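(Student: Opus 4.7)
The plan is to establish the result via the reparameterization trick combined with a dominated convergence argument, in exactly the same style as Proposition~2 of \citet{balandat2020botorch} and the proof of Theorem~\ref{thm:saa} above. Under reparameterization we write $\tilde{\bm f}_t(\xprev,\xcandvec) = \mu(\xprev,\xcandvec) + L(\xprev,\xcandvec)\bm\zeta_t$ with $\bm\zeta_t \sim \mathcal N(\bm 0, I_{(n+q)M})$ whose distribution does not depend on $\xcandvec$. Crucially, since the Pareto frontier $\mathcal P_t$ is determined by $\tilde{\bm f}_t(\xprev)$ and therefore only by $\bm\zeta_t$ (not by $\xcandvec$), the CBD representation means $\aqNEHVI(\xcandvec) = \mathbb E_{\bm\zeta}\bigl[\HVI(\tilde{\bm f}(\xcandvec;\bm\zeta)\mid\mathcal P(\bm\zeta))\bigr]$ with a $\xcandvec$-independent conditioning frontier. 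Averaging $N$ i.i.d. realizations then immediately reduces the claim to showing that $\nabla_{\xcandvec}$ and $\mathbb E_{\bm\zeta}$ may be exchanged, for which two conditions suffice: (i) almost sure differentiability of $\xcandvec \mapsto \HVI(\tilde{\bm f}(\xcandvec;\bm\zeta)\mid\mathcal P(\bm\zeta))$, and (ii) an integrable envelope dominating the sample-path gradient norm.

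For condition (i), I would fix $\bm\zeta$ and use the CBD representation derived in Appendix~\ref{appdx:sec:qnehvi_cbd}: given the cached decomposition, the HVI is a finite sum of products of hinge terms $[\min(u_k^{(m)},\tilde f^{(m)}(\bm x_{i_1}),\dots,\tilde f^{(m)}(\bm x_{i_j})) - l_k^{(m)}]_+$, i.e., a piecewise polynomial in $\tilde{\bm f}(\xcandvec;\bm\zeta)$ and thus piecewise polynomial in $\xcandvec$. The non-differentiable points are confined to the union of the thin sets where (a) some coordinate $\tilde f^{(m)}(\bm x_i;\bm\zeta)$ equals a box edge $u_k^{(m)}$ or $l_k^{(m)}$, or (b) two sample coordinates $\tilde f^{(m)}(\bm x_i;\bm\zeta)$ and $\tilde f^{(m)}(\bm x_j;\bm\zeta)$ coincide for $i\neq j$. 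The no-duplicates assumption on $\xcand$, together with continuous differentiability of the kernel, makes the posterior covariance $\Sigma(\xcandvec,\xprev)$ strictly positive definite; consequently each such kink set pulls back to a Lebesgue-null subset of $\bm\zeta$-space under the affine map $\bm\zeta\mapsto \mu+L\bm\zeta$ and therefore has Gaussian measure zero.

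For condition (ii), I would reuse the Lipschitz bound established during the proof of Theorem~\ref{thm:saa}, which, because $\mathcal P(\bm\zeta)$ and hence the CBD are held fixed when differentiating in $\xcandvec$, gives constants $C,C'<\infty$ with
\begin{equation*}
\bigl|\HVI(\tilde{\bm f}(\xcandvec;\bm\zeta)\mid\mathcal P(\bm\zeta)) - \HVI(\tilde{\bm f}(\ycandvec;\bm\zeta)\mid\mathcal P(\bm\zeta))\bigr| \le \Bigl(C + C'\textstyle\sum_{m=1}^M \|\bm\zeta\|^m\Bigr)\,\|\xcandvec-\ycandvec\|
\end{equation*}
uniformly in $\xcandvec,\ycandvec\in\mathcal X$. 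This supplies the envelope $\ell(\bm\zeta) = C + C'\sum_{m=1}^M \|\bm\zeta\|^m$ on the gradient norm wherever it exists, and all absolute moments of a Gaussian are finite, so $\ell$ is integrable. With (i) and (ii) verified, Theorem~1 of \citet{balandat2020botorch} (dominated convergence for reparameterized expectations) yields $\nabla_{\xcandvec} \aqNEHVI(\xcandvec) = \mathbb E_{\bm\zeta}\bigl[\nabla_{\xcandvec} \HVI(\tilde{\bm f}(\xcandvec;\bm\zeta)\mid\mathcal P(\bm\zeta))\bigr]$, and averaging over $N$ i.i.d. base samples gives the stated unbiasedness. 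I expect the main obstacle to be the rigorous verification of (i): the kink set is a lower-dimensional semi-algebraic variety in sample-value space, and what makes the pullback argument go through is precisely the no-duplicates hypothesis, which rules out degenerate identifications like $\tilde f^{(m)}(\bm x_i;\bm\zeta)\equiv \tilde f^{(m)}(\bm x_j;\bm\zeta)$ that would arise if $\bm x_i = \bm x_j$ and would make $\Sigma(\xcandvec,\xprev)$ singular along directions that prevent the image under $\bm\zeta\mapsto\mu+L\bm\zeta$ from being Gaussian-null.
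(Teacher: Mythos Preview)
Your proposal is correct and follows essentially the same approach as the paper, which defers entirely to Proposition~1 of \citet{daulton2020ehvi} without giving details. You have filled in those details faithfully: the reparameterization reduces the claim to exchanging $\nabla_{\xcandvec}$ and $\mathbb{E}_{\bm\zeta}$, the Lipschitz envelope from the proof of Theorem~\ref{thm:saa} furnishes the integrable dominating function, and the no-duplicates hypothesis ensures the kink set has Gaussian measure zero. The one point worth making explicit is why $\mathcal P_t$ is independent of $\xcandvec$: this holds because the Cholesky factor $L(\xprev,\xcandvec)$ is lower triangular with $\xprev$ ordered first, so the rows corresponding to $\xprev$ depend only on the posterior covariance among the previously evaluated points and not on the candidates---which is precisely the \cbd{} caching mechanism.
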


The proof of Proposition~\ref{prop:appdx:theoretical_results:UnbiasedGradient} closely follows the proof of Proposition~1 in \citet{daulton2020ehvi}.

\section{Error Bound on Sequential Greedy Approximation for \NEHVI{}}
\label{sec:seq_greedy_details}
If the acquisition function $\mathcal L(\xcand)$ is a normalized (meaning $\mathcal L(\emptyset) = 0$), monotone, submodular (meaning that \emph{the increase} in $\mathcal L(\xcand)$ is non-increasing as elements are added to $\xcand$ set function), then the sequential greedy approximation $\hat{\mathcal L}$ of $\mathcal L$ enjoys regret of no more than $\frac{1}{e}\mathcal L^*$, where $\mathcal L^* = \max_{\xcand \subseteq \mathcal X} L(\xcand)$ is the optima of $\mathcal L$ \citep{Fisher1978}. We have $\aqNEHVI(\xcand) = \mathcal L(\xcand) = \mathbb E_{\mathcal P} \big[\aqEHVI{}(\xcand | \mathcal P)\big]$. For a fixed, known $\mathcal P$, \citet{daulton2020ehvi} showed that  $\aqEHVI{}$ is submodular set function. In $\aqNEHVI{}$, $\mathcal P$ is a stochastic, so $\aqEHVI{}(\xcand | \mathcal P)$ is a stochastic submodular set function. Because the expectation of a stochastic submodular function is submodular \citep{asadpour}, $\aqNEHVI{}$ is also submodular. Hence, the sequential greedy approximation of $\aqNEHVI{}$ enjoys regret of no more than $\frac{1}{e} \aqNEHVI{}^*$. Using the result from \citet{wilson2018maxbo}, the MC-based approximation $\hataqNEHVI{}(\xcand) = \sum_{t=1}^{N}\HVI{}\big[\bm f_t(\xcand)| P_t\big]$
also enjoys the same regret bound because \HVI{} is a normalized submodular set function.\footnote{Submodularity technically requires a finite search space $\mathcal X$, whereas in BO $\mathcal X$ is typically an infinite set. Nevertheless in similar scenarios, submodularity has been extended to infinite sets (e.g. \citet{wilson2018maxbo}).}

\section{Experiment Setup}
\label{appdx:sec:experiment_details}

\subsection{Implementation / Code used in the experiments}

Our implementations of \qNEHVI, MESMO, PFES are available in the supplementary files and will be open-sourced under MIT license upon publication. For PESMO, we use the open-source implementation in Spearmint (\url{https://github.com/HIPS/Spearmint/tree/PESM}), which is licensed by Harvard. For DGEMO, MOEA/D-EGO, and TSEMO we use the open-source implementations available at \url{https://github.com/yunshengtian/DGEMO/tree/master} under the MIT license. For TS-TCH, \qEHVI{}, and qNParEGO we use the open-source implementations in BoTorch, which are available at \url{https://github.com/pytorch/botorch)} under the MIT license.

For the ABR problem, we use the Park simulator, which is available in open-source at \url{https://github.com/park-project/park} under the MIT license.

\subsection{Algorithm Details}
All methods are initialized with $2(d+1)$ points from a scrambled Sobol sequence. All MC acquisition functions uses $N=128$ quasi-MC samples \citep{balandat2020botorch}. All parallel algorithms using sequential greedy optimization for selecting a batch of candidates points and the base samples are redrawn when selecting candidate $\bm x_i, i=1, ...,q$. 

For EHVI-based methods, we leverage the two-step trick proposed by \citep{yang_emmerich2019} to perform efficient box decompositions; (i) we find the set of local lower bounds for the maximization problem using Algorithm 5 from \citet{Klamroth_2015}\footnote{More efficient methods for this step exist (e.g. \citet{DACHERT2017}), but \citet{Klamroth_2015} can easily leverage vectorized operations and we find it to be efficient in our experiments.}, and then (ii) using the local lower bounds as a Pareto frontier for the artificial minimization problem, we compute a box decomposition of the dominated space using Algorithm 1 from \citet{LACOUR2017347}.

\qEHVI{} uses the \iep{} for computing joint EHVI over a set of candidates and computes EHVI with respect to the observed Pareto frontier. $q$EHVI-PM-CBD uses the Pareto frontier over the posterior means at the previously evaluated points, providing some amount of regularization with respect to the observed values. In addition, $q$EHVI-PM-CBD uses \cbd{} rather than the \iep{}, which enables scaling to large batch sizes. \TSHVI{} uses 500 fourier basis functions.

For PFES and MESMO, we use 10 sampled (approximate) functions using RFFs (with 500 basis functions) and optimize each function using 5000 iterations of NSGA-II \citep{deb02nsgaii} with a population size of 50. For PFES, we partition the dominated space under each sampled Pareto frontier using the algorithm proposed \citet{LACOUR2017347}, which is more efficient and yields fewer hyperrectangles than the Quick Hypervolume algorithm used by the PFES authors \citep{pfes}. For $q$NParEGO, we use a similar pruning strategy to that in \qNEHVI{} to only integrate over the function values of in-sample points that have positive probability of being best with respect to the sampled scalarization. We use the off-the-shelf implementation of  $q$NParEGO in BoTorch \citep{balandat2020botorch}, which does not use low-rank Cholesky updates; however, we do note that  $q$\textsc{NParEGO} would likely achieve lower wall times using more efficient linear algebra tricks.

For DGEMO, TSEMO, and MOEA/D-EGO, we use the default settings provided in \url{https://github.com/yunshengtian/DGEMO/tree/master}.

\subsection{Problem Details}
All benchmark problems are treated as maximization problems; the objectives for minimization problems are multiplied by -1 to obtain an equivalent maximization problem.
\paragraph{BraninCurrin ($\bm{M=2, ~d=2}$)}
The BraninCurrin problem involves optimizing two competing functions used in BO benchmarks: Branin and Currin. The goal is minimize both:
\begin{align*}
f^{(1)}(x_1', x_2') &=  (x_2 - \frac{5.1}{4 \pi^ 2} x_1^2 + \frac{5}{\pi} x_1 - r)^2 + 10 (1-\frac{1}{8\pi}) \cos(x_1) + 10\\
f^{(2)}(x_1, x_2) &= \bigg[1 - \exp\bigg(-\frac{1} {(2x_2)}\bigg)\bigg] \frac{2300 x_1^3 + 1900x_1^2 + 2092 x_1 + 60}{100 x_1^3 + 500x_1^2 + 4x_1 + 20}
\end{align*}
where $x_1, x_2 \in [0,1]$, $x_1' = 15x_1 - 5$, and $x_2' = 15x_2$.

\paragraph{DTLZ2 ($\bm{M=2, ~d=6}$)} 
DTLZ2 \citep{dtlz} is a standard problem from the multi-objective optimization literature. The two objectives are
\begin{align*}
    f_1(\bm x) &= (1+ g(\bm x_M))\cos\big(\frac{\pi}{2}x_1\big)\cdots\cos\big(\frac{\pi}{2}x_{M-2}\big) \cos\big(\frac{\pi}{2}x_{M-1}\big)\\
    f_2(\bm x) &= (1+ g(\bm x_M))\cos\big(\frac{\pi}{2}x_1\big)\cdots\cos\big(\frac{\pi}{2}x_{M-2}\big) \sin\big(\frac{\pi}{2}x_{M-1}\big),
\end{align*}
where $g(\bm x) = \sum_{x_i \in \bm x_M} (x_i - 0.5)^2, \bm x \in [0,1]^d,$ and $\bm x_M$ is the $d - M +1$ elements of $\bm x$.
\paragraph{ZDT1 ($\bm{M=2, ~d=4}$)}
ZDT1 is a benchmark problem from the multi-objective optimization literature \citep{zdt}. The goal is minimize the following two objectives
\begin{align*}
    f^{(1)}(\bm x) &= x_1\\
    f^{(2)}(\bm x) &= g(\bm x)\Bigg(1-\sqrt{\frac{f^{(1)}(\bm x)}{g(\bm x)}}\Bigg)
\end{align*}
where $g(\bm x) = 1 + \frac{9}{d-1}\sum_{i=2}^d x_i$ and $\bm x = [x_1, ..., x_d] \in [0,1]^d$.
\paragraph{VehicleSafety ($\bm{M=3, ~d=5}$)}
The 3 objectives are based on a response surface model that is fit to data collected from a simulator and are given by \citep{tanabe2020}:
\begin{align*}
    f_1(\bm x) &=1640.2823 + 2.3573285x_1 + 2.3220035x_2 + 4.5688768x_3 + 7.7213633x_4 + 4.4559504x_5\\
    f_2(\bm x) &= 6.5856+ 1.15x_1 - 1.0427x_2 + 0.9738x_3+ 0.8364x_4 - 0.3695x_1x_4 + 0.0861x_1x_5\\
    &+ 0.3628x_2x_4 +  0.1106x_1^2 - 0.3437x_3^2 + 0.1764x_4^2\\
    f_3(\bm x) &= -0.0551 + 0.0181x_1 + 0.1024x_2 + 0.0421x_3 - 0.0073x_1x_2 + 0.024x_2x_3-0.0118x_2x_4\\ 
    &- 0.0204x_3x_4 - 0.008x_3x_5 - 0.0241x_2^2 + 0.0109x_4^2
\end{align*}
where $\bm x \in [1,3]^5$. We seek to (1) minimize mass (a proxy for fuel efficiency), (2) minimize acceleration (a proxy for passenger trauma) in a full-frontal collision, and (3) minimize the distance that the toe-board intrudes into the cabin (a proxy for vehicle fragility) \citep{tanabe2020}.

\paragraph{AutoML ($\bm{M=2, ~d=8}$)}. This experiment considers optimizing predictive performance and latency of a deep neural networks (DNN).
Practitioners and researchers across many domains use DNNs for recommendation and recognition tasks in low-latency (e.g. on-device) environments \citep{schuster2010}, where any increase in prediction time degrades the product experience \citep{namkoong2020distilled}. Simultaneously, researchers are considering increasingly larger architectures that improve predictive performance \citep{Real_Aggarwal_Huang_Le_2019}.  Therefore, a firm may be interesting understanding the set of optimal trade-offs between prediction latency and predictive performance. For a demonstration, we consider optimizing ($d=8$) hyperparameters of DNN (detailed in Table \ref{table:fcnet_search_space}) to minimize out-of-sample prediction error and minimize latency on the MNIST data set \citep{lecun2010mnist}. Using a small randomized test set leads to noisy evaluations of predictive performance and latency measurements are often noisy due to unrelated fluctuations in the testing environment. As in previous works, we minimize a logit transformation of the prediction error and minimize a logarithm of the ratio between the latency of a proposed DNN and the latency of the fastest DNN \citep{pesmo,garrido2019predictive,garridomerchn2020parallel}. For each evaluation, we randomly partition the 60,000 examples from the MNIST training set into a set of 50,000 examples for training and 10,000 examples for evaluation. We train each network for 8 epochs using SGD with momentum with mini-batches of 512 examples. The learning rate is decayed after every 30 mini-batch updates using the specified decay multiplier. We use randomized rounding on the integer parameters before evaluation. For evaluating the performance of different BO methods, we estimate the noiseless objectives using the mean objectives across 3 replications. DNNs are implemented in PyTorch using ReLU activations and a softmax output layer. Latency measurements are taken on a CPU (2x Intel Xeon E5-2680 v4 @ 2.40GHz).
\begin{table*}
\centering
\begin{small}
\begin{sc}
\begin{tabular}{lc}
\toprule
Parameter & Search Space\\
\midrule
Learning rate ($\log_{10}$ scale) & [-5.0, -1.0]\\
Learning rate decay multiplier & [0.01, 1.0]\\
Dropout rate & [0.0, 0.7]\\
$\text{L}_1$ regularization & [$10^{-5}$, 0.1]\\
$\text{L}_2$ regularization & [$10^{-5}$, 0.1]\\
Hidden Layer 1 Size & [20, 500]\\
Hidden Layer 2 Size & [20, 500]\\
Hidden Layer 3 Size & [20, 500]\\
\bottomrule
\end{tabular}
\end{sc}
\end{small}
\caption{\label{table:fcnet_search_space} The search space for the AutoML benchmark.}
\end{table*}

\paragraph{CarSideImpact ($\bm{M=4, ~d=7}$)}
A side-impact test is common practice under European Enhanced Vehicle-Safety Committee to uphold vehicle safety standards \citep{deb2009}.
In constrast with the previous VehicleSafety problem where we considered a full-frontal collision, we now consider the problem of tuning parameters controlling the structural design the of an automobile in the case of a  \emph{side-impact} collision. This problem has been widely used in various works and has previously used stochastic parameters to account for manufacturing error \citep{deb2009}. We use the recent 4-objective version proposed by \citet{tanabe2020} where the goal to minimize the weight of the vehicle, passenger trauma (pubic force), and vehicle damage (the average velocity of the V-pillar). The fourth objective is a combination of 10 other measures of the vehicle durability and passenger safety (see \citep{deb2009} for details). The mathematical formulas for a response surface model fit to data collected from a simulator are given below:

\begin{align*}
    f^{(1)}(\bm x) &= 1.98 + 4.9x_1 + 6.67x_2 + 6.98x_3 + 4.01x_4 + 1.78x_5 + 10^{-5}x_6 + 2.73x_7 \\
    f^{(2)}(\bm x) &=4.72 - 0.5x_4 - 0.19x_2x_3\\
    f^{(3)}(\bm x) &=0.5(V_\text{MBP}(\bm x) + V_\text{FD}(\bm x))\\
    f^{(4)}(\bm x) &=-\sum_{i=1}^{10} \max[g_i(\bm x), 0]
\end{align*}
where
\begin{align*}
    g_1(\bm x) &= 1 - 1.16 + 0.3717x_2x_4 + 0.0092928x_3\\
    g_2(\bm x) &= 0.32 - 0.261 + 0.0159x_1x_2 + 0.06486x_1 + 0.019x_2x_7 - 0.0144x_3x_5 - 0.0154464x_6\\
    g_3(\bm x) &=0.32 - 0.214 - 0.00817x_5 + 0.045195x_1 + 0.0135168x_1 - 0.03099x_2x_6\\
    &+ 0.018x_2x_7 - 0.007176x_3 - 0.023232x_3 + 0.00364x_5x_6 + 0.018x_2^2\\
    g_4(\bm x) &= 0.32 - 0.74 + 0.61x_2 + 0.031296x_3 + 0.031872x_7 - 0.227x_2^2\\
    g_5(\bm x) &= 32 - 28.98 - 3.818x_3 + 4.2x_1x_2 - 1.27296x_6 + 2.68065x_7\\
    g_6(\bm x) &=32 - 33.86 - 2.95x_3 + 5.057x_1x_2 + 3.795x_2 + 3.4431x_7 - 1.45728\\
    g_7(\bm x) &= 32 - 46.36 + 9.9x_2 + 4.4505x_1\\
    g_8(\bm x) &= 4 - f_2(\bm x)\\
    g_9(\bm x) &= 9.9 - V_\text{MBP}(\bm x)\\
    g_{10}(\bm x) &= 15.7 - V_\text{FD}(\bm x)\\
    V_\text{MBP}(\bm x) &=10.58 - 0.674x_1x_2 - 0.67275x_2\\
    V_\text{FD}(\bm x) &=16.45 - 0.489x_3x_7 - 0.843x_5x_6
\end{align*}.
The search space is:
\begin{align*}
    x_1 &\in [0.5, 1.5]\\
    x_2 &\in [0.45, 1.35]\\
    x_3, x_4 &\in [0.5, 1.5]\\
    x_5 &\in [0.875, 2.625]\\
    x_6, x_7 &\in [0.4, 1.2].
\end{align*}
As in the VehicleSafety problem, we add zero-mean Gaussian noise to each objective with a standard deviation of 1\% the range of each objective.

\paragraph{Constrained BraninCurrin ($\bm{M=2, ~V=2, ~d=2}$)}
The constrained BraninCurrin problem uses the same objectives as BraninCurrin, but adds the following disk constraint from \citep{gelbart2014unknowncon}:
 $$c(x_1', x_2') = 50 - (x_1' - 2.5)^2 - (x_2' - 7.5)^2
)  \geq 0$$

We add zero-mean Gaussian noise to objectives and the constraint slack observations with a standard deviation of 5\% of the range of each outcome.

\paragraph{SphereEllipsoidal ($\bm{M=2, ~d=5}$)}
The SphereEllipsoidal problem is defined over $\bm x \in [-5, 5]^d$ and the objectives are given by \citep{brockhoff2019using}:
 \begin{align*}
     f^{(1)}(\bm x) &= \sum_{i=1}^d(x_i - x^{(1)}_{\text{opt}, i})^2 + f^{(1)}_\text{opt}\\
     f^{(2)}(\bm x) &= \sum_{i=1}^d 10^{6 \frac{i - 1}{d - 1}}z_i^2 + f^{(2)}_\text{opt}
 \end{align*}
 where \begin{align*}
     z_i &= T_\text{osz}(\delta_i)\\
     \delta_i &= x_i - x^{(2)}_{\text{opt}, i}\\
     T_\text{osz}(\delta_i) &= \text{sign}(\delta_i) e^{\hat{\delta_i} + 0.049\big(\sin\big[c_1(\delta_i) \hat{\delta_i}\big] + \sin\big[c_2(\delta_i)\hat{\delta_i}\big]\big)}
 \end{align*}
 and
 \[
    \hat{\delta_i} =
\begin{cases}
    \log(|\delta_i|),& \text{if } \delta_i\neq 0\\
    0,              & \text{otherwise}
\end{cases}
\]
 \[
    c_1(\delta_i) =
\begin{cases}
    10,& \text{if } \delta_i\geq 0\\
    5.5,              & \text{otherwise}
\end{cases}
\]
 \[
    c_2(\delta_i) =
\begin{cases}
    7.9,& \text{if } \delta_i\geq 0\\
    3.1,              & \text{otherwise}.
\end{cases}
\]
We set 
\begin{align*}
    \bm x_\text{opt}^{(1)} &= [-0.0299,  2.1458, -3.2922, -2.9438, -1.5406]\\
    \bm x_\text{opt}^{(2)} &= [ 2.0611, -1.7655, -0.7754,  1.8775, -3.7657]\\
    f_\text{opt}^{(1)} &= 203.71\\
    f_\text{opt}^{(2)} &= 135.6
\end{align*}.
We add zero-mean Gaussian noise to objectives and the constraint slack observations with a standard deviation of 5\% of the range of each outcome.

\subsection{Evaluation Details}
\label{appdx:sec:eval_details}
To compute the log hypervolume difference metric, we use NSGA-II to estimate the true Pareto frontier (except for the ABR and AutoML problems, where evaluations are time-consuming and we instead take the true Pareto frontier to be the Pareto frontier across the estimated objectives across all methods and replications). Using this Pareto frontier, we compute the hypervolume dominated by the true Pareto frontier in order to calculate the log hypervolume difference. For ZDT1, the hypervolume dominated by the true Pareto frontier can be computed analytically. For Constrained BraninCurrin, we evaluate the logarithm of the difference between the hypervolume dominated by the true feasible Pareto frontier and the feasible in-sample Pareto frontier for each method.

For all problems, we selected the reference point based on the component-wise noiseless nadir point $\bm f_\text{nadir}(\bm x) = \min_{\bm x \in 
\mathcal X} \bm f(\bm x)$ and the range of the Pareto frontier for each noiseless objective using the common heuristic \citep{WANG201725}:
$\bm r = \bm f_\text{nadir}(\bm x) - \beta * (\bm f_\text{ideal}(\bm x) - \bm f_\text{nadir}(\bm x)),$ where $\beta = 0.1$ and $\bm f_\text{ideal}(\bm x) = \max_{\bm x \in 
\mathcal X} \bm f(\bm x)$.

\begin{table*}
\centering
\begin{small}
\begin{sc}
\begin{tabular}{lc}
\toprule
Problem & Reference Point\\
\midrule
BraninCurrin & [-18.00, -6.00]\\
ZDT1 & [-1.10, -1.10]\\
DTLZ2 & $[-1.10]^M$\\
VehicleSafety & [-1698.55, -11.21, -0.29]\\
ABR & [150.00, -3500.00]\\
AutoML & [-2.45, 0.60] \\
CarSideImpact & [-45.49, -4.51, -13.34, -10.39]\\
ConstrainedBraninCurrin & [-80.00, -12.00]\\
SphereEllipsoidal & [-261.00, $-6.77\cdot10^6$]\\
\bottomrule
\end{tabular}
\end{sc}
\end{small}
\caption{\label{table:ref_points} The reference points for each benchmark problem.}
\end{table*}
\section{Experiments}
\label{appdx:sec:extra_experiments}
\subsection{Wall Time Results}
Tables \ref{table:walltime_cpu} and \ref{table:walltime_cpu2} report the acquisition optimization wall times for each method. On all benchmark problems except CarSideImpact, \qNEHVI{} is faster to optimize than MESMO and PFES on a GPU. The wall times for optimizing \qNEHVI{} are competitive with those for $q$NParEGO on most benchmark problems and batch sizes; on many problems, \qNEHVI{} is often faster than $q$NParEGO. On the problems VehicleSafety and CarSideImpact problems which have 3 and 4 objectives respectively, we observed tractable wall times, even when generating $q=32$ candidates. The wall time for 3 and 4 objective problems is larger primarily because the box decompositions are more time-consuming to compute and result in more hyperrectangles as the number of objectives increases. Although, \qEHVI{}(-PM) is faster for $q=1$ and $q=8$ on many problems, it is unable to scale to large batch sizes and ran out of memory for $q=8$ on CarSideImpact due to the box decomposition having a large number of hyperrectangles.
\label{appdx:sec:wall_time_results}
\FloatBarrier
\begin{table*}
\centering
\begin{small}
\begin{sc}
\resizebox{\textwidth}{!}{
\begin{tabular}{lcccccc}
\toprule
\textbf{CPU} & BraninCurrin & ZDT1 & ABR & VehicleSafety\\
\midrule
MESMO (\textit{q}=1) & $21.24 ~(\pm 0.02)$ & $19.76 ~(\pm 0.03)$ & $23.24 ~(\pm 0.04)$ & $28.39 ~(\pm 0.07)$\\
PFES (\textit{q}=1) & $22.86 ~(\pm 0.05)$ & $39.82 ~(\pm 0.14)$ & $43.03 ~(\pm 0.12)$ & $53.16 ~(\pm 0.17)$\\
TS-TCH (\textit{q}=1) & $\bm{0.51 ~(\pm 0.0)}$ & $\bm{0.48 ~(\pm 0.0)}$ & $\bm{0.75 ~(\pm 0.0)}$ & $\bm{0.67 ~(\pm 0.0)}$\\
\qEHVI-PM{}-CBD (\textit{q}=1) & $2.34 ~(\pm 0.02)$ & $3.7 ~(\pm 0.02)$ & $3.56 ~(\pm 0.03)$ & $7.82 ~(\pm 0.05)$\\
\qEHVI{} (\textit{q}=1) & $0.58 ~(\pm 0.0)$ & $0.66 ~(\pm 0.01)$ & $2.98 ~(\pm 0.02)$ & $5.07 ~(\pm 0.03)$\\
\qNEHVI{} (\textit{q}=1) & $40.55 ~(\pm 0.61)$ & $35.66 ~(\pm 0.47)$ & $62.29 ~(\pm 0.97)$ & $120.43 ~(\pm 1.25)$\\
$q$NParEGO (\textit{q}=1) & $3.19 ~(\pm 0.05)$ & $1.65 ~(\pm 0.02)$ & $6.94 ~(\pm 0.06)$ & $1.05 ~(\pm 0.01)$\\
$q$ParEGO (\textit{q}=1) & $0.58 ~(\pm 0.01)$ & $0.7 ~(\pm 0.01)$ & $2.5 ~(\pm 0.03)$ & $0.75 ~(\pm 0.01)$\\
DGEMO (\textit{q}=1) &$65.28 (\pm 0.26)$ & $76.99 (\pm 0.35)$ & NA& NA\\
DGEMO (\textit{q}=8) &$65.44 (\pm 0.63)$ & $86.97 (\pm 0.85)$ & NA& NA\\
DGEMO (\textit{q}=16) & $66.44 (\pm 0.93)$&$86.06 (\pm 1.21)$  & NA&NA \\
DGEMO (\textit{q}=32) &$66.66 (\pm 1.47)$ &$84.66 (\pm 1.66)$  & NA& NA\\
TSEMO (\textit{q}=1) & $3.02 (\pm 0.01)$& $2.98 (\pm 0.01)$ & NA& $3.61 (\pm 0.01)$\\
TSEMO (\textit{q}=8) &$3.53 (\pm 0.01)$ & $3.48 (\pm 0.01)$ & NA& $7.45 (\pm 0.1)$\\
TSEMO (\textit{q}=16) &$3.77 (\pm 0.02)$ & $3.74 (\pm 0.02)$ &NA & $11.06 (\pm 0.28)$\\
TSEMO (\textit{q}=32) &$4.29 (\pm 0.03)$ &$4.22 (\pm 0.02)$  &NA & $16.3 (\pm 0.68)$\\
MOEA/D-EGO (\textit{q}=1) &$57.79 (\pm 0.17)$ & $58.1 (\pm 0.17)$ & NA& $71.0 (\pm 0.21)$\\
MOEA/D-EGO (\textit{q}=8) & $63.56 (\pm 0.18)$& $63.57 (\pm 0.17)$ & NA&$77.56 (\pm 0.22)$ \\
MOEA/D-EGO (\textit{q}=16) & $64.0 (\pm 0.18)$&$63.99 (\pm 0.19)$  & NA& $78.03 (\pm 0.25)$\\
MOEA/D-EGO (\textit{q}=32) & $64.09 (\pm 0.26)$& $63.9 (\pm 0.24)$ &NA & $77.77 (\pm 0.35)$\\
\midrule
\textbf{GPU} & BraninCurrin & ZDT1 & abr & VehicleSafety \\
\midrule
MESMO (\textit{q}=1) & $19.9 ~(\pm 0.04)$ & $19.92 ~(\pm 0.04)$ & $21.54 ~(\pm 0.08)$ & $24.57 ~(\pm 0.09)$\\
PFES (\textit{q}=1) & $21.68 ~(\pm 0.07)$ & $45.9 ~(\pm 0.17)$ & $43.3 ~(\pm 0.13)$ & $47.25 ~(\pm 0.16)$\\
TS-TCH{} (\textit{q}=1) & $0.88 ~(\pm 0.01)$ & $0.94 ~(\pm 0.01)$ & $1.08 ~(\pm 0.01)$ & $1.04 ~(\pm 0.01)$\\
TS-TCH (\textit{q}=8) & $1.85 ~(\pm 0.03)$ & $2.01 ~(\pm 0.03)$ & $2.99 ~(\pm 0.04)$ & $2.32 ~(\pm 0.05)$\\
TS-TCH (\textit{q}=16) & $3.08 ~(\pm 0.08)$ & $3.29 ~(\pm 0.1)$ & $4.28 ~(\pm 0.08)$ & $3.54 ~(\pm 0.09)$\\
TS-TCH (\textit{q}=32) & $5.25 ~(\pm 0.15)$ & $5.41 ~(\pm 0.16)$ & $7.23 ~(\pm 0.2)$ & $6.41 ~(\pm 0.23)$\\
\qEHVI-PM{}-CBD (\textit{q}=1) &$2.17 (\pm0.01)$&$2.12 (\pm0.02)$ &  $3.59 (\pm0.02)$& $51.11 (\pm0.28)$\\
\qEHVI-PM{}-CBD (\textit{q}=8) &$39.56 (\pm0.79)$&$30.83 (\pm1.35)$ & $36.2 (\pm0.73)$ & $716.03 (\pm13.44)$\\
\qEHVI-PM{}-CBD (\textit{q}=16) &$82.91 (\pm2.42)$& $67.3 (\pm3.88)$&  $70.02 (\pm1.64)$& $1410.79 (\pm41.72)$\\
\qEHVI-PM{}-CBD (\textit{q}=32) &$147.81 (\pm6.85)$& $105.74 (\pm8.55)$& $251.97 (\pm12.69)$ & $2570.95 (\pm116.61)$\\
\qEHVI{} (\textit{q}=1) & $0.72 ~(\pm 0.01)$ & $0.99 ~(\pm 0.02)$ & $3.67 ~(\pm 0.02)$ & $3.96 ~(\pm 0.05)$\\
\qEHVI{} (\textit{q}=8) & $18.12 ~(\pm 1.03)$ & $18.05 ~(\pm 0.86)$ & $40.55 ~(\pm 0.58)$ & $71.49 ~(\pm 2.04)$\\
\qNEHVI{} (\textit{q}=1) & $6.15 ~(\pm 0.06)$ & $5.75 ~(\pm 0.04)$ & $7.72 ~(\pm 0.09)$ & $20.81 ~(\pm 0.11)$\\
\qNEHVI{} (\textit{q}=8) & $48.19 ~(\pm 1.2)$ & $46.74 ~(\pm 0.83)$ & $49.7 ~(\pm 0.79)$ & $168.63 ~(\pm 2.49)$\\
\qNEHVI{} (\textit{q}=16) & $102.87 ~(\pm 4.02)$ & $95.6 ~(\pm 2.62)$ & $93.14 ~(\pm 1.72)$ & $289.02 ~(\pm 5.82)$\\
\qNEHVI{} (\textit{q}=32) & $177.56 ~(\pm 7.81)$ & $190.59 ~(\pm 6.07)$ & $181.97 ~(\pm 4.77)$ & $546.83 ~(\pm 16.09)$\\
\TSHVI{} (\textit{q}=1) & $\bm{0.32 (\pm0.0)}$& $\bm{0.24 (\pm0.0)}$&$\bm{0.56 (\pm0.0)}$ &$\bm{0.92 (\pm0.0)}$\\
\TSHVI{} (\textit{q}=8) & $2.43 (\pm0.02)$& $2.11 (\pm0.03)$& $4.55 (\pm0.06)$&$7.1 (\pm0.1)$\\
\TSHVI{} (\textit{q}=16) &$4.97 (\pm0.07)$ & $3.73 (\pm0.06)$& $8.73 (\pm0.14)$&$14.77 (\pm0.31)$\\
\TSHVI{} (\textit{q}=32) & $9.03 (\pm0.18)$& $8.18 (\pm0.24)$& $17.15 (\pm0.41)$&$34.99 (\pm1.46)$\\
$q$NParEGO (\textit{q}=1) & $2.39 ~(\pm 0.04)$ & $1.84 ~(\pm 0.04)$ & $6.47 ~(\pm 0.05)$ & $0.9 ~(\pm 0.02)$\\
$q$NParEGO (\textit{q}=8) & $47.05 ~(\pm 1.74)$ & $52.99 ~(\pm 1.94)$ & $74.72 ~(\pm 1.9)$ & $45.56 ~(\pm 1.17)$\\
$q$NParEGO (\textit{q}=16) & $118.73 ~(\pm 5.53)$ & $116.68 ~(\pm 5.51)$ & $116.79 ~(\pm 3.19)$ & $91.3 ~(\pm 3.83)$\\
$q$NParEGO (\textit{q}=32) & $306.17 ~(\pm 17.81)$ & $279.01 ~(\pm 17.72)$ & $240.56 ~(\pm 6.44)$ & $188.42 ~(\pm 13.42)$\\
$q$ParEGO (\textit{q}=1) & $0.81 ~(\pm 0.02)$ & $1.05 ~(\pm 0.03)$ & $4.39 ~(\pm 0.05)$ & $0.79 ~(\pm 0.02)$\\
$q$ParEGO (\textit{q}=8) & $13.01 ~(\pm 0.53)$ & $16.4 ~(\pm 0.72)$ & $31.02 ~(\pm 0.81)$ & $12.64 ~(\pm 0.84)$\\
$q$ParEGO (\textit{q}=16) & $34.34 ~(\pm 2.12)$ & $43.66 ~(\pm 3.12)$ & $66.85 ~(\pm 3.13)$ & $36.68 ~(\pm 4.48)$\\
$q$ParEGO (\textit{q}=32) & $139.73 ~(\pm 25.22)$ & $108.25 ~(\pm 6.94)$ & $122.37 ~(\pm 6.12)$ & $107.34 ~(\pm 14.76)$\\
\bottomrule
\end{tabular}
}
\end{sc}
\end{small}
\caption{\label{table:walltime_cpu} Acquisition function optimization wall time (including box decompositions) in seconds on a CPU (2x Intel Xeon E5-2680 v4 @ 2.40GHz) and a Tesla V100 SXM2 GPU (16GB RAM). The mean and two standard errors are reported. DGEMO, TSEMO, and MOEA/D-EGO are omitted for ABR because they have package requirements that are not easily compatible with our distributed evaluation pipeline and ABR evaluations are prohibitively slow without distributed evaluations. DGEMO is omitted for VehicleSafety because the open-source implementation raises an consistently raises an exception in the graph cutting algorithm with this problem.}
\end{table*}
\FloatBarrier
\subsection{Scaling to large batch sizes with \cbd{}}
In Figure \ref{fig:q_cache_iep_m3_4}, we provide results demonstrating the \cbd{} approach enables scaling to large batch sizes, even with 3 or 4 objectives, whereas the \iep{} wall times grow exponentially with the batch size and the \iep{} overflows GPU memory even with modest batch sizes.
\label{appdx:sec:cbd_vs_iep}
\FloatBarrier
\begin{figure*}[h]
    \centering
    \includegraphics[width=\textwidth]{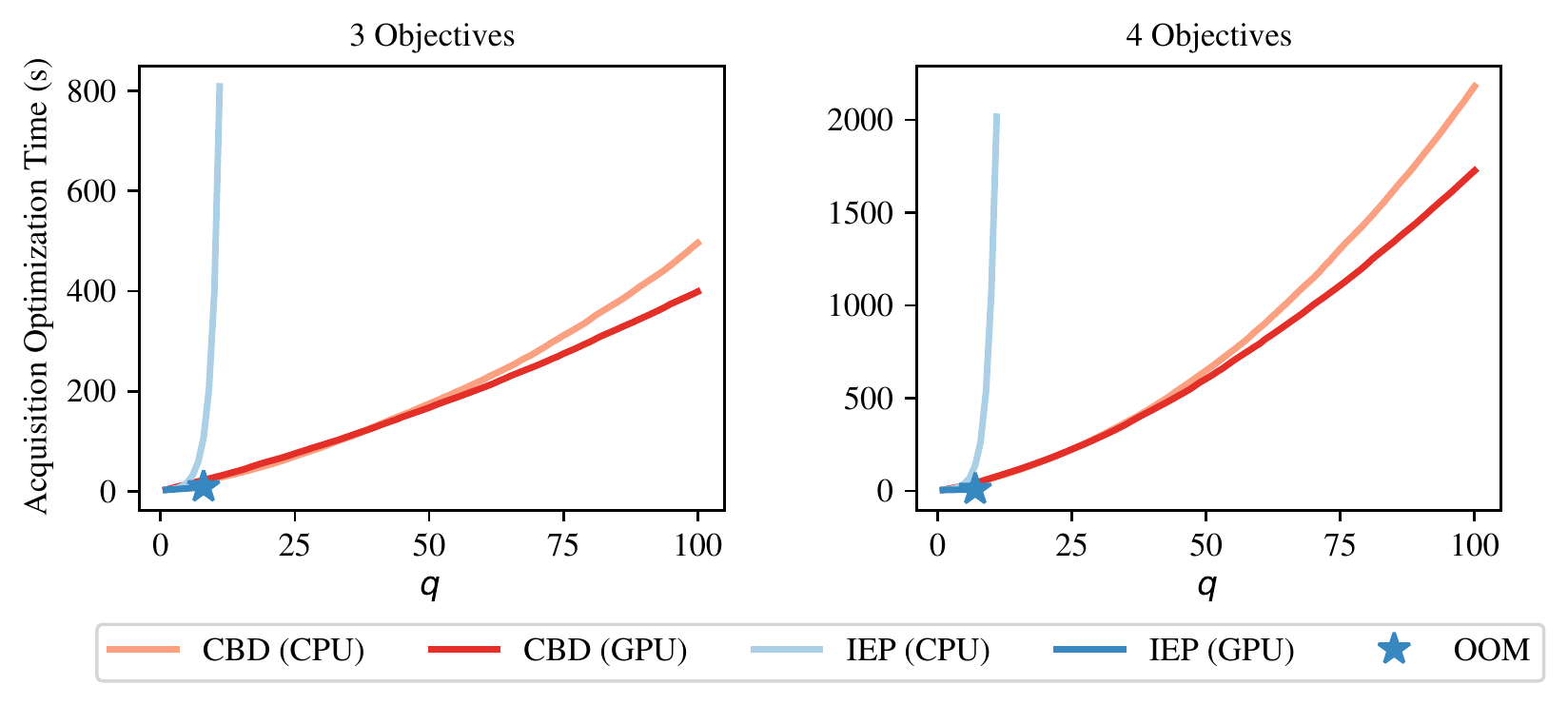}
    \vspace{-15pt}
    \caption{\label{fig:q_cache_iep_m3_4}Acquisition optimization wall time under a sequential greedy approximation using L-BFGS-B for three and four objectives. \cbd{} enables scaling to much larger batch sizes $q$ than using the inclusion-exclusion principle (\iep{}) and avoids running out-of-memory (OOM) on a GPU. Independent GPs are used for each outcome and are initialized with 20 points from the Pareto frontier of the 6-dimensional DTLZ2 problem \citep{dtlz} with 3 objectives (left) and 4 objectives (right). Wall times were measured on a Tesla V100 SXM2 GPU (16GB GPU RAM) and a Intel Xeon Gold 6138 CPU @ 2GHz CPU (251GB RAM).}
    \vspace{15pt}
\end{figure*}
\FloatBarrier
\subsection{Additional Empirical Results}
\label{appdx:sec:additional_results}
The additional optimization performance results in the appendix demonstrate that \qNEHVI{}-based algorithms are consistently the top performer. The only case where \qNEHVI{}(-1) is outperformed is in the sequential setting on the CarSideImpact problem in Figure \ref{fig:hv_sequential_additional_reuslts}(a), where \qEHVI{} performs best However, as show in Figure \ref{fig:hv_sequential_additional_reuslts}(b) and Figure \ref{fig:hv_sequential_additional_reuslts}(c), \qNEHVI{} enables scaling to large batch sizes, whereas \qEHVI{} runs out of memory on a GPU for $q=8$. Therefore, in a practical setting where vehicles are manufactured and test in parallel, \qNEHVI{} would be the best choice.

Figure \ref{fig:anytime} shows that \qNEHVI{} achieves solid performance anytime throughout the learning curve in the sequential setting, and Figure \ref{fig:hv_over_q_additional} shows that \qNEHVI{}-based variants consistently achieves the best performance for various $q$ with a fixed budget of 224 function evaluations.  Although, \TSHVI{} does not consistently perform better than \qNEHVI{}, \TSHVI{} achieves very little degradation of sample complexity as the batch size $q$ increases.

\subsection{Better performance from \qEHVI{} with a larger batch size}
Interestingly, on many test problems \qEHVI{} performs better with $q=8$ than $q=1$. In the case of BraninCurrin and ConstrainedBraninCurrin, $q$ParEGO also improves as $q$ increases. Since this phenomenon is not observed with the noisy acquisition functions (\qNEHVI{}, $q$NParEGO), we hypothesize that it may be the case that sequential data collection results in a difficult to optimize acquisition surface and that integrating over the in-sample points leads to a smoother acquisition surface that results in improved sequential optimization. The acquisition functions that do not account for noise may be misled by the noise in sequential setting, but using a larger batch size (within limits) may help avoid the issue of not properly accounting for noise.
\FloatBarrier
\subsection{Performance over Higher Dimensional Spaces}
\label{sec:high_d_tshvi}
\TSHVI{} relies on approximate GP sample paths (RFFs). Although we find that \TSHVI{} performs very well on many low-dimensional problems (see Figure~\ref{fig:q_hv}), Figures \ref{fig:nehvi1_dimension} and \ref{fig:various_dims} show that \TSHVI{} does not perform as well a \qNEHVI{} on higher dimensional problems. We hypothesize that the RFF approximation degrades in higher dimensional search spaces leading to poor optimization performance relative \qNEHVI{}, which uses exact GP samples. It is likely that using 500 Fourier basis functions leads to large approximation error on high dimensional search spaces. Further study is needed to examine whether robust performance can be achieved by increasing the number of basis functions. As shown in Figure~\ref{fig:various_dims}, \qNEHVI{} consistently outperforms all tested methods regardless of the dimension of the design space.
\begin{figure}[h]
    \centering
    \includegraphics[width=\textwidth]{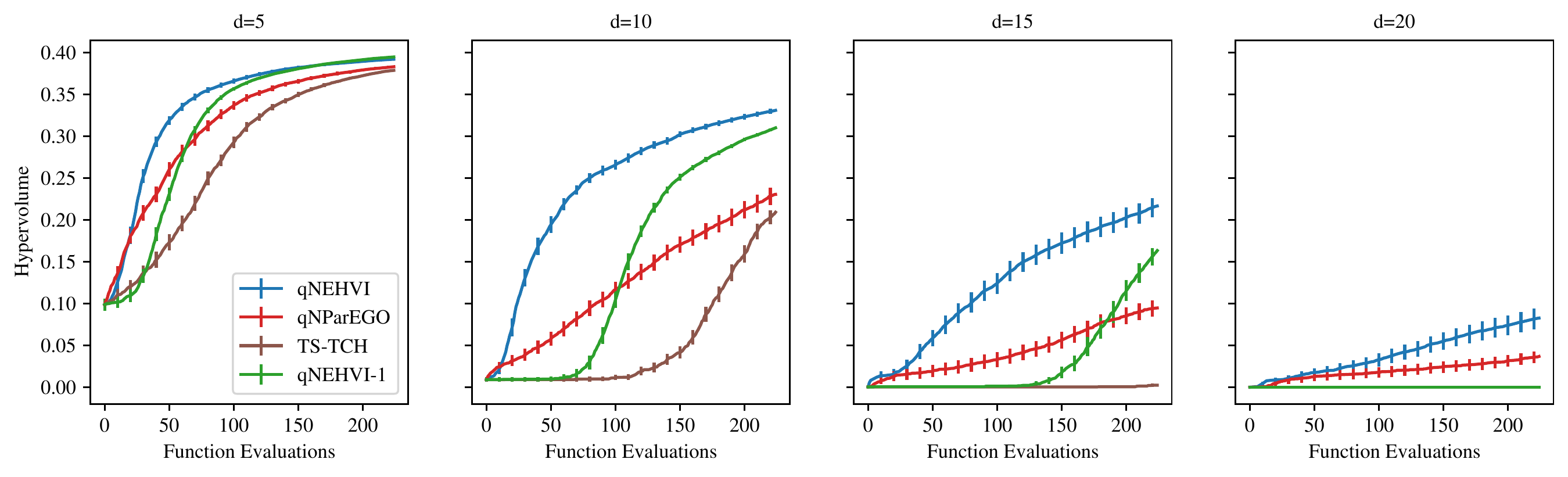}
    \caption{\label{fig:various_dims}Sequential optimization performance  2-objective DTLZ2 with $\sigma=5\%$ problems as the dimension of the search space increases from $d=5$ to $d=20$.}
\end{figure}
\begin{figure}[h]
    \centering
    \includegraphics[width=\textwidth]{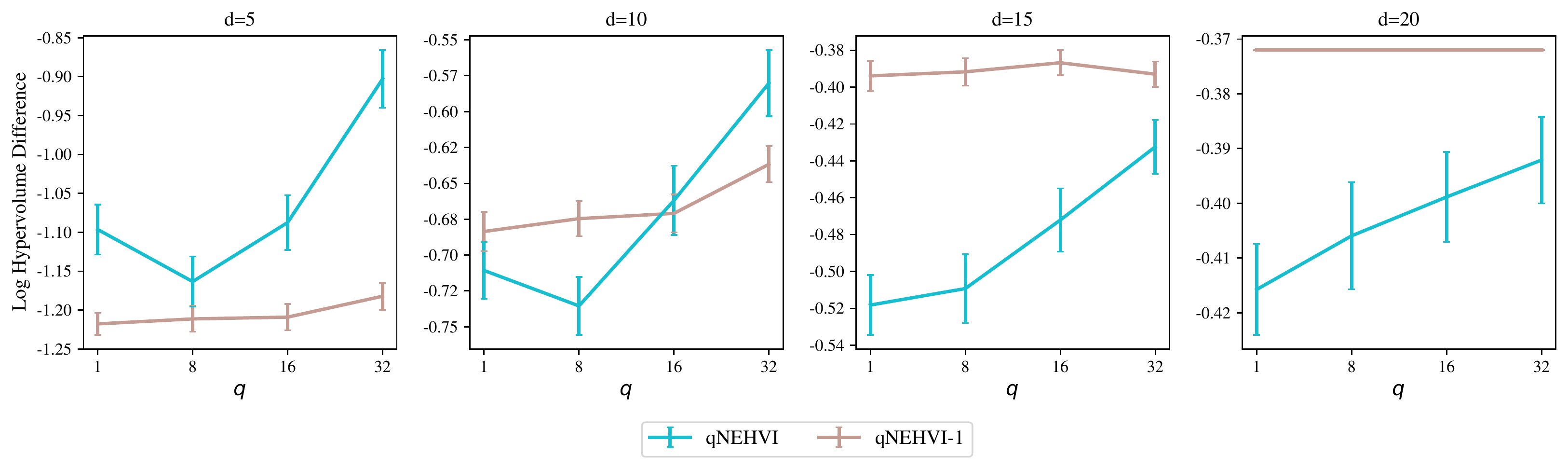}
    \caption{\label{fig:nehvi1_dimension}A comparison of the final optimization performance of \TSHVI{}, a single sample path approximation of \qNEHVI{}, and \qNEHVI{} on 2-objective DTLZ2 problems with input dimensions between 5 and 20 under different batch sizes $q$. \TSHVI{} is very effective on lower dimensional problems, but does not perform as well as \qNEHVI{} on higher dimensional problems.}
\end{figure}

\FloatBarrier
\begin{table*}
\centering
\begin{small}
\begin{sc}
\resizebox{\textwidth}{!}{
\begin{tabular}{lcccc}
\toprule
\textbf{CPU} & DTLZ2 & AutoML & CarSideImpact& Constrained BraninCurrin\\
\midrule
MESMO (\textit{q}=1)&$27.79 (\pm0.07)$&$37.86 ~(\pm 0.08)$&$33.31 ~(\pm 0.06)$& NA\\
PFES (\textit{q}=1)&$69.85 (\pm0.21)$&$101.24 ~(\pm 0.29)$&$102.55 ~(\pm 0.34)$& NA\\
TS-TCH (\textit{q}=1)&$\bm{0.72 (\pm0.0)}$&$\bm{0.93 ~(\pm 0.0)}$&$\bm{1.27 ~(\pm 0.01)}$& NA\\
\qEHVI-PM{}-CBD (\textit{q}=1) &$3.98 (\pm0.02)$ &$5.76 ~(\pm 0.05)$&$83.14 ~(\pm 0.74)$& $10.27 ~(\pm 0.06)$\\
\qEHVI{} (\textit{q}=1) &$2.74 (\pm0.01)$&$5.05 ~(\pm 0.04)$&$96.19 ~(\pm 0.9)$& $3.26 ~(\pm 0.05)$\\
\qNEHVI{} (\textit{q}=1) &$54.04 (\pm0.67)$&$22.71 ~(\pm 0.48)$&$541.13 ~(\pm 6.83)$& $267.67 ~(\pm 4.09)$\\
$q$NParEGO (\textit{q}=1) &$21.39 (\pm0.2)$&$5.6 ~(\pm 0.07)$&$3.38 ~(\pm 0.05)$& $12.05 ~(\pm 0.17)$\\
$q$ParEGO (\textit{q}=1) &$2.33 (\pm0.02)$&$3.06 ~(\pm 0.03)$&$1.91 ~(\pm 0.02)$& $\bm{1.56 ~(\pm 0.03)}$\\
DGEMO (\textit{q}=1) &$84.48 (\pm 0.64)$&NA&NA& NA \\
DGEMO (\textit{q}=8) &$65.99 (\pm 0.51)$&NA&NA&NA\\
DGEMO (\textit{q}=16) &$69.57 (\pm 0.59)$&NA&NA&NA\\
DGEMO (\textit{q}=32) &$72.82 (\pm 0.8)$&NA&NA&NA\\
TSEMO (\textit{q}=1) &$3.1 (\pm 0.01)$&NA&$14.91 (\pm 0.16)$&NA\\
TSEMO (\textit{q}=8) &$3.58 (\pm 0.01)$&NA&$100.78 (\pm 3.75)$&NA\\
TSEMO (\textit{q}=16) &$3.87 (\pm 0.02)$&NA&$188.44 (\pm 9.88)$&NA\\
TSEMO (\textit{q}=32) &$4.55 (\pm 0.03)$&NA&$326.51 (\pm 24.03)$&NA\\
MOEA/D-EGO (\textit{q}=1) &$59.3 (\pm 0.18)$&NA&$79.87 (\pm 0.16)$&NA\\
MOEA/D-EGO (\textit{q}=8) &$65.5 (\pm 0.18)$&NA&$85.34 (\pm 0.21)$&NA\\
MOEA/D-EGO (\textit{q}=16) &$65.81 (\pm 0.2)$&NA&$85.18 (\pm 0.3)$&NA\\
MOEA/D-EGO (\textit{q}=32) &$65.44 (\pm 0.3)$&NA&$84.8 (\pm 0.39)$&NA\\
\midrule
\textbf{GPU} &DTLZ2& AutoML & CarSideImpact& Constrained BraninCurrin\\
\midrule
MESMO (\textit{q}=1) &$17.01 (\pm0.13)$& $28.83 ~(\pm 0.2)$ & $26.0 ~(\pm 0.06)$ & NA\\
PFES (\textit{q}=1) &$46.63 (\pm0.51)$& $85.73 ~(\pm 0.4)$ & $55.41 ~(\pm 0.16)$ & NA\\
TS-TCH (\textit{q}=1) &$0.84 (\pm0.01)$& $1.49 ~(\pm 0.01)$ & $2.17 ~(\pm 0.01)$ & NA \\
TS-TCH (\textit{q}=8) &$3.71 (\pm0.07)$& $3.64 ~(\pm 0.1)$ & $4.15 ~(\pm 0.07)$ & NA \\
TS-TCH (\textit{q}=16) &$5.85 (\pm0.14)$& $6.16 ~(\pm 0.21)$ & $6.9 ~(\pm 0.19)$ & NA \\
TS-TCH (\textit{q}=32) &$9.33 (\pm0.3)$& $9.47 ~(\pm 0.47)$ & $10.2 ~(\pm 0.4)$& NA\\
\qEHVI-PM{}-CBD (\textit{q}=1) &$4.29 (\pm0.01)$& $5.26 (\pm0.05)$&$52.7 (\pm0.41)$  & $15.89 (\pm0.08)$\\
\qEHVI-PM{}-CBD (\textit{q}=8) &$40.85 (\pm0.41)$&$39.55 (\pm0.8)$ &  $460.18 (\pm9.76)$& $135.89 (\pm1.43)$\\
\qEHVI-PM{}-CBD (\textit{q}=16) &$93.39 (\pm1.87)$& $71.8 (\pm1.81)$&  $866.12 (\pm26.46)$& $314.42 (\pm5.44)$\\
\qEHVI-PM{}-CBD (\textit{q}=32) &$194.12 (\pm5.6)$&$143.83 (\pm4.88)$ &  $1682.28 (\pm72.5)$& $823.01 (\pm24.48)$\\
\qEHVI{} (\textit{q}=1) &$2.93 (\pm0.02)$& $4.67 ~(\pm 0.1)$ & $9.63 ~(\pm 0.05)$ & $5.69 ~(\pm 0.11)$\\
\qEHVI{} (\textit{q}=8) &$39.64 (\pm0.57)$& $104.48 ~(\pm 1.34)$ & OOM & $68.95 ~(\pm 2.57)$\\
\qNEHVI{} (\textit{q}=1) &$4.91 (\pm0.01)$& $7.95 ~(\pm 0.1)$ & $82.66 ~(\pm 0.63)$ & $20.47 ~(\pm 0.12)$\\
\qNEHVI{} (\textit{q}=8) &$39.96 (\pm0.35)$& $67.28 ~(\pm 1.87)$ & $683.06 ~(\pm 13.82)$ & $168.04 ~(\pm 1.85)$\\
\qNEHVI{} (\textit{q}=16) &$74.41 (\pm0.63)$& $145.66 ~(\pm 4.45)$ & $1289.4 ~(\pm 36.81)$ & $362.15 ~(\pm 9.08)$\\
\qNEHVI{} (\textit{q}=32) &$142.18 (\pm1.59)$& $247.92 ~(\pm 11.93)$ & $2480.41 ~(\pm 102.38)$ & $654.66 ~(\pm 23.48)$\\
\TSHVI{} (\textit{q}=1) &$\bm{0.42 (\pm0.0)}$& $\bm{0.53 (\pm0.0)}$ &  $\bm{2.11 (\pm0.01)}$& $\bm{1.57 (\pm0.02)}$\\
\TSHVI{} (\textit{q}=8) &$3.26 (\pm0.03)$& $4.55 (\pm0.09)$ &  $16.88 (\pm0.2)$& $11.01 (\pm0.16)$\\
\TSHVI{} (\textit{q}=16) &$6.34 (\pm0.04)$&  $7.22 (\pm0.19)$&  $34.78 (\pm0.63)$& $20.56 (\pm0.4)$\\
\TSHVI{} (\textit{q}=32) &$14.85 (\pm0.45)$&  $12.66 (\pm0.54)$&  $67.27 (\pm1.61)$ & $40.6 (\pm1.1)$\\
$q$NParEGO (\textit{q}=1) &$6.41 (\pm0.03)$& $4.86 ~(\pm 0.1)$ & $3.25 ~(\pm 0.06)$ & $6.17 ~(\pm 0.07)$\\
$q$NParEGO (\textit{q}=8) &$57.17 (\pm0.43)$& $48.62 ~(\pm 1.18)$ & $30.65 ~(\pm 0.92)$ & $38.66 ~(\pm 0.95)$\\
$q$NParEGO (\textit{q}=16) &$114.91 (\pm1.3)$& $122.09 ~(\pm 3.5)$ & $81.04 ~(\pm 3.44)$ & $84.27 ~(\pm 3.04)$\\
$q$NParEGO (\textit{q}=32) &$263.56 (\pm4.03)$& $275.41 ~(\pm 8.29)$ & $219.98 ~(\pm 9.75)$ & $199.6 ~(\pm 9.68)$\\
$q$ParEGO (\textit{q}=1) &$2.62 (\pm0.03)$& $3.31 ~(\pm 0.07)$ & $3.03 ~(\pm 0.06)$ & $2.83 ~(\pm 0.08)$\\
$q$ParEGO (\textit{q}=8) &$25.75 (\pm0.66)$& $33.84 ~(\pm 1.76)$ & $22.0 ~(\pm 0.88)$ & $20.28 ~(\pm 1.18)$\\
$q$ParEGO (\textit{q}=16) &$67.89 (\pm2.27)$& $77.25 ~(\pm 3.9)$ & $56.09 ~(\pm 3.09)$ & $50.06 ~(\pm 3.84)$\\
$q$ParEGO (\textit{q}=32) &$159.98 (\pm7.41)$& $217.55 ~(\pm 19.15)$ & $139.25 ~(\pm 9.41)$ & $135.39 ~(\pm 13.04)$\\

\bottomrule
\end{tabular}
}
\end{sc}
\end{small}
\caption{\label{table:walltime_cpu2} Acquisition function optimization wall time (including box decompositions) in seconds on a CPU (2x Intel Xeon E5-2680 v4 @ 2.40GHz) and a Tesla V100 SXM2 GPU (16GB RAM). The mean and two standard errors are reported. DGEMO, TSEMO, MOEA/D-EGO are omitted for AutoML because they have package requirements that are not easily compatible with our distributed training and evaluation pipeline, and they are omitted for ConstrainedBraninCurrin because they do not support constraints in the open-source implementation at \url{https://github.com/yunshengtian/DGEMO/tree/master}. DGEMO is omitted on CarSideImpact because the open-source implementation does not support more than 3 objectives.}
\end{table*}


\begin{figure*}[t]
\centering\subfloat[Sequential Optimization performance on additional benchmark problems.]{%
    \centering
    \includegraphics[width=0.82\textwidth]{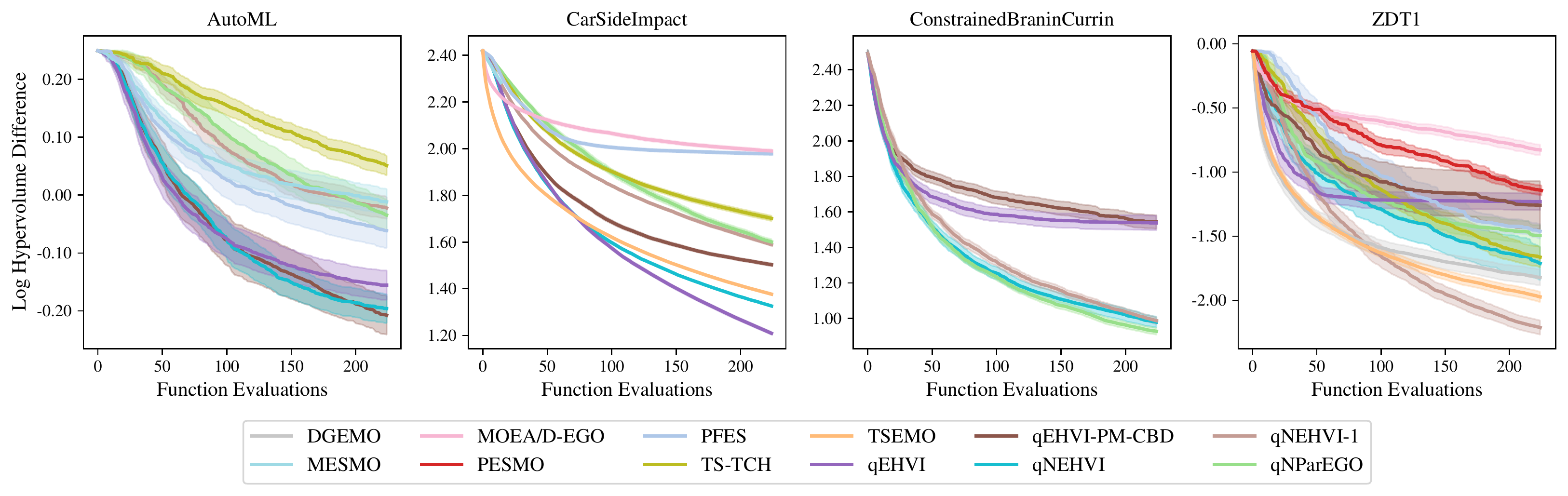}
    \vspace{-5pt}
    \vspace{-1ex}
}\\
\centering\subfloat[Parallel optimization performance vs batch iterations (1/2).]{%
    \centering
    \includegraphics[width=\textwidth]{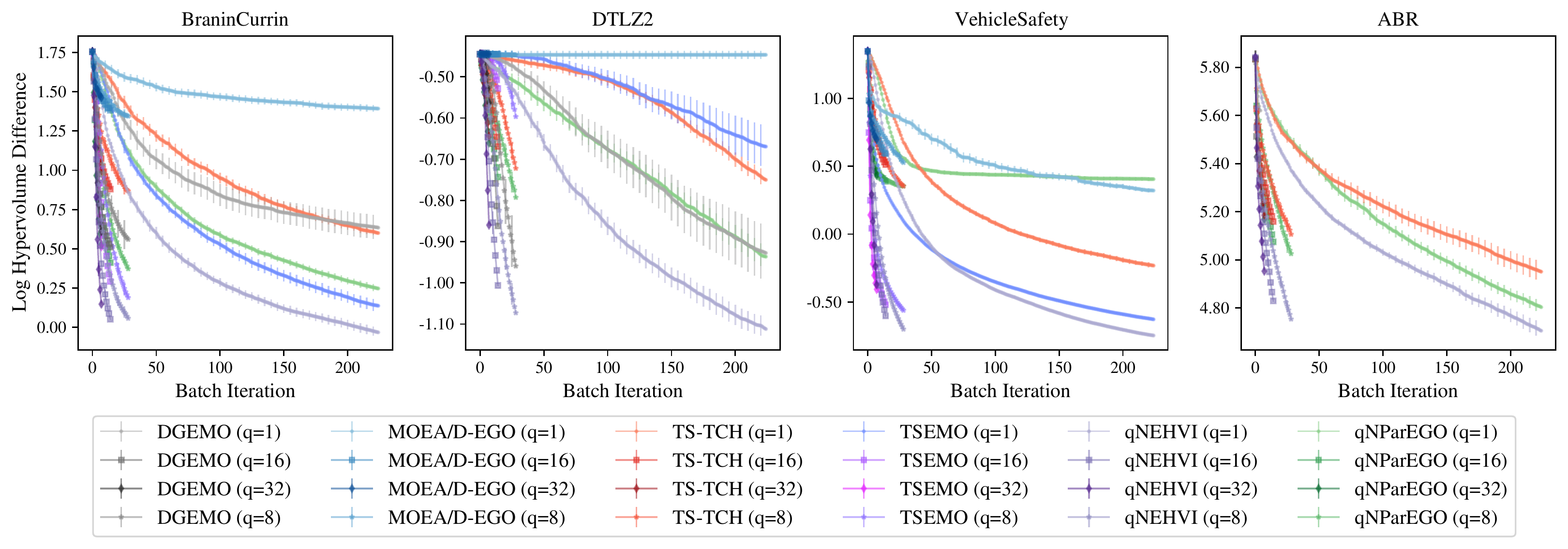}
    \vspace{-5pt}
    \vspace{-1ex}
}\\
\centering\subfloat[Parallel optimization performance vs batch iterations (2/2).]{%
    \centering
    \includegraphics[width=\textwidth]{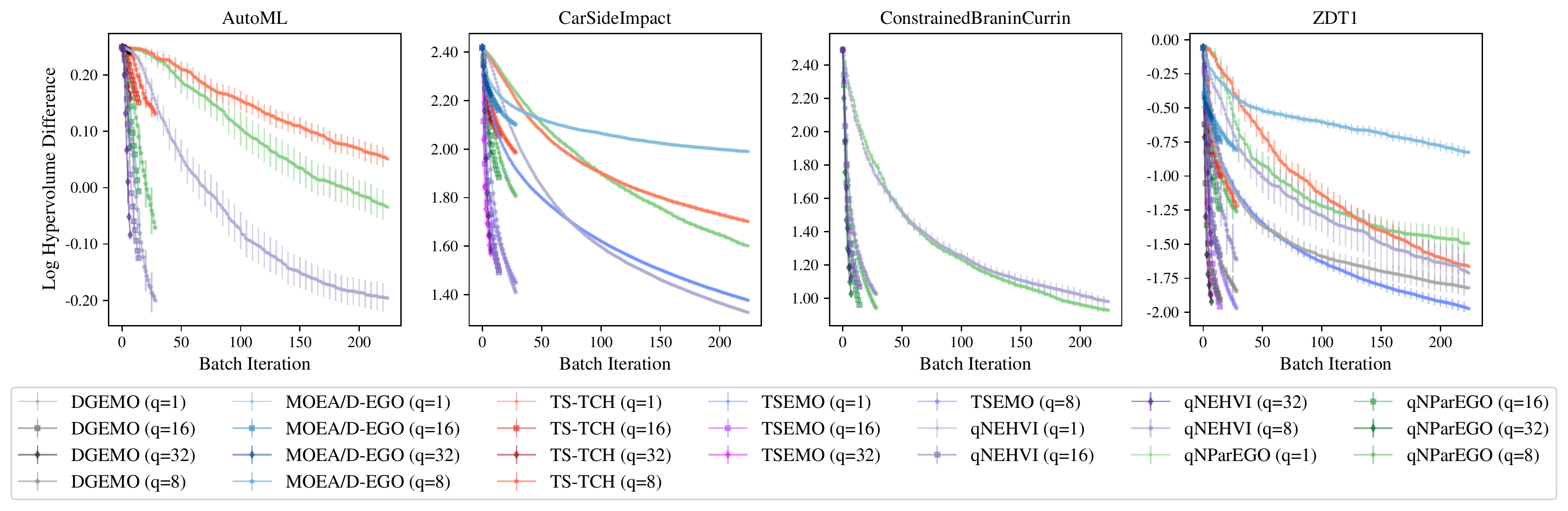}
    \vspace{-5pt}
    \vspace{-1ex}
    }

\caption{\label{fig:hv_sequential_additional_reuslts} Optimization performance on additional problems. (a) Sequential optimization performance. (b) and (c) Optimization performance of batch acquisition functions using various $q$ over the number of BO iterations. To improve readability, we omit \qEHVI{}(\textsc{-PM}) in this figure because the \iep{} cannot scale beyond $q=8$  because of the exponential time and space complexity (running it on a GPU runs out of memory and running it on a CPU results in prohibitively slow wall times). See Figure \ref{fig:batch_vs_qehvi} for results using \qEHVI{}(\textsc{-PM}).}
\end{figure*}

\begin{figure*}[t]
\centering\subfloat[]{%
    \centering
    \includegraphics[width=\textwidth]{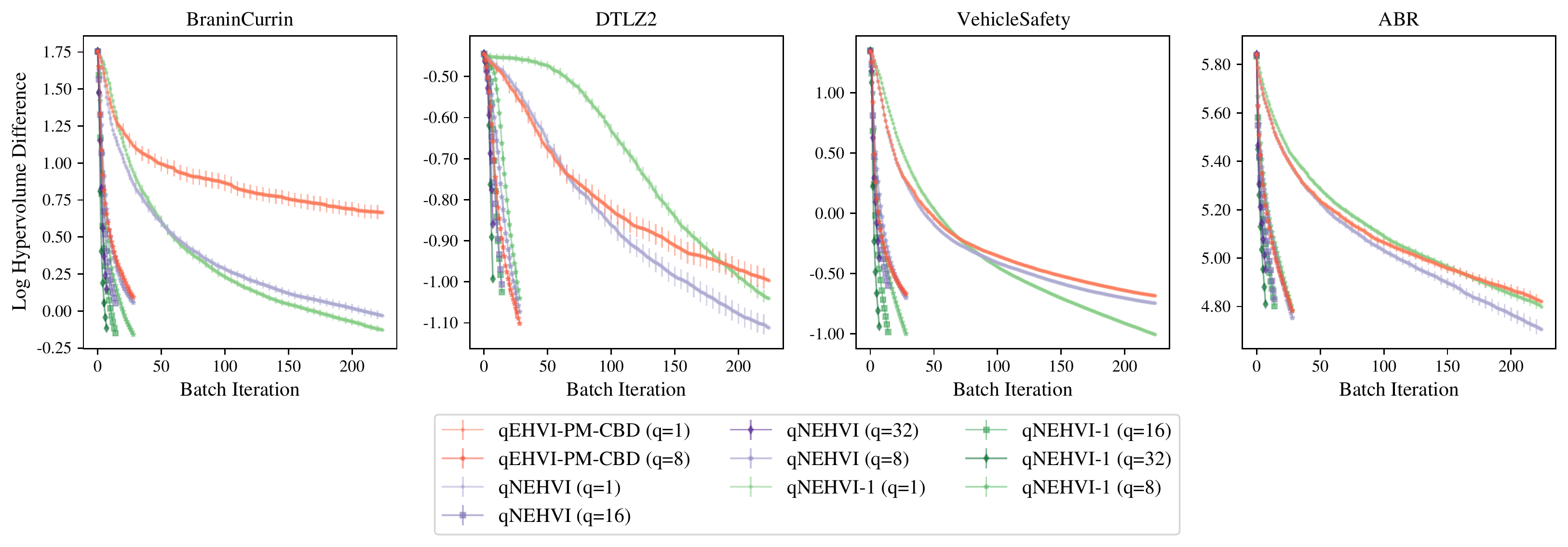}
    \vspace{-5pt}
    \vspace{-1ex}
}\\
\centering\subfloat[]{%
    \centering
    \includegraphics[width=0.91\textwidth]{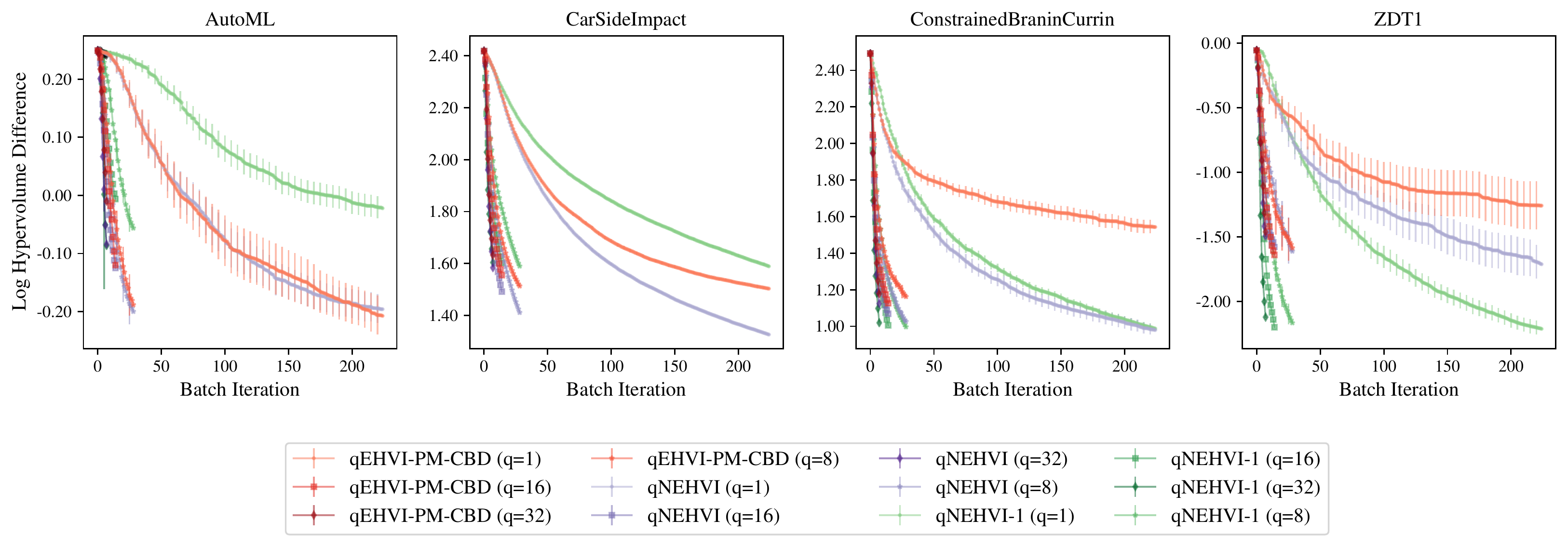}
    \vspace{-5pt}
    \vspace{-1ex}
    }
    \caption{\label{fig:batch_vs_qehvi} Optimization performance of \qNEHVI{} under various batch sizes $q$ vs \qEHVI{}(\textsc{-PM}). Note that using the  \iep{}, \qEHVI{}(\textsc{-PM}) cannot scale beyond $q=8$  because of the exponential time and space complexity (running it on a GPU runs out of memory and running it on a CPU results in prohibitively slow wall times).}
\end{figure*}

\begin{figure*}[t]
\centering\subfloat[]{%
    \centering
    \includegraphics[width=\textwidth]{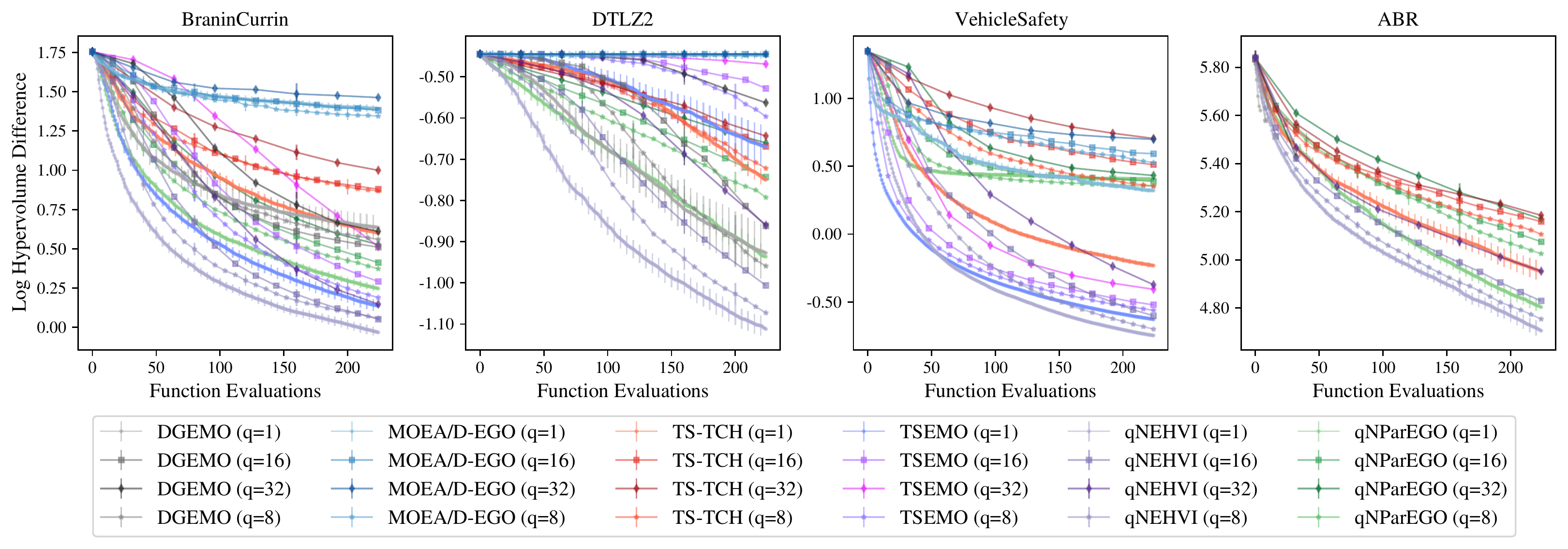}
    \vspace{-5pt}
    \vspace{-1ex}
}\\
\centering\subfloat[]{%
    \centering
    \includegraphics[width=0.91\textwidth]{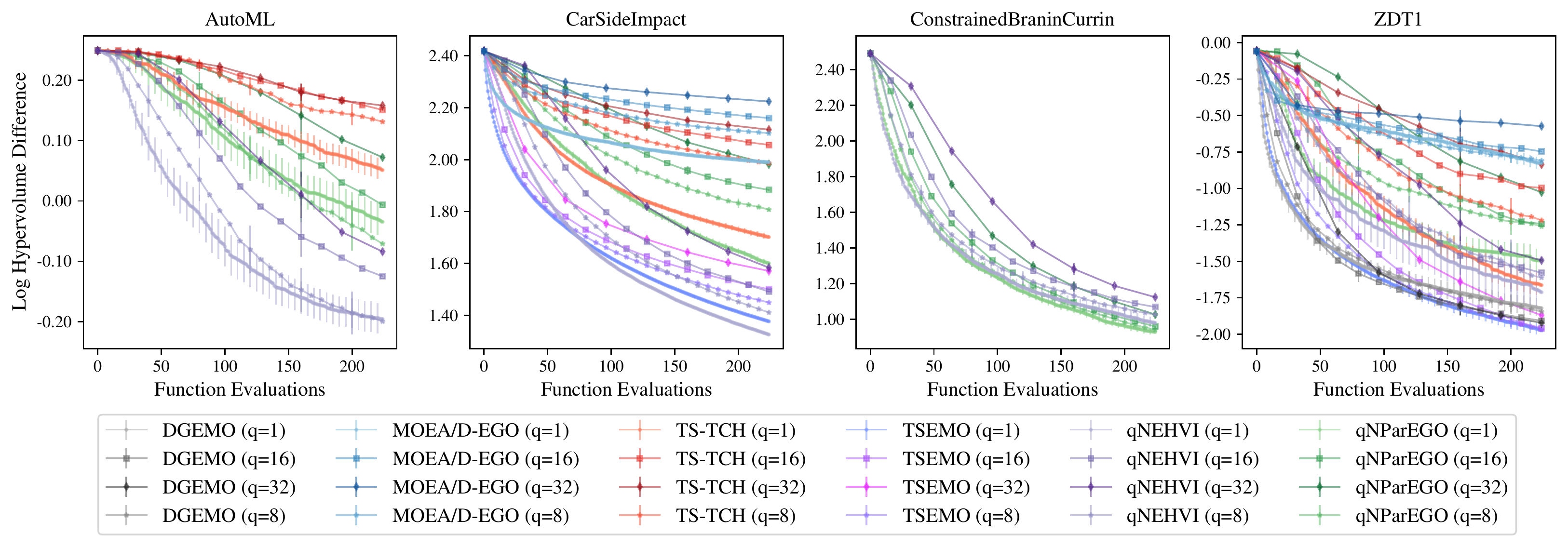}
    \vspace{-5pt}
    \vspace{-1ex}
    }
    \caption{\label{fig:anytime} Anytime optimization performance of batch acquisition functions using various $q$ over the number of function evaluations. To improve readability, we omit \qEHVI{}(\textsc{-PM}) in this figure because the \iep{} cannot scale beyond $q=8$  because of the exponential time and space complexity (running it on a GPU runs out of memory and running it on a CPU results in prohibitively slow wall times).}
\end{figure*}

\begin{figure*}[t]
\centering\subfloat[]{%
    \centering
    \includegraphics[width=\textwidth]{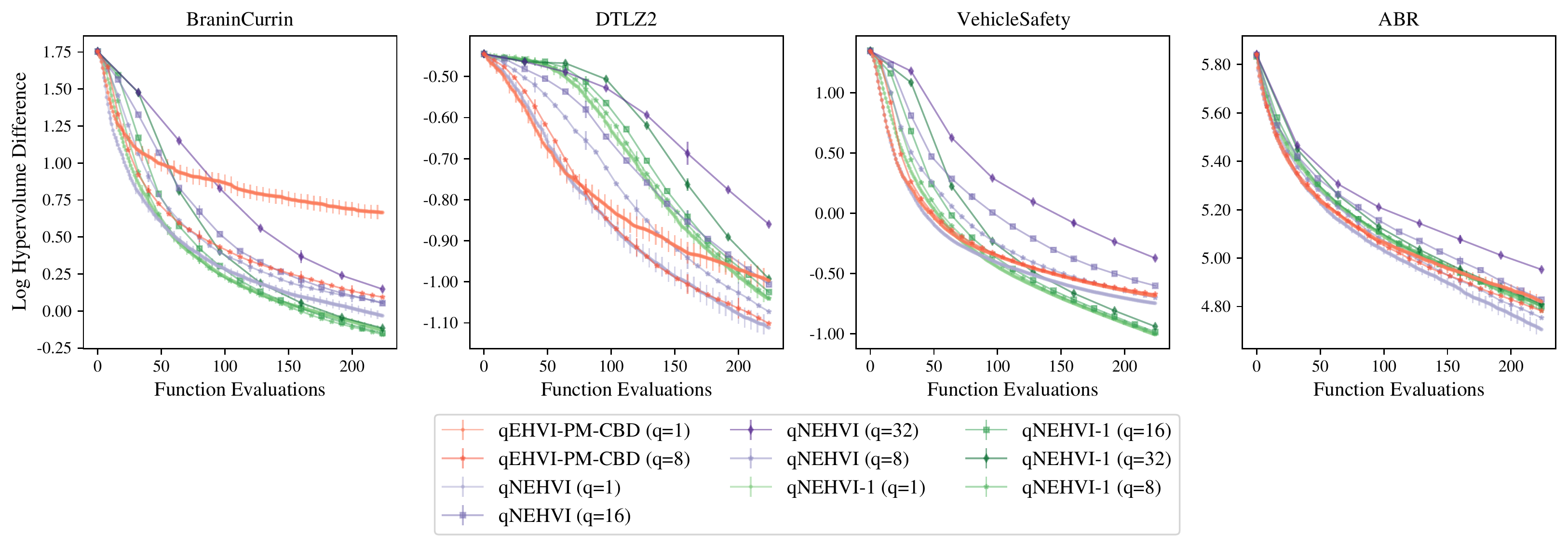}
    \vspace{-5pt}
    \vspace{-1ex}
}\\
\centering\subfloat[]{%
    \centering
    \includegraphics[width=0.91\textwidth]{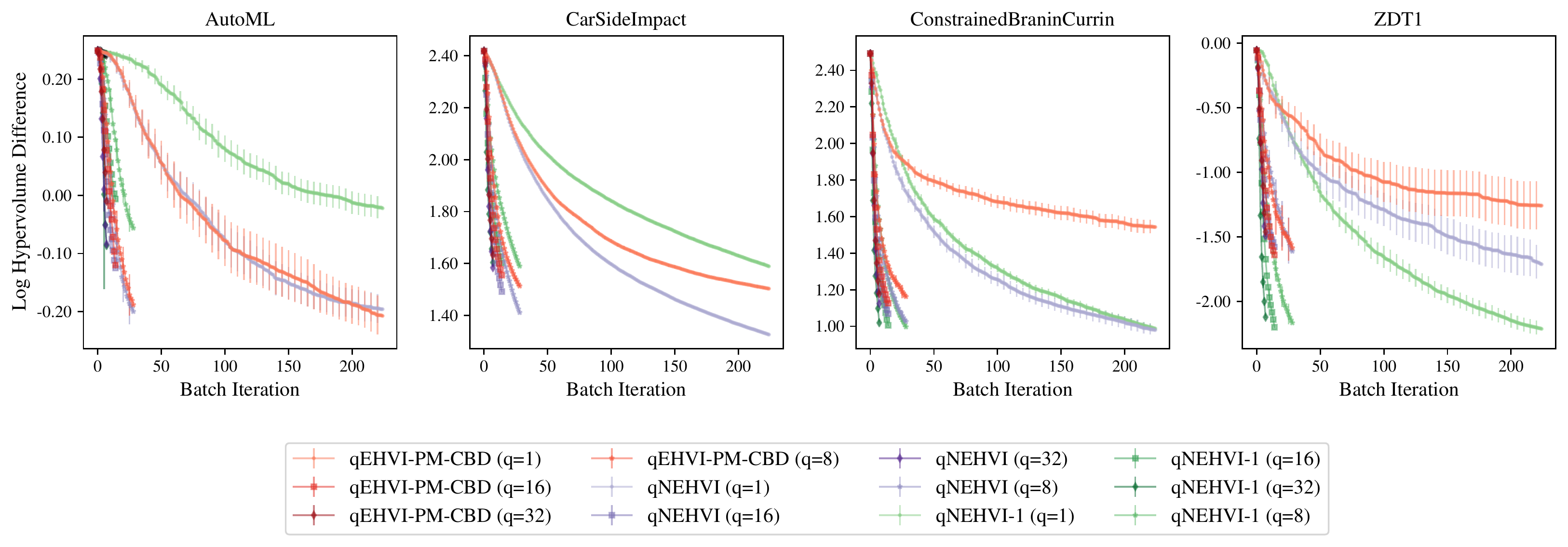}
    \vspace{-5pt}
    \vspace{-1ex}
    }
    \caption{\label{fig:anytime_ehvi}Anytime optimization performance of batch EHVI-based acquisition functions using various $q$ over the number of function evaluations.}
\end{figure*}

\begin{figure*}[h]
    \centering
    \includegraphics[width=0.83\textwidth]{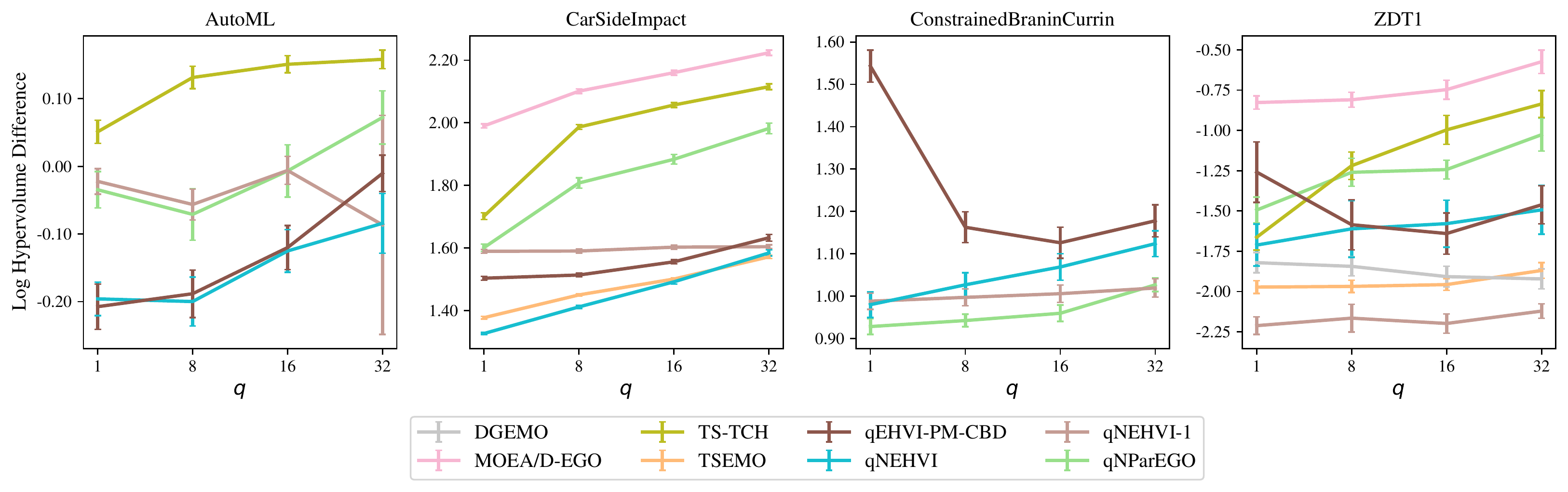}
    \caption{\label{fig:hv_over_q_additional}Final log hypervolume difference with various $q$ under a budget of 224 function evaluations. Smaller log hypervolume differences are better.}
\end{figure*}
\FloatBarrier
\subsection{Optimization Performance under Increasing Noise Levels}
Figure~\ref{fig:various_noise_levels} shows the sequential optimization performance of \qNEHVI{} and \TSHVI{} relative to \qEHVI{} and $q$NParEGO under increasing noise levels. \TSHVI{} achieves the best final hypervolume when the noise standard deviation $\sigma$ is less than $15\%$ of the range of each objective, but performs worse than \qNEHVI{} earlier in the optimization. \qNEHVI{} is the top performer in high-noise environments. We observe that all methods degrade as the noise level increases, however \qNEHVI{} consistently exhibits excellent performance relative to other methods and only \TSHVI{} is competitive and only in the low-noise regime.
\begin{figure*}[h]
    \centering
    \includegraphics[width=\textwidth]{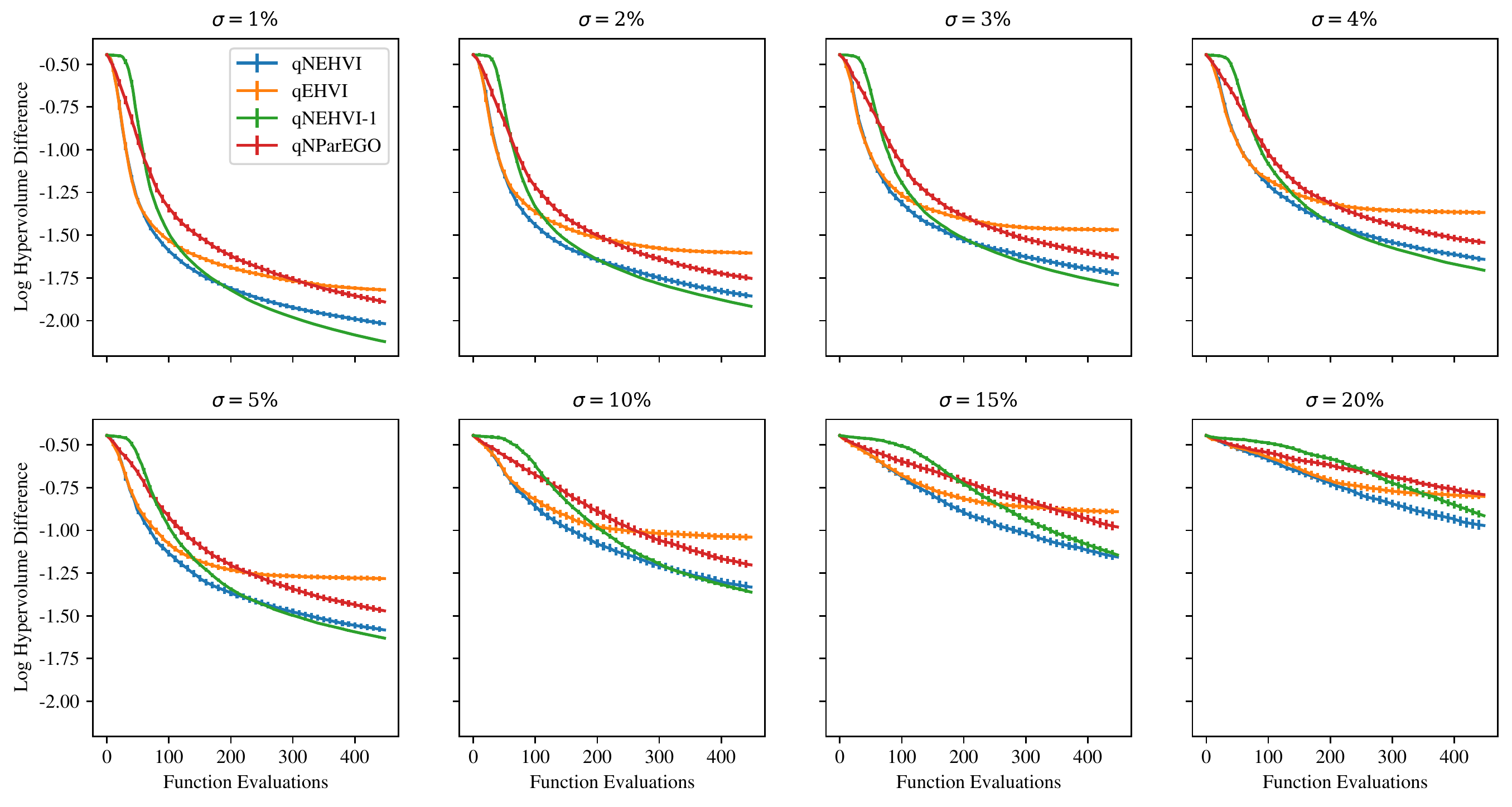}
    \caption{\label{fig:various_noise_levels}Sequential optimization performance under increasing noise levels on a DTLZ2 problem ($d=6$, $M=2$). $\sigma$ is the noise standard deviation, which we define as a percentage of the range of each objective over the entire search space. A noise level of 20\% is very high; for comparison, previous work on noisy MOBO has only considered noise levels of 1\% \citep{pesmo}.}
\end{figure*}

\subsection{Optimization Performance on Noiseless Benchmarks}
We include a comparison of optimization performance on \emph{noiseless benchmarks}. Figure \ref{fig:noiseless} shows that \qNEHVI{} performs competitively with \qEHVI{}(-PM-CBD) and outperforms DGEMO, TS-TCH and $q$NParEGO across all benchmark problems. \TSHVI{} is also a top performer on noiseless problems and both \qNEHVI{} and \TSHVI{} show little degradation in performance with increasing levels of parallelism.
\begin{figure*}[t]
\centering\subfloat[]{%
    \centering
    \includegraphics[width=\textwidth]{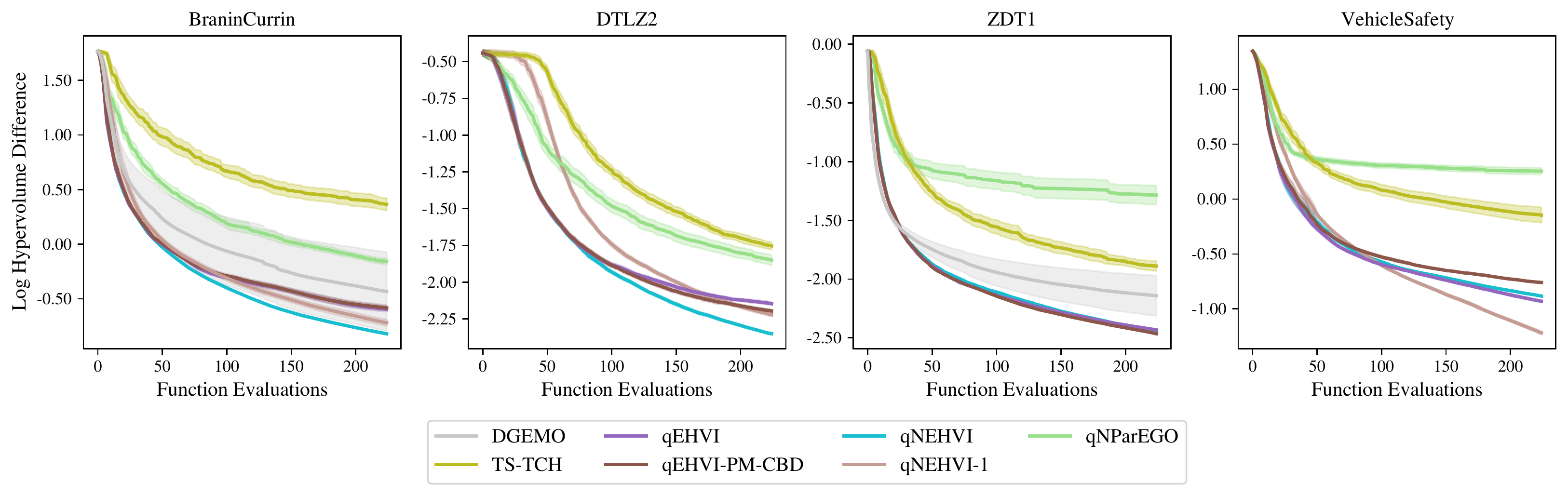}
    \vspace{-5pt}
    \vspace{-1ex}
}\\
\centering\subfloat[]{%
    \centering
    \includegraphics[width=0.91\textwidth]{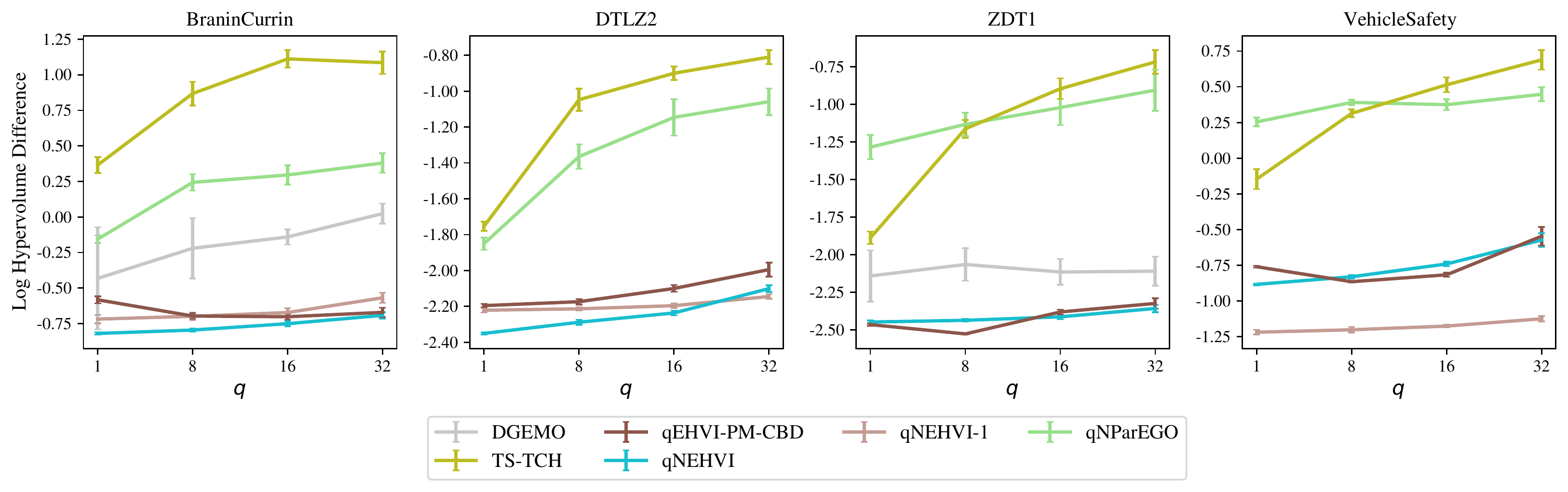}
    \vspace{-5pt}
    \vspace{-1ex}
    }
    \caption{\label{fig:noiseless}Sequential (a) and parallel (b) optimization Performance on \emph{noiseless} benchmarks.}
\end{figure*}
\FloatBarrier
\subsection{Performance of \TSHVI{} on 5-Objective Optimization}
\label{appdx:5obj}
We demonstrate that \TSHVI{} enables scaling to 5-objective problems. To our knowledge, no previous methods leveraging \EHVI{} or \HVI{} (e.g. DGEMO, TSEMO) considers 5-objective problems because of the super-polynomial complexity of the hypervolume indicator. Nevertheless, we show that using \cbd{} and a single sample path approximation, \TSHVI{} can be used for 5-objective optimization. As shown in Figure~\ref{fig:5obj}, \TSHVI{} outperforms $q$NParEGO and Sobol search. \TSHVI{} takes on average 73.53 seconds (with an SEM of 1.74 seconds) to generate each candidate, whereas
$q$NParEGO takes 11.37 seconds (with an SEM of 0.97 seconds).
\begin{figure}[h!]
\centering
\caption{\label{fig:5obj} Optimization performance on a 5-objective DTLZ2 problem ($d=6$) with $\sigma=5\%$.}
\includegraphics[width=0.32\textwidth]{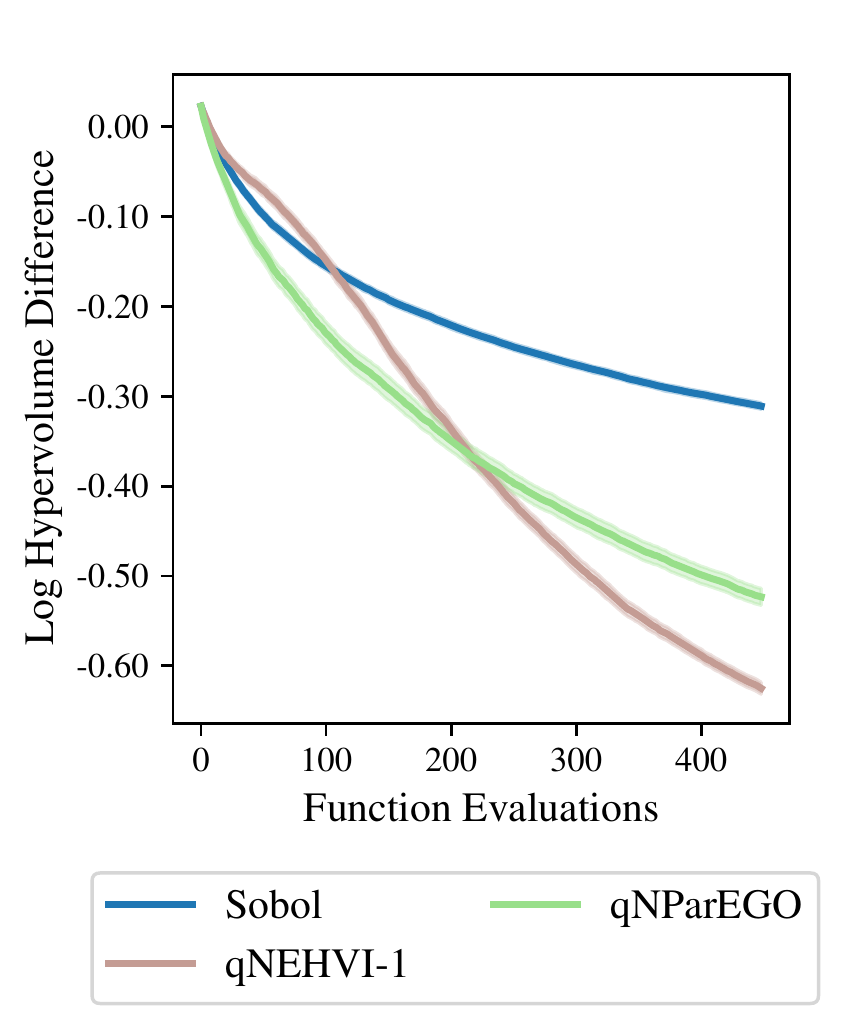}
\end{figure}

\FloatBarrier
\subsection{Performance compared against a Multi-Objective CMA-ES}
CMA-ES is an evolutionary strategy that is a strong method in single objective optimization, and many works have proposed extensions of CMA-ES to the multi-objective setting \citep{mocmaes, como_cma_es}. We compare \qNEHVI{} against the COMO-CMA-ES algorithm, which has been shown to outperform MO-CMA-ES on a variety of problems \citep{como_cma_es}.\footnote{COMO-CMA-ES is also the only multi-objective CMA-ES that we could find with an open-source Python implementation. We use the implementation available at \url{https://github.com/CMA-ES/pycomocma} under the BSD 3-clause license.}. We evaluate performance on the SphereEllipsoidal function from Bi-objective Black-Box Optimization Benchmarking Test Suite \citep{brockhoff2019using}, and we add zero-mean Gaussian noise to each objective with $\sigma=5\%$ of the range of each objective. We run COMO-CMA-ES with 5 kernels, the same initial quasi-random design as the BO methods, a population size of 10, and an initial step size of 0.2. As shown in Figure~\ref{fig:sphere_ellipsoidal}, the BO methods vastly outperform COMO-CMA-ES. \qNEHVI{} and $q$NParEGO perform best and are closely followed by \TSHVI{}.
\begin{figure}[h!]
\centering
\caption{\label{fig:sphere_ellipsoidal} Optimization performance on a 2-objective Sphere-Ellipsoidal problem ($d=5$) with $\sigma=5\%$.}
\includegraphics[width=0.48\textwidth]{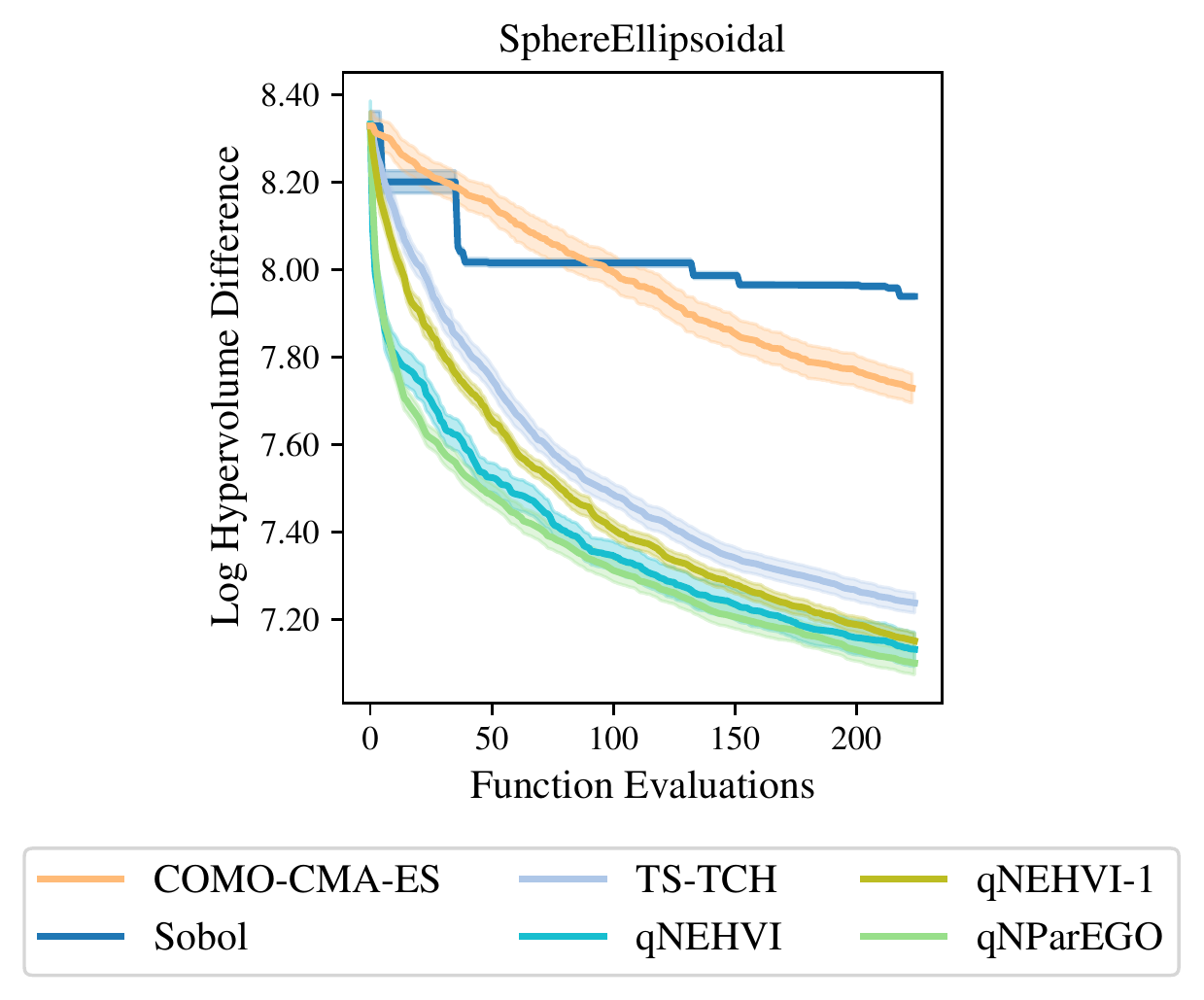}
\end{figure}

\FloatBarrier
\subsection{Importance of Accounting for Noise in DGEMO}
\label{appdx:sec:dgemo_augmented}
\begin{figure}[h!]
\centering
\caption{\label{fig:dgemo_noise_study} An illustration of the effect of noisy observations on the true noiseless Pareto frontiers identified by DGEMO (right) DGEMO-PM-NEHVI (left, see Appendix \ref{appdx:sec:dgemo_augmented}). Both algorithms are tested on a BraninCurrin synthetic problem, where observations are corrupted with zero-mean, additive Gaussian noise with a standard deviation of 5\% of the range of respective objective. All methods use sequential ($q=1$) optimization.}
\includegraphics[width=\textwidth]{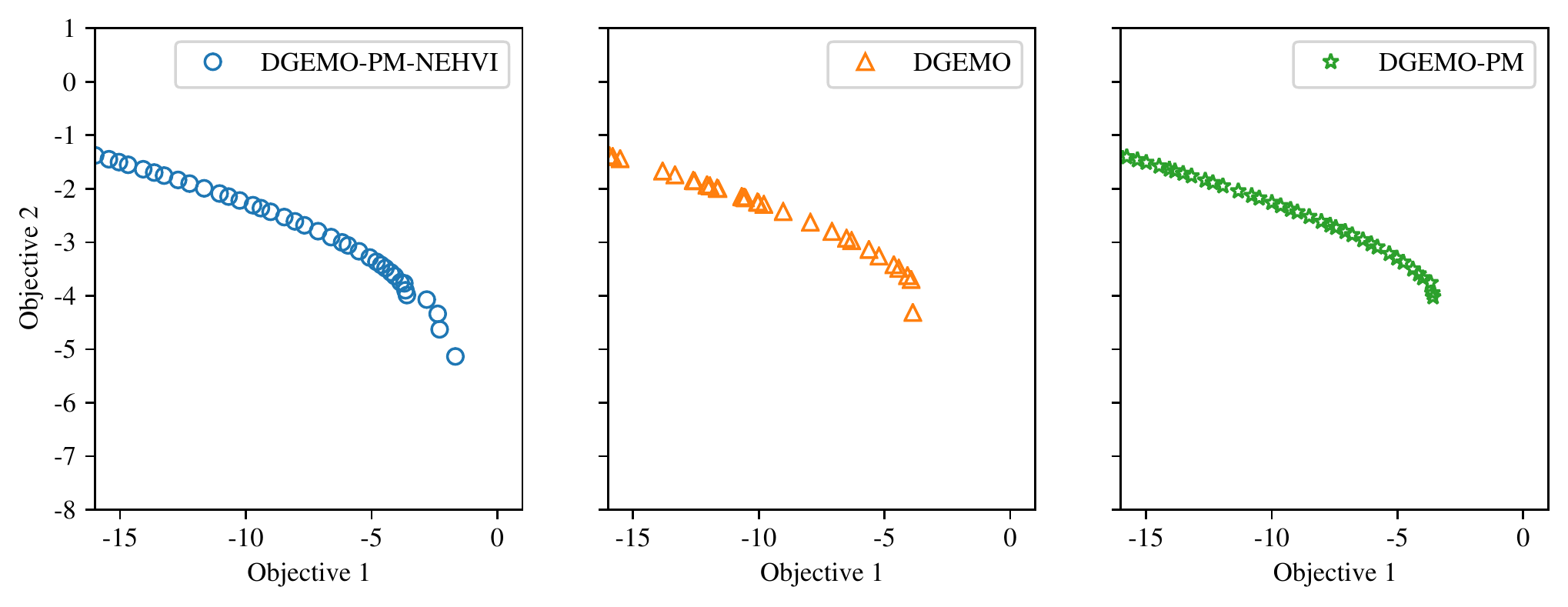}
\end{figure}
Similar to EHVI, DGEMO relies on the observed (noisy) Pareto frontier for batch selection. The right plot in Figure \ref{fig:dgemo_noise_study} shows that DGEMO exhibits the same clumping behavior in objective space in the noisy setting as \EHVI{}. While DGEMO's diversity constraints (with respect to the input parameters) make it slightly more robust to noise, the solutions are clustered and the bottom right corner of the Pareto frontier is not identified. In an attempt to mitigate these issues, we propose an augmented version of DGEMO, which we call DGEMO-PM-NEHVI, as follows: (i) we use the posterior mean at the previously evaluated points to estimate the in-sample Pareto frontier, which we hope will improve robustness to noise when selecting a discrete set of potential candidates using the DGEMO's first-order approximation of the Pareto frontier, and (ii) we use \qNEHVI{} rather than \HVI{} under the posterior mean as the batch selection criterion over the discrete set, subject to DGEMO's diversity constraints. We find that using \qNEHVI{} to  integrate over the uncertainty in the Pareto frontier over the previously evaluated points, results in identifying higher quality Pareto frontiers as shown in Figure \ref{fig:dgemo_noise_study}. We also include DGEMO-PM, which uses (i) but not (ii) for completeness. Not only does DGEMO-PM-NEHVI identify much more diverse solutions that provide better coverage across the Pareto frontier, but DGEMO-PM-NEHVI also identifies much better solutions on the lower right portion of the Pareto frontier than DGEMO and DGEMO-PM.  DGEMO-PM performs much better than DGEMO and is competitively with DGEMO-PM-NEHVI, which we speculate is because DGEMO uses a first-order approximation of the Pareto frontier (using the observed values or using the posterior mean for -PM variants) to generate a discrete set of candidates. Using the posterior mean in this step is important for regularizing against extreme observed values due to noise. \qNEHVI{} is only used as a filtering criterion for batch selection over that discrete set of candidates, subject to DGEMO's diversity constraints. Hence, \qNEHVI{} has limited control over the batch selection procedure.

DGEMO's first-order approximation fundamentally does not account for uncertainty in the Pareto frontier over previously evaluated points. Although one could integrate over the uncertainty in the in-sample Pareto frontier by generating a first-order approximation of the Pareto frontier under different sample paths, the graph cut algorithm would yield different families under each sample path. It is unclear how to set the diversity constraints in that setting. We leave this for future work.

\FloatBarrier
\section{Noisy Outcome Constraints}
While the focus of this work is on developing a scalable parallel hypervolume-based acquisition function for noisy settings, our MC-based approach naturally lends itself to support for constraints.
\subsection{Derivation of Constrained \NEHVI{}}
\label{appdx:sec:DqNEHVI:OutcomeConstraints}
The \NEHVI{} formulation in \eqref{eqn:ideal_nehvi} can be extended to handle noisy observations of outcome constraints. We consider the scenario where we receive noisy observations of $M$ objectives $\bm f(\bm x) \in \mathbb{R}^M$ and $V$ constraints $\bm c^{(v)} \in \mathbb{R}^V$, all of which are assumed to be ``black-box'': $\mathcal D_n = \{\bm x_i, \bm y_i, \bm b_i\}_{i=1}^n$ where $
\begin{bmatrix}
\bm y_i \\ 
\bm b_i
\end{bmatrix} \sim \mathcal N \bigg(\begin{bmatrix}
\bm f(\bm x_i) \\ 
\bm c(\bm x_i)
\end{bmatrix}, \Sigma_i\bigg), ~ \Sigma_i\in \mathbb R^{(M+V) \times (M+V)}$. We assume, without loss of generality, that $\bm c^{(v)}$ is feasible iff $\bm c^{(v)} \geq 0$. In the constrained optimization setting, we aim to identify the a finite approximate feasible Pareto set $$\mathcal P_\text{feas} = \{\bm f(\bm x) ~|~\bm x \in X_n, \bm c(\bm x) \geq \bm 0, ~\nexists ~~\bm x' :  \bm c(\bm x') \geq \bm 0 ~s.t. ~\bm f(\bm x') \succ \bm f(\bm x)\}$$ of the true feasible Pareto set
$$\mathcal P_\text{feas}^* = \{\bm f(\bm x)~~s.t.~~ \bm c(\bm x) \geq \bm 0, ~\nexists ~~\bm x' :  \bm c(\bm x') \geq \bm 0 ~s.t. ~\bm f(\bm x') \succ \bm f(\bm x)\}.$$
The natural improvement measure in the constrained setting is \emph{feasible} HVI, which we define for a single candidate point $\bm x$ as 
$$\HVIc{}(\bm f(\bm x), \bm c(\bm x) | \mathcal P_\text{feas}) := \HVI{}[\bm f(\bm x) | \mathcal P_\text{feas}] \cdot \mathbbm{1}[\bm c(\bm x) \geq \bm 0].$$
Taking the expectation over $\HVIc{}$ gives the constrained expected hypervolume improvement:
\begin{equation}
    \label{eqn:constrained_ideal_ehvi}
    \aEHVIc{}(\bm x) = \int \HVIc{}(\bm f(\bm x), \bm c(\bm x)| \mathcal P_{\text{feas}})p(\bm f,\bm c|\mathcal D)d \bm f d\bm c 
\end{equation}
For brevity, we define $\mathcal C_n = \bm c(X_n), \mathcal F_n = \bm f(X_n)$. The \emph{noisy expected hypervolume improvement} is then defined as:
\begin{equation}
    \label{eqn:constrained_ideal_nehvi}
    \aNEHVIc{}(\bm x) = \int \aEHVIc{}(\bm x| \mathcal P_{\text{feas}})p(\mathcal F_n, C_n|\mathcal D_n) d\mathcal F_n d\mathcal C_n.
\end{equation}

Performing feasibility-weighting on the sample-level allows us to include such auxiliary outcome constraints into the full Monte Carlo formulation given in \eqref{eqn:mc_nehvi} in a straightforward way:

\begin{equation*}
   \hataNEHVI{}_c(\bm x)
= \frac{1}{N}\sum_{t=1}^{N} \sum_{k=1}^{K_t}\Bigg[\prod_{m=1}^M \big[z_{k,t}^{(m)} - l_{k,t}^{(m)}\big]_{+} \prod_{v=1}^V\mathbbm{1}[c_t^{(v)}(\bm x) \geq 0]\Bigg]
\end{equation*}
where $z_{k,t}^{(m)} := \min \big[u_{k,t}^{(m)},\tilde{ f}_t^{(m)}(\bm x)\big]$ and $l_{k,t}^{(m)}, u_{k,t}^{(m)}$ are the $m^\text{th}$ dimension of the lower and upper vertices of the rectangle $S_{k,t}$ in the non-dominated partitioning $\{S_{1, t}, ..., S_{K_t, t}\}$ under the feasible sampled Pareto frontier $$\mathcal P_{\text{feas},t} = \mathcal P_{\text{feas}} = \{\tilde{\bm f}_t(\bm x) ~|~\bm x \in X_n, \tilde{\bm c}_t(\bm x) \geq \bm 0, ~\nexists ~~\bm x' :  \tilde{\bm c}_t(\bm x') \geq \bm 0 ~s.t. ~\tilde{\bm f}_t(\bm x') \succ \tilde{\bm f}_t(\bm x)\}.$$
In this formulation, the $\prod_{v=1}^V\mathbbm{1}[c_t^{(v)}(\bm x) \geq 0]$ indicates feasibility of the $t$-th sample.

To permit gradient-based optimization via exact sample-path gradients, we replace the indicator function (which is non-differentiable) with a differentiable sigmoid approximation with a  temperature parameter $\tau$, which becomes exact as $\tau\rightarrow \infty$:
\begin{align}
    \mathbbm{1}[c^{(v)}(\bm x) \geq 0] \approx s(c^{(v)}(\bm x); \tau) := \frac{1}{1 + \exp(-c^{(v)}(\bm x) /\tau)}
\end{align}
Hence,
\begin{equation*}
   \hataNEHVI{}_c(\bm x)
\approx \frac{1}{N}\sum_{t=1}^{N} \sum_{k=1}^{K_t}\Bigg[\prod_{m=1}^M \big[z_{k,t}^{(m)} - l_{k,t}^{(m)}\big]_{+} \prod_{v=1}^Vs(c_t^{(v)}(\bm x), \tau)\Bigg]
\end{equation*}
\subsection{Derivation of Parallel, Constrained \NEHVI{}}
The constrained \NEHVI{} can be extended to the parallel setting in a straightforward fashion. The joint constrained hypervolume improvement of a set of points $\{\bm x_i\}_{i=1}^q$ is given by
\begin{equation*}
    \HVIc{}(\{\bm f(\bm x_i), \bm c(\bm x_i)\}_{i=1}^q) =
    \sum_{k=1}^{K}\sum_{j=1}^q\sum_{X_j \in \mathcal X_j}(-1)^{j+1} \Bigg[\bigg(\prod_{m=1}^M\big[\bm z_{k, X_j}^{(m)} - l_k^{(m)}\big]_{+}\bigg)\prod_{\bm x' \in X_j} \prod_{v=1}^V\mathbbm{1}[c^{(v)}(\bm x') \geq 0]\Bigg].
\end{equation*}
and the constrained \qEHVI{} is \citep{daulton2020ehvi}:
$$\aqEHVIc{}(\xcand| \mathcal P_{\text{feas}}) = \int\HVIc(\bm f(\xcand),\bm c(\xcand) | P_\text{feas})p(\bm f, \bm c|\mathcal D_n)d\bm f d\bm c$$
Hence, the constrained \qNEHVI{} is given by:
\begin{equation}
\label{eqn:ideal_qnehvic}
\begin{aligned}
\aqNEHVI{}_c(\xcand) &= \int \aqEHVIc{}(\xcand| \mathcal P_{\text{feas}})p(\mathcal F_n, C_n|\mathcal D_n) d\mathcal F_n d\mathcal C_n\\
&= \int \HVIc(\bm f(\xcand),\bm c(\xcand) | P_\text{feas})p(\mathcal F_n, C_n|\mathcal D_n) d\mathcal F_n d\mathcal C_n
\end{aligned}
\end{equation}
Using MC integration for the integral in \eqref{eqn:ideal_qnehvic}, we have
\begin{align}
\hataqNEHVI{}_c(\xcand) &= \frac{1}{N}\sum_{t=1}^{N} \HVI_c(\tilde{\bm f}_t( \xcand), \tilde{\bm c}_t( \xcand) | \mathcal P_{\text{feas}, t}).
\end{align}
Under the \cbd{} formulation, the constrained \qNEHVI{} is given by
\begin{equation}
\begin{aligned}
    \hat{\alpha}_{q\textsc{NEHVI}}(\{\bm x_1, ...,\bm x_i\}) &=
    \frac{1}{N}\sum_{t=1}^N \HVIc\big(\{\tilde{\bm f}_t(\bm x_j), \tilde{\bm c}_t(\bm x_j)\}_{j=1}^{i-1}\}\mid \mathcal P_{\text{feas}, t}\big) 
    \\& \qquad + \frac{1}{N}\sum_{t=1}^N\HVIc\bigl(\tilde{\bm f}_t(\bm x_i), \tilde{\bm c}_t(\bm x_i) \mid \mathcal P_{\text{feas}, t} \cup \{\tilde{\bm f}_t(\bm x_j), \tilde{\bm c}_t(\bm x_j)\}_{j=1}^{i-1}\}\bigr).
\end{aligned}
\end{equation}

As in \eqref{eqn:cbd_qnehvi}, the first term is a constant when generating candidate $i$ and the second term is the \NEHVI{} of $\bm x_i$.

\FloatBarrier
\section{Evaluating Methods on Noisy Benchmarks}
Given noisy observations, we can no longer compute the true Pareto frontier over the in-sample points $X_n$. Moreover, the subset of Pareto optimal designs $\mathcal X^*_n = \{\bm x \in X_n, ~\nexists~ \bm x' \in X_n  ~s.t.~ \bm f(\bm x') \succ \bm f(\bm x)\}$ from the previously evaluated points may not be identified due to noise. For the previous results reported in this paper, we evaluate each method according to hypervolume dominated by the true unknown Pareto frontier of the noiseless objectives over the \emph{in-sample} points. In practice, decision-makers would often select one of the in-sample points according to their preferences. If the decision maker only has noisy observations, selecting an in-sample point may be preferable to evaluating a new out-of-sample point according to the model's beliefs. An alternative evaluation method would be use the model's posterior mean to identify what it believes is the Pareto optimal set of in-sample designs. The hypervolume dominated by the true Pareto frontier of noiseless objectives over that set of selected designs could be computed and used for comparing the performance of different methods. Results using this procedure are shown in Figure \ref{fig:hv_sequential_in_sample}. The quality of the Pareto set depends on the model fit. Several methods have worse performance over time (e.g. \qEHVI{} and \qParego{} on the ZDT1 problem), likely due to the collection of outlier observations that degrade the model fit. Nevertheless, \qNEHVI{} consistently has the strongest performance.

An alternative to the \emph{in-sample} evaluation techniques described above would be to use the model to identify the Pareto frontier across the entire search space (in-sample or \emph{out-of-sample}). For example, \citet{pesmo} used NSGA-II to optimize the model's posterior mean and identify the model estimated Pareto frontier. For benchmarking purposes on expensive-to-evaluate functions (e.g. in AutoML or ABR), this is prohibitively expensive. Moreover, such a method is less appealing in practice because a decision-maker would have to select out-of-sample points according to the posterior mean and then evaluate a set of preferred designs on the noisy objective to verify that the model predictions are fairly accurate at those out-of-sample designs. Therefore, in this work we evaluate methods based on the in-sample designs.

\begin{figure*}[t]
\centering\subfloat[]{%
    \centering
    \includegraphics[width=\textwidth]{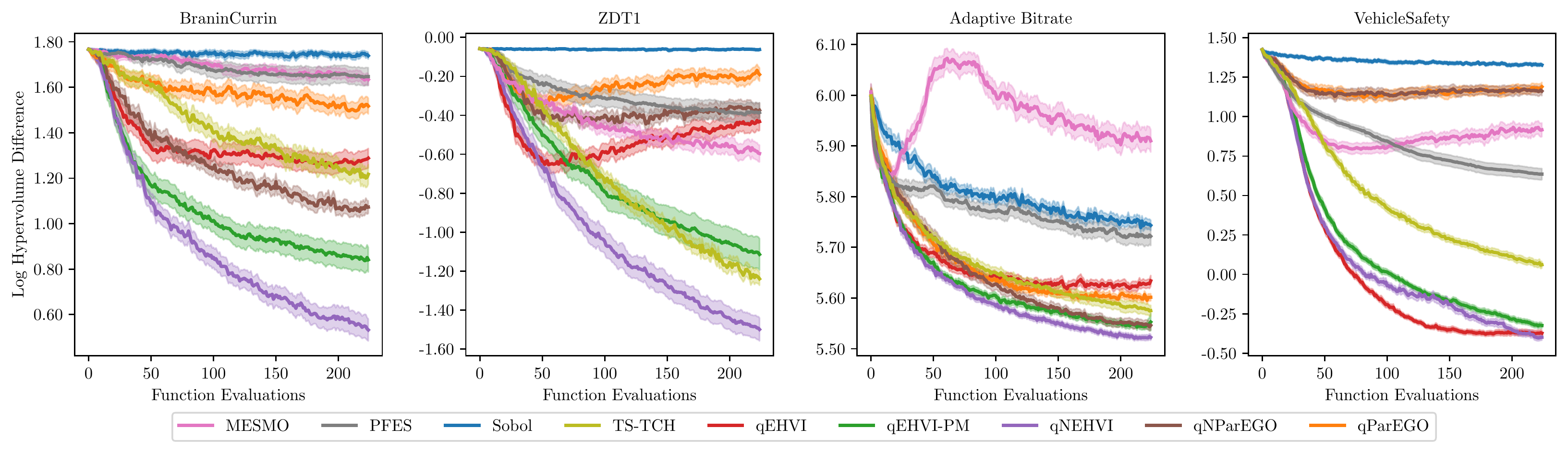}
    \vspace{-5pt}
    \vspace{-1ex}
}\\
\centering\subfloat[]{%
    \centering
    \includegraphics[width=0.82\textwidth]{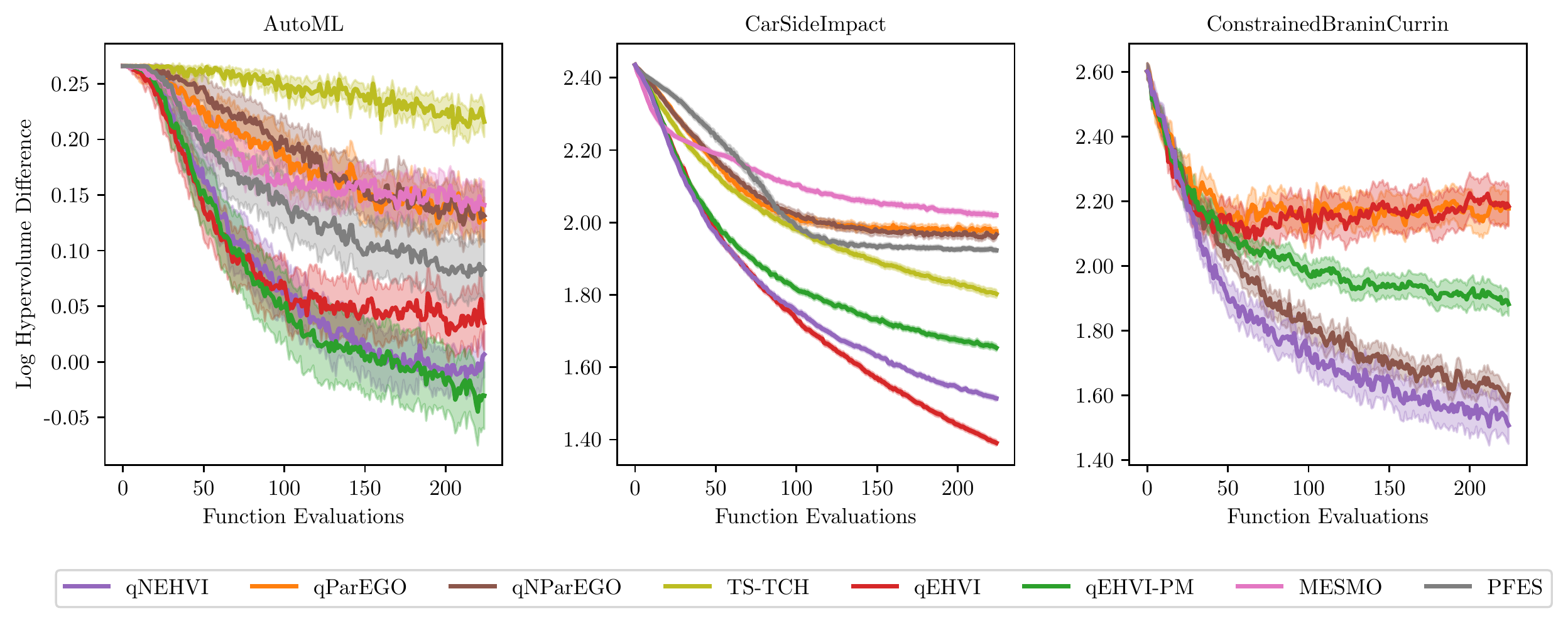}
    \vspace{-5pt}
    \vspace{-1ex}
    }
    \caption{\label{fig:hv_sequential_in_sample}Sequential optimization performance using based on the model-identified Pareto set across in-sample points.}
\end{figure*}


\end{document}